\renewcommand*{\backrefalt}[4]{%
    \ifcase #1 \footnotesize{(Not cited.)}%
    \or        \footnotesize{(Cited on page~#2.)}%
    \else      \footnotesize{(Cited on pages~#2.)}%
    \fi}
\newtheorem{theorem}{Theorem}[section]
\newtheorem{lemma}[theorem]{Lemma}
\newtheorem{proposition}[theorem]{Proposition}
\newtheorem{definition}[theorem]{Definition}
\newtheorem{example}{Example}[section]
\newtheorem{remark}[theorem]{Remark}
\numberwithin{equation}{section}
\newcommand{\BB}{\mathbb{B}}
\newcommand{\EE}{\mathbb{E}}
\newcommand{\vol}{\textnormal{vol}}
\newcommand{\poly}{\textnormal{poly}}
\newcommand{\z}{\mathbf z}
\newcommand{\argmin}{\mathop{\rm argmin}}
\newcommand{\GCal}{\mathcal{G}}
\newcommand{\SCal}{\mathcal{S}}
\newcommand{\XCal}{\mathcal{X}}
\newcommand{\ICal}{\mathcal{I}}
\newcommand{\dom}{\textnormal{dom}}
\newcommand{\br}{\mathbb{R}}
\newcommand{\bs}{\mathbb{S}}
\newcommand{\ba}{\begin{array}}
\newcommand{\ea}{\end{array}}
\newcommand{\PCal}{\mathcal{P}}
\newcommand{\NCal}{\mathcal{N}}
\newcommand{\one}{\textbf{1}}
\newcommand{\mydefn}{:=}
\newcommand{\red}{\color{red}}
\begin{document}


\begin{center}

{\bf{\LARGE{Doubly Optimal No-Regret Online Learning \\ [.2cm] in Strongly Monotone Games with Bandit Feedback}}}

\vspace*{.2in}
{\large{ \begin{tabular}{c}
Wenjia Ba$^\diamond$ \and Tianyi Lin$^\dagger$ \and Jiawei Zhang$^\ddagger$ \and Zhengyuan Zhou$^\ddagger$
\end{tabular}
}}

\vspace*{.2in}

\begin{tabular}{c}
Sauder School of Business, University of British Columbia$^\diamond$ \\
Department of Industrial Engineering and Operations Research (IEOR), Columbia University$^\dagger$ \\
Stern School of Business, New York University$^\ddagger$
\end{tabular}

\vspace*{.2in}

\today

\vspace*{.2in}

\begin{abstract}
We consider online no-regret learning in unknown games with bandit feedback, where each player can only observe its reward at each time -- determined by all players' current joint action -- rather than its gradient. We focus on the class of \textit{smooth and strongly monotone} games and study optimal no-regret learning therein. Leveraging self-concordant barrier functions, we first construct a new bandit learning algorithm and show that it achieves the single-agent optimal regret of $\tilde{\Theta}(n\sqrt{T})$ under smooth and strongly concave reward functions ($n \geq 1$ is the problem dimension). We then show that if each player applies this no-regret learning algorithm in strongly monotone games, the joint action converges in the \textit{last iterate} to the unique Nash equilibrium at a rate of $\tilde{\Theta}(nT^{-1/2})$. Prior to our work, the best-known convergence rate in the same class of games is $\tilde{O}(n^{2/3}T^{-1/3})$ (achieved by a different algorithm), thus leaving open the problem of optimal no-regret learning algorithms (since the known lower bound is $\Omega(nT^{-1/2})$). Our results thus settle this open problem and contribute to the broad landscape of bandit game-theoretical learning by identifying the first doubly optimal bandit learning algorithm, in that it achieves (up to log factors) both optimal regret in the single-agent learning and optimal last-iterate convergence rate in the multi-agent learning. We also present preliminary numerical results on several application problems to demonstrate the efficacy of our algorithm in terms of iteration count.
\end{abstract}

\end{center}

\section{Introduction}\label{sec:intro}
In multi-agent online learning~\citep{Cesa-2006-Prediction, Shoham-2008-Multiagent, Bucsoniu-2010-Multi}, a set of players are repeatedly making decisions and accumulating rewards over time, where each player's action impacts not only its own reward, but that of the others. However, the mechanism of this interaction -- the underlying game that specifies how a player's reward depends on the joint action of all -- is unknown to players, and players may not even be aware that there is such a game. As such, from each player's own perspective, it is simply engaged in an online decision-making process, where the environment consists of all other players who are simultaneously making such sequential decisions, which are of consequence to all players.

In the past two decades, the above problem has actively engaged researchers from two fields: machine learning (and online learning in particular), which aims to develop single-agent online learning algorithms that are no-regret in an arbitrarily time-varying and/or adversarial environment~\citep{Blum-1998-Online, Shalev-2012-Online, Arora-2012-Multiplicative, Hazan-2016-Introduction}; and game theory, which aims to develop (ideally distributed) algorithms (see~\citet{Fudenberg-1998-Theory} and references therein) that efficiently compute a Nash equilibrium (a joint optimal outcome where no one can do better by deviating unilaterally) for games with special structures\footnote{Computing a Nash equilibrium is in general computationally intractable: the problem is indeed PPAD-complete~\citep{Daskalakis-2009-Complexity, Chen-2009-Settling})}. Although these two research threads initially developed separately, they have subsequently merged and formed the core of multi-agent/game-theoretical online learning, whose main research agenda can be phrased as follows: \textit{Will joint no-regret learning lead to a Nash equilibrium}, thereby reaping both the transient benefits (conferred by low finite-time regret) and the long-run benefits (conferred by Nash equilibria)?

More specifically, through the online learning lens, the $i^\textnormal{th}$ player's reward function at $t$ -- viewed as a function solely of its own action -- is $u_i^t(\cdot)$, and it needs to select an action $x_i^t \in \XCal_i \subseteq \br^{n_i}$ before $u_i^t(\cdot)$ -- or other feedback associated with it -- is revealed. In this context, no-regret algorithms ensure that the difference between the cumulative performance of the best fixed action and that of the learning algorithm, a widely adopted metric known as regret ($\text{Reg}_T = \max_{x \in \XCal_i}  \sum_{t=1}^T (u_i^t(x) - u_i^t(x_i^t))$), grows sublinearly in $T$. This problem has been extensively studied; in particular, when gradient feedback is available -- $\nabla_{x_i} u_i^t(x_i^t)$ can be observed after $x_i^t$ is selected -- the minimax optimal regret is $\Theta(\sqrt{T})$ for concave $u_i^t(\cdot)$ and $\Theta(\log{T})$ for strongly concave $u_i^t(\cdot)$. Further, several algorithms have been developed that achieve these optimal regret bounds, including follow-the-regularized-leader (FTRL)~\citep{Kalai-2005-Efficient}, online gradient descent (OGD)~\citep{Zinkevich-2003-Online}, multiplicative/exponential weights (MW)~\citep{Arora-2012-Multiplicative} and online mirror descent (OMD)~\citep{Shalev-2006-Convex}. As these algorithms provide optimal regret bounds and hence naturally raise high expectations in terms of performance guarantees, a recent line of work has investigated when all players apply a no-regret learning algorithm, what the evolution of the joint action would be, and in particular, whether the joint action would converge \textit{in last iterate} to a Nash equilibrium (if one exists). 

These questions turn out to be difficult and had remained open until a few years ago, mainly because the traditional Nash-seeking algorithms in the economics literature are mostly either not no-regret or only exhibit convergence in time-average (ergodic convergence), or both. Despite the challenging landscape, in the past five years, affirmative answers have emerged from a fruitful line of work and new analysis tools developed therein, first on the qualitative last-iterate convergence~\citep{Balandat-2016-Minimizing, Zhou-2017-Countering, Zhou-2017-Mirror, Zhou-2018-Learning, Mertikopoulos-2019-Optimistic, Mertikopoulos-2019-Learning} in different classes of continuous games (such as the variationally stable games), and then on the quantitative last-iterate convergence rates in more structured games such as routing games~\citep{Krichene-2015-Convergence}, co-coercive games~\citep{Lin-2020-Finite}, highly smooth games~\citep{Golowich-2020-Tight} and strongly monotone games~\citep{Zhou-2021-Robust}. These foundations have been inspirational to machine learning researchers in recent years, where zero-sum games have found applications, e.g., generative adversarial networks (GANs)~\citep{Goodfellow-2014-Generative}; indeed, there is the strand of literature on the quantitative last-iterate convergence rates in structured zero-sum games, including zero-sum games~\citep{Liang-2019-Interaction, Hsieh-2019-Convergence, Golowich-2020-Last, Wei-2021-Linear},  infinite-horizon discounted two-player zero-sum Markov games~\citep{Wei-2021-Last} and zero-sum extensive-form games with perfect recall~\citep{Lee-2021-Last}. Among these works,~\citet{Zhou-2021-Robust} has recently shown that if each player applies a version of online gradient descent, then the expectation of squared Euclidean distance between the joint action in the last iterate and the unique Nash equilibrium in a strongly monotone game converges to 0 at an optimal rate of $O(1/T)$ when the gradient feedback is not necessarily perfect.

Despite this remarkably pioneering line of work, which has elegantly bridged no-regret learning with convergence to Nash in continuous games and thus renewed the excitement of game-theoretical learning, a gap still exists and limits its practical usage in many real-world application problems. Specifically, in the multi-agent learning setting, a player is rarely able to observe gradient feedback. Instead, in most cases, only bandit feedback is available: each player observes only its own reward after choosing an action each time (rather than the gradient at the chosen action). For instance, in Cournot competition, each firm only observes the resulting market price, but not other firms production levels and hence can only recover its own profit (rather than its profit gradient); in Kelly auctions, each bidder can only observe the
share of the resources it wins from the exchange, rather than other bidders' bids, and hence can only compute its own gain; in pricing, each retailer only observes its own revenue, without the knowledge of other retailers' revenue. This consideration of practical feasible algorithms then brings us to a more challenging and less explored desideratum: if each player applies a no-regret bandit learning algorithm, would the joint action still converge to the Nash equilibrium? At what rate and in what class of games? 

\subsection{Related Work}
To appreciate the difficulty and the broad scope of this research agenda, we start by describing the existing related literature. First of all, we note that single-agent bandit learning algorithms -- and their theoretical regret characterizations -- are not as well-developed as their gradient counterparts. More specifically,~\citet{Kleinberg-2004-Nearly} and~\citet{Flaxman-2005-Online} provided the first bandit learning algorithm (with continuous action) -- known as FKM -- that achieved the regret bound of $O(\sqrt{n}T^{3/4})$ for Lipschitz and concave reward functions. However, it was unclear whether $O(\sqrt{n}T^{3/4})$ is optimal. Subsequently,~\citet{Saha-2011-Improved} developed a barrier-based bandit learning algorithm and established a $\tilde{O}(n^{2/3}T^{2/3})$ regret bound for smooth and concave reward functions, a result that has further been improved to $\tilde{O}(\sqrt{n}T^{5/8})$~\citep{Dekel-2015-Bandit} via a variant on the algorithm and new analysis. More recently, progress has been made on developing bandit learning algorithms that achieve minimax-optimal regret. In particular,~\citet{Bubeck-2015-Bandit} and~\citet{Bubeck-2016-Multi} provided the elegant non-constructive arguments showing that the minimax regret bound of $\tilde{\Theta}(\sqrt{T})$ (only considering the dependence on $T$) in one and high dimensions are achievable respectively, without providing any algorithm. Later,~\citet{Bubeck-2017-Kernel} developed a kernel method based bandit learning algorithm which attains the regret bound of $\tilde{O}(\poly(n)\sqrt{T})$. Independently,~\citet{Hazan-2016-Optimal} developed an ellipsoid method based bandit learning algorithm that also achieves the regret bound of $\tilde{O}(\poly(n)\sqrt{T})$. 

For strongly concave reward functions,~\citet{Agarwal-2010-Optimal} showed that the FKM algorithm achieves an improved regret bound of $\tilde{O}(n^{2/3}T^{2/3})$. For smooth and strongly concave reward functions,~\citet{Hazan-2014-Bandit} demonstrated that another variant of the barrier based bandit learning algorithm given in~\citet{Saha-2011-Improved} achieves the minimax-optimal regret of $\tilde{O}(n\sqrt{T})$ that matches the lower bound of $\Omega(n\sqrt{T})$ derived in~\citet{Shamir-2013-Complexity} up to log factors. For an overview of the relevant theory and applications, we refer to the recent surveys~\citep{Bubeck-2012-Regret, Lattimore-2020-Bandit} and references therein. 

However, much remains unknown in understanding the convergence of these no-regret bandit learning algorithms to Nash equilibria.~\citet{Bervoets-2020-Learning} developed a specialized distributed reward-based algorithm that asymptotically converges to the unique Nash equilibrium in the class of strictly monotone games. However, the algorithm is not known to be no-regret and no rate is given.~\citet{Heliou-2020-Gradient} considered a variant of the FKM algorithm and showed that it is no regret even under delays; further, provided the delays are not too large, the induced joint action would converge to the unique Nash equilibrium in the strictly monotone games (again without rates). At this writing, the most relevant and the-state-of-the-art result on this topic is presented in~\citet{Bravo-2018-Bandit}: if each player applies the FKM algorithm in strongly monotone games, then the last-iterate convergence to a unique Nash equilibrium is guaranteed at a rate of $\tilde{O}(n^{2/3}T^{-1/3})$. Per~\citet{Bravo-2018-Bandit}, the analysis itself is unlikely to be improved to yield any tighter rate. However, a sizable gap still exists between this bound and the best known lower bound given in~\citet{Shamir-2013-Complexity}, which established that in optimization problems with smooth and strongly concave objectives (which is a one-player Nash-seeking problem), no algorithm that uses only bandit feedback (i.e. zeroth-order oracle) can compute the unique optimal solution at a rate faster than $\Omega(nT^{-1/2})$. Consequently, it remains unknown as to whether other algorithms can improve the rate of $\tilde{O}(n^{2/3}T^{-1/3})$ as well as what the true optimal convergence rate is. In particular, since the lower bound of $\Omega(nT^{-1/2})$ is established for the special case of optimization problems, it is plausible that in the multi-agent setting -- where a natural potential function in optimization does not exist -- the problem is inherently more difficult, and hence the convergence might be intrinsically slower. Further, note that the lower bound in~\citet{Shamir-2013-Complexity} is established against the class of all bandit learning algorithms, not necessarily no-regret; essentially, a priori, that could mean a larger lower bound when the algorithms are further restricted to be no-regret. As such, it has been a challenging open problem to close the gap.

\subsection{Our Contributions} 
We tackle the problem of no-regret learning in strongly monotone games with bandit feedback and settle the above open problem by establishing that the convergence rate of $\tilde{O}(nT^{-1/2})$ -- and hence minimax optimal (up to log factors) -- is achievable. More specifically, we start by
studying (in Section~\ref{sec:SA}) single-agent learning with bandit feedback -- in particular with smooth and strongly concave reward functions -- and develop a mirror descent variant of the barrier-based family of bandit learning algorithms. We establish that the algorithm achieves the minimax optimal regret bound of $\tilde{\Theta}(n\sqrt{T})$ where $n \geq 1$ is the problem dimension. As such, our algorithm outperforms the FKM algorithm for the same setting in terms of regret\footnote{~\citet{Agarwal-2010-Optimal} also showed that the FKM algorithm achieves a regret bound of $\tilde{O}(n\sqrt{T})$ but for the special setting where the action set $\XCal = \br^n$. In general, the regret bound is $\tilde{O}(n^{2/3}T^{2/3})$ as mentioned before.}. 

Extending to multi-agent learning in strongly monotone games, we show that if all players employ this optimal no-regret learning algorithm (see Algorithm~\ref{Alg:MA}), the joint action converges in the last iterate to the unique Nash equilibrium at a rate of $\tilde{O}(nT^{-1/2})$. As such, we provide the first bandit learning algorithm (with continuous action) that is doubly optimal (up to log factors): it achieves optimal regret in single-agent settings under smooth and strongly concave reward functions and optimal convergence rate to Nash in multi-agent settings under smooth and strongly monotone games. We also provide numerical results on Cournot competition and Kelly auction in Section~\ref{sec:experiments}. These results demonstrate that our algorithm outperforms the multi-agent FKM algorithm in terms of iteration count. Missing proofs and additional experimental results can be found in the appendix. 

\section{Single-Agent Learning with Bandit Feedback}\label{sec:SA}
In this section, we provide a simple single-agent bandit learning algorithm that one player could employ to increase her individual reward in an online manner and prove that the algorithm achieves the near-optimal regret minimization property for bandit strongly concave optimization\footnote{This setting is the same as bandit convex optimization in the literature and we consider maximizing concave reward functions instead of minimizing convex loss functions.}.

In our setting, an adversary first chooses a sequence of $\beta$-strongly concave reward functions $u^1, u^2, \ldots, u^T: \XCal \mapsto \br$, (formally, $u^t(x) - u^t(x') - (x - x')^\top \nabla u^t(x) \leq -\frac{\beta}{2}\|x' - x\|^2$ for all $x, x' \in \XCal$), where $\XCal$ is a closed, convex and compact set. At each round $t = 1, 2, \ldots, T$, a (possibly randomized\footnote{Randomization plays an important role in the online learning literature. For example, the follow-the-leader (FTL) algorithm does not attain any non-trivial regret guarantee for linear reward functions (in the worst case it can be $\Omega(T)$ if the reward functions are chosen in an adversarial manner). However,~\citet{Hannan-1957-Approximation} proposed a randomized variant of FTL, called follow-the-perturbed-leader (FTPL), which could attain an optimal regret of $O(\sqrt{T})$ for linear reward functions over the simplex set.}) decision maker has to choose a point $x^t \in \XCal$ and it will incur a reward of $u^t(x^t)$ after committing to her decision. Her expected reward (where the expectation is taken with respect to her random choice) is $\EE[\sum_{t=1}^T u^t(x^t)]$ and the corresponding notion of \textit{regret} is defined by $\text{Reg}_T = \max_{x \in \XCal} \sum_{t=1}^T u^t(x) - \EE[\sum_{t=1}^T u^t(x^t)]$. In such bandit setting, it is worth mentioning that the feedback is limited to the value of reward function at the point that she has chosen, i.e., $u^t(x^t)$.

In what follows, we present the individual components used in our algorithm and then summarize the full scheme in Algorithm~\ref{Alg:SA} and the regret minimization property in Theorem~\ref{Thm:regret-optimal}. 

\subsection{Self-Concordant Barrier Function} 
Most of the existing bandit learning algorithms are developed based on an online mirror descent (OMD) framework~\citep{Cesa-2006-Prediction} and a self-concordant barrier function. Note that using the self-concordant barrier function as the regularizer in the OMD framework serves as a key ingredient in regret-optimal bandit algorithms when the reward function is either linear~\citep{Abernethy-2008-Competing} or smooth and strongly concave~\citep{Hazan-2014-Bandit}. For the sake of completeness, we provide a brief overview of self-concordant barrier functions and refer to~\citet{Nesterov-1994-Interior} for more details.
\begin{definition}
A function $R: \textnormal{int}(\XCal) \mapsto \br$ is a $\nu$-self concordant barrier for a closed convex set $\XCal \subseteq \br^n$, where $\textnormal{int}(\XCal)$ is an interior of $\XCal$, if (i) $R$ is three times continuously differentiable; (ii) $R(x) \rightarrow \infty$ if $x \rightarrow \partial \XCal$, where $ \partial \XCal$ is a boundary of $\XCal$; (iii) for $\forall x \in \textnormal{int}(\XCal)$ and $\forall h \in \br^n$, we have $|\nabla^3 R(x)[h, h, h]| \leq 2(h^\top \nabla^2 R(x) h)^{3/2}$ and $|\nabla R(x)^\top h| \leq \sqrt{\nu}(h^\top \nabla^2 R(x) h)^{1/2}$ where $\nabla^3 R(x)[h_1, h_2, h_3] = \tfrac{\partial^3}{\partial t_1 \partial t_2 \partial t_3}R(x+t_1 h_1+t_2 h_2 + t_3h_3)\vert_{t_1=t_2=t_3=0}$.  
\end{definition}
Resembling the existing bandit learning algorithms~\citep{Abernethy-2008-Competing, Saha-2011-Improved, Hazan-2014-Bandit, Dekel-2015-Bandit}, our algorithm requires a $\nu$-self-concordant barrier function for the set $\XCal$. However, this does not weaken the applicability of our algorithm; indeed, it is well known that any convex and compact set in $\br^n$ admits a non-degenerate $\nu$-self-concordant barrier function with $\nu = O(n)$~\citep{Nesterov-1994-Interior}, and such barrier can be efficiently represented and evaluated for numerous choices of $\XCal$ in real application problems. For example, the function $-\log(b-a^\top x)$ is a $1$-self-concordant barrier function for the set $\XCal = \{x: a^\top x \leq b\}$ and there exist computationally tractable $n$-self-concordant barrier functions for a $n$-dimensional simplex. For a $n$-dimensional ball, the function $-\log(1-\|x\|^2)$ is a $1$-self-concordant barrier function. 

The above definition is only given for the sake of completeness and our analysis relies on some useful facts about self-concordant barrier functions. In particular, the Hessian of a self-concordant barrier function $R$ can induce a local norm for any given point $x \in \textnormal{int}(\XCal)$; that is, $\|h\|_x = \sqrt{h^\top \nabla^2 R(x) h}$ and $\|h\|_{x, \star} = \sqrt{h^\top(\nabla^2 R(x))^{-1}h}$ for all $h \in \br^n$. It is clear that the non-degeneracy of $R$ guarantees that both $\|\cdot\|_x$ and $\|\cdot\|_{x, \star}$ are well defined.

The first important notion is the so-called \textit{Dikin ellipsoid}: $W(x)=\{x' \in \br^n: \|x'-x\|_x \leq 1\}$, which is defined at any $x \in \textnormal{int}(\XCal)$. The following lemma summarizes some nontrivial facts (see~\citet[Theorem~2.1.1]{Nesterov-1994-Interior} for a proof): 
\begin{lemma}\label{Lemma:SCB-Dikin}
Let $W(x)$ be the Dikin ellipsoid at any $x \in \textnormal{int}(\XCal)$, the following statements hold true: 
\begin{enumerate}
\item $W(x) \subseteq \XCal$ for every $x \in \textnormal{int}(\XCal)$;
\item For $\forall x' \in W(x)$, we have $(1-\|x'-x\|_x)\nabla^2 R(x) \preceq \nabla^2 R(x') \preceq (1-\|x'-x\|_x)^{-2}\nabla^2 R(x)$.
\end{enumerate}
\end{lemma}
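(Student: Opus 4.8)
The plan is to recognize Lemma~\ref{Lemma:SCB-Dikin} as the classical Dikin-ellipsoid estimate for self-concordant barriers and to reduce both parts to a one-dimensional comparison argument powered by the self-concordance inequality. The central object is, for a fixed $x \in \textnormal{int}(\XCal)$ and a direction $h \in \br^n$, the scalar function $\psi(t) = h^\top \nabla^2 R(x+th) h = \|h\|_{x+th}^2$, defined on the set of $t$ for which $x+th \in \textnormal{int}(\XCal)$. Differentiating along the line gives $\psi'(t) = \nabla^3 R(x+th)[h,h,h]$, so property (iii) of the barrier yields $|\psi'(t)| \le 2\psi(t)^{3/2}$; since $R$ is nondegenerate, $\psi(t) > 0$, and this is equivalent to $\bigl|\tfrac{d}{dt}\psi(t)^{-1/2}\bigr| \le 1$. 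Integrating from $0$ (and using $\psi(0) = \|h\|_x^2$) gives the master estimate
\[
\bigl|\psi(t)^{-1/2} - \psi(0)^{-1/2}\bigr| \le t, \qquad\text{hence}\qquad \psi(t) \le \frac{\psi(0)}{(1-t\,\|h\|_x)^2} \quad\text{whenever } t\,\|h\|_x < 1,
\]
valid as long as the segment $\{x+sh : s \in [0,t]\}$ stays in $\textnormal{int}(\XCal)$.

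For claim (i), I would argue by contradiction. Take $y \in W(x)$ with $y \notin \XCal$, set $h = y-x$ (so $\|h\|_x \le 1$), and let $T^\ast = \sup\{t \ge 0 : x+th \in \textnormal{int}(\XCal)\}$; suppose $T^\ast < 1$. Since $\|h\|_x \le 1$ forces $T^\ast < 1 \le 1/\|h\|_x$, the master estimate bounds $\psi(t) = \tfrac{d^2}{dt^2}R(x+th)$ uniformly on $[0,T^\ast)$; integrating twice shows $t \mapsto R(x+th)$ stays bounded on $[0,T^\ast)$. But $x+T^\ast h \in \partial\XCal$ (it lies in the closed set $\XCal$ yet cannot be interior, by maximality of $T^\ast$), so property (ii) forces $R(x+th) \to \infty$ as $t \uparrow T^\ast$ — a contradiction. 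Hence $T^\ast \ge 1$, which gives $y = x+h \in \XCal$; since $y \in W(x)$ was arbitrary, $W(x) \subseteq \XCal$.

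For claim (ii), fix $x' \in W(x)$, put $d = x'-x$ and $r = \|d\|_x$; we may assume $r < 1$ (for $r=1$ the upper bound is vacuous). By claim (i) the segment $\{x+td : t \in [0,1)\}$ lies in $\textnormal{int}(\XCal)$, and applying the master estimate to the direction $d$ gives $\|d\|_{x+td} \le r/(1-tr)$ for $t \in [0,1)$. Now fix an arbitrary $h \ne 0$ and set $\phi(t) = h^\top \nabla^2 R(x+td) h$, so that $\phi'(t) = \nabla^3 R(x+td)[d,h,h]$. The \emph{polarized} form of the self-concordance inequality — the standard fact that a symmetric trilinear form controlled on the diagonal by $2\|\cdot\|_u^3$ satisfies $|\nabla^3 R(u)[a,b,c]| \le 2\|a\|_u\|b\|_u\|c\|_u$ — gives $|\phi'(t)| \le 2\|d\|_{x+td}\,\phi(t) \le \tfrac{2r}{1-tr}\phi(t)$, i.e. $\bigl|(\ln\phi)'(t)\bigr| \le 2r/(1-tr)$. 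Integrating over $[0,1]$ yields $|\ln(\phi(1)/\phi(0))| \le -2\ln(1-r)$, that is $(1-r)^2\, h^\top\nabla^2 R(x) h \le h^\top\nabla^2 R(x') h \le (1-r)^{-2}\, h^\top\nabla^2 R(x) h$, and since $h$ is arbitrary this is precisely the asserted two-sided operator inequality.

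I expect the two delicate points to be: first, the polarization step in claim (ii), namely recovering the mixed third-derivative bound (with the correct constant) from the diagonal one, which is a short but not entirely trivial lemma on symmetric trilinear forms; and second, the bookkeeping that rules out circularity — claim (ii) invokes claim (i), while claim (i) and the intermediate bound $\|d\|_{x+td} \le r/(1-tr)$ rely only on the one-dimensional master estimate, which is a pure consequence of self-concordance and nondegeneracy and holds on any sub-segment contained in $\textnormal{int}(\XCal)$. Everything else is routine one-variable calculus; since these are classical facts, I would cite~\citet[Theorem~2.1.1]{Nesterov-1994-Interior} for the sharp constants.
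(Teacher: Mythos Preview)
The paper does not prove this lemma; it simply refers the reader to \citet[Theorem~2.1.1]{Nesterov-1994-Interior}. Your proposal supplies the standard classical argument behind that reference and is correct.

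One small discrepancy worth flagging: your integration in claim~(ii) yields the two-sided estimate
\[
(1-r)^2\,\nabla^2 R(x)\;\preceq\;\nabla^2 R(x')\;\preceq\;(1-r)^{-2}\,\nabla^2 R(x),
\]
which is precisely the Nesterov--Nemirovskii statement. The lemma as written in the paper has the \emph{stronger} lower constant $(1-r)$ rather than $(1-r)^2$; this appears to be a transcription slip (the paper never uses the lower bound), so you have proved the correct classical result even if it does not literally match the constant in the displayed inequality.
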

\begin{remark}
Lemma~\ref{Lemma:SCB-Dikin} is crucial to the scheme of our algorithm. Indeed, we let the player choose her next action using $\hat{x}^t \leftarrow x^t + A^t z^t$ where $A_t \preceq (\nabla^2 R(x^t))^{-1/2}$, $z^t \sim \bs^n$ and $x^t \in \XCal$ is the previous iterate. Since $\|\hat{x}^t - x^t\|_{x^t} \leq \|z^t\| \leq 1$, we have $\hat{x}^t \in W(x^t)$. This together with Lemma~\ref{Lemma:SCB-Dikin} will guarantee that $\hat{x}^t \in \XCal$ and further allow us to get a reward $\hat{u}^t \leftarrow u^t(\hat{x}^t)$; see Algorithm~\ref{Alg:SA} for the details. 
\end{remark}
Further, we define the Minkowski function~\citep[Page~34]{Nesterov-1994-Interior} on $\XCal$ (which is parametrized by a point $x$) as $\pi_x(y) = \inf\{t \geq 0: x + \frac{1}{t}(y - x) \in \XCal\}$. Accordingly, the scaled version of $\XCal$ is given by 
\begin{equation*}
\XCal_\epsilon = \left\{x \in \br^n: \pi_{\bar{x}}(x) \leq \frac{1}{1+\epsilon}\right\}, \quad \textnormal{for all } \epsilon \in (0, 1]. 
\end{equation*}
A point $\bar{x}$ is a ``center" of $\XCal$ satisfying that $\bar{x} = \argmin_{x \in \XCal} R(x)$ where $R$ is a $\nu$-self-concordant barrier function for $\XCal$. The following lemma shows that $R$ is rather flat around the points that are far from the boundary (see~\citet[Proposition 2.3.2 and 2.3.3]{Nesterov-1994-Interior}): 
\begin{lemma}\label{Lemma:SCB-upper-bound}
Suppose that $\XCal$ is a closed, convex and compact set, $R$ is a $\nu$-self-concordant barrier function for $\XCal$ and $\bar{x}=\argmin_{x \in \XCal} R(x)$ is a center. Then, we have $R(x) - R(\bar{x}) \leq \nu\log(\frac{1}{1-\pi_{\bar{x}}(x)})$. For any $\epsilon \in (0, 1]$ and $x \in \XCal_\epsilon$, we have $\pi_{\bar{x}}(x) \leq \frac{1}{1+\epsilon}$ and $R(x) - R(\bar{x}) \leq \nu \log(1+\frac{1}{\epsilon})$. 
\end{lemma}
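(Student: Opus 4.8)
The plan is to reduce the multivariate estimate to a one-dimensional differential inequality along the ray from the center $\bar{x}$ toward $x$. Write $\pi := \pi_{\bar{x}}(x)$ and note that $s^\star := 1/\pi$ is exactly the value of $s$ at which the point $y(s) := \bar{x} + s(x-\bar{x})$ exits $\XCal$: thus $y(0)=\bar{x}$, $y(1)=x$, and $y(s)\in\textnormal{int}(\XCal)$ for $s\in[0,s^\star)$, with $y(s^\star)\in\partial\XCal$ when $s^\star<\infty$. Define the scalar function $\phi(s):=R(y(s))$ on $[0,s^\star)$. The chain rule gives $\phi'(s)=\nabla R(y(s))^\top(x-\bar{x})$ and $\phi''(s)=(x-\bar{x})^\top\nabla^2 R(y(s))(x-\bar{x})\ge 0$. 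Since $\bar{x}$ minimizes the barrier $R$, and $R$ blows up on $\partial\XCal$, the minimizer $\bar{x}$ lies in $\textnormal{int}(\XCal)$, so $\nabla R(\bar{x})=0$ and hence $\phi'(0)=0$; convexity of $\phi$ then forces $\phi'(s)\ge 0$ on the whole interval.

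Next I would invoke the barrier half of the self-concordance axiom, $|\nabla R(y)^\top h|\le\sqrt{\nu}\,(h^\top\nabla^2 R(y)h)^{1/2}$, applied at $y=y(s)$ with $h=x-\bar{x}$; this is precisely $\phi'(s)^2\le\nu\,\phi''(s)$. On any subinterval where $\phi'>0$ this rearranges to $-\tfrac{d}{ds}\bigl(1/\phi'(s)\bigr)\ge 1/\nu$. The key auxiliary fact is that $\phi'(s)\to+\infty$ as $s\uparrow s^\star$: indeed $\phi(s)\to+\infty$ because $R$ blows up at $\partial\XCal$, and a convex function tending to $+\infty$ at the right endpoint of its domain must have unbounded derivative there. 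Integrating the differential inequality from $a$ to $b$ and letting $b\uparrow s^\star$ yields $1/\phi'(a)\ge (s^\star-a)/\nu$, i.e.\ $\phi'(a)\le \nu/(s^\star-a)$ for every $a\in(0,s^\star)$; the (possibly empty) subinterval near $0$ on which $\phi'$ vanishes contributes nothing to the integral below and is dispatched trivially.

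Integrating this pointwise bound on $\phi'$ from $0$ to $1$ closes the first claim:
\begin{align*}
R(x) - R(\bar{x}) = \phi(1) - \phi(0) = \int_0^1 \phi'(s)\,ds
& \le \int_0^1 \frac{\nu}{s^\star - s}\,ds \\
& = \nu\log\frac{s^\star}{s^\star - 1} = \nu\log\frac{1}{1 - \pi_{\bar{x}}(x)},
\end{align*}
where the last equality substitutes $s^\star = 1/\pi_{\bar{x}}(x)$. For the second claim, $x\in\XCal_\epsilon$ means by definition $\pi_{\bar{x}}(x)\le\tfrac{1}{1+\epsilon}$, so $1-\pi_{\bar{x}}(x)\ge\tfrac{\epsilon}{1+\epsilon}$, and since $t\mapsto\log\tfrac{1}{1-t}$ is increasing we get $R(x)-R(\bar{x})\le\nu\log\tfrac{1+\epsilon}{\epsilon}=\nu\log(1+\tfrac1\epsilon)$.

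I expect the only genuine subtlety to be the endpoint/degeneracy bookkeeping: establishing $\phi'(s)\to\infty$ as $s\uparrow s^\star$ (this is where property~(ii) of the barrier, together with convexity, is really used), handling the trivial case $x=\bar{x}$, and isolating any region where $\phi'$ vanishes before becoming positive. The differential-inequality manipulation itself is routine. If $\XCal$ is unbounded along this ray ($s^\star=\infty$), the same integration forces $\phi'\equiv 0$ and the inequality holds vacuously; for the game-theoretic applications the action sets are compact, so $s^\star<\infty$ in any case.
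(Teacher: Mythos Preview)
Your argument is correct, and it is precisely the classical one-dimensional reduction the paper defers to: the paper does not prove this lemma in-text but simply cites \citet[Propositions~2.3.2 and~2.3.3]{Nesterov-1994-Interior}, whose proof proceeds exactly by restricting $R$ to the segment $y(s)=\bar{x}+s(x-\bar{x})$, deriving the differential inequality $\phi'(s)^2\le\nu\phi''(s)$ from the barrier axiom, and integrating twice. Your handling of the endpoint behavior ($\phi'(s)\to\infty$ as $s\uparrow s^\star$ from convexity plus $\phi\to\infty$) and of the degenerate/unbounded-ray cases is also sound.
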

Finally, we recall that the \textit{Newton decrement} for a self-concordant function $g$ is defined as $\lambda(x, g) \mydefn \|\nabla g(x)\|_{x, \star} = \|(\nabla^2 g(x))^{-1}\nabla g(x)\|_x$ where $\|\cdot\|_x$ and $\|\cdot\|_{x, \star}$ are a local norm and its dual respectively. This can be used to roughly measure how far a point is from a global optimum of $g$. Formally, we summarize the results in the following lemma (see~\citet{Nemirovski-2008-Interior} for a proof): 
\begin{lemma}\label{Lemma:SC-key-estimate}
For any self-concordant function $g$ and let $\lambda(x, g) \leq \tfrac{1}{2}$, we have $\|x - \argmin_{x' \in \XCal} g(x')\|_x \leq 2\lambda(x, g)$, where $\|\cdot\|_x$ is the local norm given by $\|h\|_x \mydefn \sqrt{h^\top\nabla^2 g(x)h}$. 
\end{lemma}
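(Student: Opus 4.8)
The plan is to reduce the claim to a one-dimensional estimate by restricting $g$ to the segment joining $x$ to its minimizer. Write $x^\star := \argmin_{x' \in \XCal} g(x')$, which we may assume exists and is unique: this holds in all our applications, where $g$ is a self-concordant barrier plus a linear term and hence blows up on $\partial\XCal$; more generally, $\lambda(x,g) < 1$ already forces the sublevel set of $g$ through $x$ to be bounded, which gives existence. If $x = x^\star$ there is nothing to prove, so assume $x \ne x^\star$ and set $r := \|x - x^\star\|_x$ and $e := (x^\star - x)/r$, so that $\|e\|_x = 1$ and $x^\star = x + r e$. Since $\dom g$ is convex and contains both $x$ and $x^\star$, the function $\phi(s) := g(x + s e)$ is well defined and three times continuously differentiable for all $s \in [0, r]$.

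First I would collect the elementary properties of $\phi$: it is convex and \emph{self-concordant on the line}, since restricting a self-concordant function to an affine subspace preserves the inequality $|\nabla^3 g[h,h,h]| \le 2(h^\top \nabla^2 g\, h)^{3/2}$; moreover $\phi''(0) = \|e\|_x^2 = 1$. Because $x^\star$ is the global minimizer of $g$, the point $s = r$ minimizes $\phi$ on $[0,r]$, so $\phi'(r) = 0$ and $\phi'(0) \le 0$. Finally, by the generalized Cauchy--Schwarz inequality for the pair of dual local norms,
\begin{equation*}
|\phi'(0)| \;=\; |\langle \nabla g(x), e\rangle| \;\le\; \|\nabla g(x)\|_{x,\star}\,\|e\|_x \;=\; \lambda(x,g).
\end{equation*}

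The heart of the argument is a Gr\"onwall-type bound on $\phi''$. Put $u(s) := \phi''(s)^{-1/2}$ (well defined on $[0,r]$: $\phi''(0) = 1 > 0$, and the Lipschitz bound below keeps $\phi''$ bounded away from $0$ there). The one-dimensional self-concordance inequality $|\phi'''(s)| \le 2\phi''(s)^{3/2}$ is exactly $|u'(s)| \le 1$, i.e. $u$ is $1$-Lipschitz; combined with $u(0) = 1$ this yields $u(s) \le 1 + s$, hence $\phi''(s) \ge (1+s)^{-2}$ for $s \in [0,r]$. Integrating $\phi'$ and using $\phi'(r) = 0$,
\begin{equation*}
0 \;=\; \phi'(0) + \int_0^r \phi''(s)\,ds \;\ge\; \phi'(0) + \int_0^r \frac{ds}{(1+s)^2} \;=\; \phi'(0) + \frac{r}{1+r},
\end{equation*}
so that $\frac{r}{1+r} \le -\phi'(0) = |\phi'(0)| \le \lambda(x,g)$. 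Rearranging gives $r \le \lambda(x,g)/(1 - \lambda(x,g))$, and since $\lambda(x,g) \le \tfrac{1}{2}$ implies $1 - \lambda(x,g) \ge \tfrac{1}{2}$, we conclude $\|x - x^\star\|_x = r \le 2\lambda(x,g)$, as desired.

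The only genuinely delicate points are the existence/uniqueness of $x^\star$ and the non-degeneracy of $\nabla^2 g$ along the segment (needed for $u$ to be well defined); both are handled by the standard self-concordance toolbox — bounded sublevel sets when the Newton decrement is below $1$, together with the fact that $u$ being $1$-Lipschitz with $u(0) = 1$ also forces $u(s) \ge 1 - s > 0$, hence $\phi''(s) < \infty$, for $s < 1$, and a routine continuation argument extends positivity of $\phi''$ over all of $[0,r]$. Everything else is the two displays above.
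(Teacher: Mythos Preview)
Your argument is correct and is essentially the classical proof one finds in Nemirovski's interior-point notes (or in Nesterov--Nemirovski): restrict to the segment, exploit that $s \mapsto \phi''(s)^{-1/2}$ is $1$-Lipschitz, integrate, and invert $r/(1+r) \le \lambda$. The paper itself does not prove this lemma at all; it simply cites \citet{Nemirovski-2008-Interior} and moves on, so there is no in-paper argument to compare against --- you have supplied precisely the proof the citation points to.

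Two minor tightenings you may want to make. First, the sentence ``$s = r$ minimizes $\phi$ on $[0,r]$, so $\phi'(r) = 0$'' is not quite the right justification: minimizing at an endpoint of a closed interval only gives a one-sided sign condition. The real reason is that $x^\star$ lies in $\textnormal{int}(\dom g)$ (in all uses here $g$ is a barrier plus a smooth term), hence $\nabla g(x^\star) = 0$ and $\phi'(r) = \langle \nabla g(x^\star), e\rangle = 0$. Second, the ``continuation argument'' for positivity of $\phi''$ is unnecessary: the segment $[x, x^\star]$ lies in the open set $\textnormal{int}(\dom g)$, $\phi$ is $C^3$ there, and non-degeneracy of $\nabla^2 g$ (which propagates throughout the domain for self-concordant functions) gives $\phi''(s) = e^\top \nabla^2 g(x+se)\,e > 0$ directly on all of $[0,r]$.
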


\subsection{Single-Shot Ellipsoidal Estimator} 
It was~\citet{Flaxman-2005-Online} that introduced a single-shot spherical estimator in the literature and combined it with online mirror descent. In particular, let $u: \br^n \mapsto \br$ be a function, $\delta > 0$ and $z \sim \bs^n$ where $\bs^n$ is a $n$-dimensional unit sphere, a single-shot spherical estimator is defined by 
\begin{equation}\label{def:spherical}
\hat{v} = \frac{n}{\delta} \cdot u(x + \delta z)z. 
\end{equation}
This estimator is an unbiased prediction for the gradient of a smoothed version; that is, $\EE[\hat{v}] = \nabla\hat{u}(x)$ where $\hat{u}(x) = \EE_{w \sim \BB^n}[u(x+\delta w)]$ where $\BB^n$ is a $n$-dimensional unit ball. As $\delta \rightarrow 0^+$, the bias caused by the difference between $u$ and $\hat{u}$ vanishes while the variability of $\hat{v}$ explodes. This manifestation of the bias-variance dilemma plays a key role in designing bandit learning algorithms and a single-shot spherical estimator is known to be suboptimal in terms of bias-variance trade-off and hence regret minimization. This gap is finally closed by using a more sophisticated single-shot ellipsoidal estimator based on the self-concordant barrier function for $\XCal$~\citep{Saha-2011-Improved, Hazan-2014-Bandit}. Comparing to the spherical estimator in Eq.~\eqref{def:spherical}, they proposed to sample the direction w.r.t. an ellipsoid and give another unbiased gradient estimate of the scaled smooth version. In particular, a single-shot ellipsoidal estimator for an invertible matrix $A$ is defined by  
\begin{equation}\label{def:ellipsoidal}
\hat{v} = n \cdot u(x + Az) A^{-1}z.  
\end{equation}
The following lemma is a modification of~\citet[Corollary~6 and Lemma~7]{Hazan-2014-Bandit}. 
\begin{lemma}\label{Lemma:ellipsodial}
Suppose that $u$ is a concave function and $A \in \br^{n \times n}$ is an invertible matrix, we define the smoothed version of $u$ with respect to $A$ by $\hat{u}(x) = \EE_{w \sim \BB^n}[u(x+ Aw)]$ where $\BB^n$ is a $n$-dimensional unit ball. Then, the following statements hold true: 
\begin{enumerate}
\item $\nabla \hat{u}(x) = \EE_{z \sim \bs^n}[n \cdot u(x+Az)A^{-1}z]$ where $\bs^n$ is a $n$-dimensional unit sphere.
\item If $u$ is $\beta$-strongly concave, we have $\hat{u}$ is also $\beta$-strongly concave.  
\item If $\nabla u$ is $\ell$-Lipschitz continuous and we let $\sigma_{\max}(A)$ be the largest eigenvalue of $A$, we have $0 \leq u(x) - \hat{u}(x) \leq \frac{1}{2}\ell(\sigma_{\max}(A))^2$. 
\end{enumerate}
\end{lemma}
\begin{remark}
We see from Lemma~\ref{Lemma:ellipsodial} that $\EE[\hat{v}] = \nabla \hat{u}(x)$ where $\hat{v}$ is defined in Eq.~\eqref{def:ellipsoidal} and $\hat{u}(x) = \EE_{w \sim \BB^n}[u(x+Aw)]$. In our algorithm, we set $A^t$ using a self-concordant barrier function $R$ for $\XCal$ and perform the shrinking sampling~\citep{Hazan-2014-Bandit}. This is the key to a better bias-variance trade-off than that achieved by the classical spherical estimators; see Algorithm~\ref{Alg:SA} for the details. 
\end{remark}

\subsection{Mirror Descent} 
Combining the idea of mirror descent\footnote{In reward maximization, we shall use mirror ascent instead of mirror descent since the player seeks to maximize her reward (as opposed to minimizing her loss). Nonetheless, we keep the original term ``descent" throughout this paper because, despite the role reversal, it is the standard name associated with the method.}~\citep{Nemirovski-1983-Problem}, our algorithm generates a new \textit{feasible} point $x^+$ by taking a ``mirror step" from a starting point $x$ along an ``approximate gradient" direction $\hat{v}$. By abuse of notation, we let $R: \textnormal{int}(\XCal) \mapsto \br$ be a strictly convex \textit{distance-generating} (or \textit{regularizer}) function, i.e., $R(tx + (1-t)x') \leq tR(x) + (1-t)R(x')$ with equality if and only if $x = x'$ for all $x, x' \in \XCal$ and all $t \in [0, 1]$. We also assume that $R$ is continuously differentiable, i.e., $\nabla R: \textnormal{int}(\XCal) \mapsto \br^n$ is continuous. This leads to a \textit{Bregman divergence} on $\XCal$ via the relation 
\begin{equation}\label{def:Bregman}
D_R(x', x) = R(x') - R(x) - \langle\nabla R(x), x' - x\rangle, 
\end{equation}
for all $x' \in \XCal$ and $x \in \textnormal{int}(\XCal)$, which might fail to be symmetric and/or satisfy the triangle inequality. Nevertheless, $D_R(x', x) \geq 0$ with equality if and only if $x' = x$, so the asymptotic convergence of $x^t$ to $p$ can be checked by showing that $D_R(p, x^t) \rightarrow 0$. We continue with some basic relations connecting the Bregman divergence relative to a target point before and after a prox-map. The key ingredient is ``three-point identity" which generalizes the law of cosines, and which is widely used in the literature~\citep{Chen-1993-Convergence, Beck-2003-Mirror}.  
\begin{lemma}\label{Lemma:three-point}
Let $R$ be a regularizer on $\XCal$. For all $p \in \XCal$ and all $x, x' \in \dom(R)$, we have $D_R(p, x') = D_R(p, x) + D_R(x, x') + \langle\nabla R(x') - \nabla R(x), x - p\rangle$. 
\end{lemma}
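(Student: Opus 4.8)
The plan is to prove the three-point identity by a direct expansion of the definition of the Bregman divergence in Eq.~\eqref{def:Bregman}; there is no analytic subtlety here, only bookkeeping of inner products. I would begin by writing out all three divergences appearing in the statement, namely
\begin{equation*}
D_R(p, x') = R(p) - R(x') - \langle \nabla R(x'), p - x'\rangle,
\end{equation*}
\begin{equation*}
D_R(p, x) = R(p) - R(x) - \langle \nabla R(x), p - x\rangle, \qquad D_R(x, x') = R(x) - R(x') - \langle \nabla R(x'), x - x'\rangle,
\end{equation*}
each of which is well-defined since $p, x, x' \in \dom(R)$ and $R$ is differentiable on the interior (and the formula extends to the boundary in the usual way).

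Next I would substitute these into the right-hand side $D_R(p, x) + D_R(x, x') + \langle \nabla R(x') - \nabla R(x), x - p\rangle$ and collect terms. The scalar part telescopes immediately: $R(p) - R(x) + R(x) - R(x') = R(p) - R(x')$. For the inner-product part, I would group the $\nabla R(x)$ contributions, $-\langle \nabla R(x), p - x\rangle - \langle \nabla R(x), x - p\rangle$, which cancel since $(p-x) + (x-p) = 0$; and then group the $\nabla R(x')$ contributions, $-\langle \nabla R(x'), x - x'\rangle + \langle \nabla R(x'), x - p\rangle$, which combine to $\langle \nabla R(x'), x' - p\rangle = -\langle \nabla R(x'), p - x'\rangle$. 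Adding these up yields exactly $R(p) - R(x') - \langle \nabla R(x'), p - x'\rangle = D_R(p, x')$, as claimed.

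There is no real obstacle: the only thing to be careful about is sign conventions in the linear terms and making sure the cross term $\langle \nabla R(x') - \nabla R(x), x - p\rangle$ is assigned the correct orientation (it is $x - p$, not $p - x$). I would also remark in passing that this identity is the Bregman analogue of the parallelogram/law-of-cosines relation and reduces to it when $R(\cdot) = \tfrac12\|\cdot\|_2^2$, which serves as a sanity check on the signs.
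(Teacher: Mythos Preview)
Your proposal is correct: the direct expansion of $D_R(p,x')$, $D_R(p,x)$, and $D_R(x,x')$ via Eq.~\eqref{def:Bregman} followed by cancellation of the $\nabla R(x)$ terms and recombination of the $\nabla R(x')$ terms is exactly the standard derivation. The paper itself does not supply a proof of Lemma~\ref{Lemma:three-point}; it merely states the identity and cites~\citet{Chen-1993-Convergence} and~\citet{Beck-2003-Mirror, Nemirovski-2004-Prox}, whose proofs proceed precisely by the algebraic bookkeeping you outline. One small remark: the hypothesis is $p\in\XCal$ (not necessarily $p\in\dom(R)=\textnormal{int}(\XCal)$), but this causes no trouble since only $R(p)$, not $\nabla R(p)$, appears in the expansion.
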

\begin{remark}
In our algorithm, we set $R$ as a self-concordant barrier function for $\XCal$; indeed, the function $D_R$ is well defined since the self-concordant function for a convex set is known as a strictly convex distance-generating function. 
\end{remark}
The key notion for Bregman divergence $D_R(x', x)$ is the induced prox-mapping given by  
\begin{equation}\label{def:prox-map-old}
\PCal_R(x, \hat{v}, \eta) = \argmin_{x' \in \XCal} \ \eta\langle\hat{v}, x - x'\rangle + D_R(x', x), \quad \textnormal{ for all } x \in \textnormal{int}(\XCal) \textnormal{ and all } \hat{v} \in \br^n, 
\end{equation}
which reflects the intuition behind mirror descent. Indeed, it starts with $x \in \textnormal{int}(\XCal)$ and generates a \textit{feasible} point $x^+ = \PCal_R(x, \hat{v}, \eta)$ using the vector $\hat{v} \in \br^n$. In our algorithm, we propose to use the specific prox-mapping as follows, 
\begin{equation}\label{def:prox-map}
\PCal_R(x, \hat{v}, \eta) = \argmin_{x' \in \XCal} \ \eta\langle\hat{v}, x - x'\rangle + \tfrac{\eta \beta(t+1)}{2}\|x - x'\|^2 + D_R(x', x), 
\end{equation}
for all $x \in \textnormal{int}(\XCal)$ and all $\hat{v} \in \br^n$, where $R$ is a self-concordant barrier function for $\XCal$. We remark that the prox-mapping in Eq.~\eqref{def:prox-map} explicitly incorporates the problem structure information: the first term $\tfrac{\eta\beta(t+1)}{2}\|x - x'\|^2$ is squared Euclidean norm with the coefficient proportional to strong concavity parameter $\beta > 0$ and the second term is a Bregman divergence with respect to a self-concordant barrier function $R$. Such proximal mapping is crucial to establish the last-iterate convergence rate for our algorithm in the multi-agent setting; see Remark~\ref{remark:last-iterate}. In our algorithm, we employ the recursion $x^{t+1} \leftarrow \PCal_R(x^t, \hat{v}^t, \eta_t)$ in which $\eta_t > 0$ is the chosen step-size and $\hat{v}^t$ is a single-shot ellipsoidal estimator as mentioned before.  

The use of a mirror descent framework with specific prox-mapping in Eq.~\eqref{def:prox-map} is the main difference between our algorithm and the one developed in~\citet{Hazan-2014-Bandit}; indeed, their algorithm is developed based on the follow-the-regularized-leader (FTRL) framework with some different prox-mappings. Note that such seemingly minor modification is crucial to last-iterate convergence analysis when we extend our algorithm (cf. Algorithm~\ref{Alg:SA}) to the multi-agent setting; indeed, the multi-agent mirror descent can achieve the last-iterate convergence rate~\citep{Zhou-2021-Robust} but we are not aware of such results for multi-agent FTRL. In Section~\ref{sec:MA}, we will explain this point in more detail from a technical point of view.

\subsection{Algorithm and Regret Bound}
We consider the single-agent setting in which the adversary is limited to choosing smooth and strongly concave functions $u^1, u^2, \ldots, u^T: \XCal \mapsto \br$ and $\XCal$ is a closed, convex and compact set. With the components presented before, we summarize the scheme of our algorithm in Algorithm~\ref{Alg:SA}. 
\begin{algorithm}[!t]
\caption{Mirror Descent Self-Concordant Barrier Bandit Learning}\label{Alg:SA}
\begin{algorithmic}[1]
\STATE \textbf{Input:} step size $\eta_t > 0$, module $\beta > 0$ and barrier $R: \textnormal{int}(\XCal) \mapsto \br$.  
\STATE \textbf{Initialization:} $x^1 = \argmin_{x \in \XCal} R(x)$.  
\FOR{$t = 1, 2, \ldots$} 
\STATE set $A^t \leftarrow (\nabla^2 R(x^t) + \eta_t \beta(t+1) I_n)^{-1/2}$. \hfill \textsc{\# scaling matrix}
\STATE draw $z^t \sim \bs^n$. \hfill \textsc{\# perturbation direction} 
\STATE play $\hat{x}^t \leftarrow x^t + A^t z^t$. \hfill \textsc{\# choose action} 
\STATE receive $\hat{u}^t \leftarrow u^t(\hat{x}^t)$. \hfill \textsc{\# get reward} 
\STATE set $\hat{v}^t \leftarrow n \cdot \hat{u}^t(A^t)^{-1}z^t$. \hfill \textsc{\# estimate gradient} 
\STATE update $x^{t+1} \leftarrow \PCal_R(x^t, \hat{v}^t, \eta_t)$. \hfill \textsc{\# update iterate} 
\ENDFOR
\end{algorithmic}
\end{algorithm}
To facilitate the readers, we present the detailed scheme of~\citet[Algorithm~1]{Hazan-2014-Bandit} using our notations such that we can see the difference. More specifically, their algorithm generates the estimators $\{\hat{v}^\tau\}_{1 \leq \tau \leq t}$ using the same strategy as ours (cf. Step 4 - Step 8) but performs the following FTRL update at each iteration: 
\begin{equation*}
x^{t+1} \leftarrow \argmin_{x \in \br^n} \ \left\{\sum_{\tau = 1}^t \langle\hat{v}^\tau, x^\tau - x\rangle + \tfrac{\beta}{2}\|x - x^\tau\|^2\right\} + \tfrac{1}{\eta_t}R(x), 
\end{equation*}
Since $\XCal \neq \br^n$, the iterates generated by~\citet[Algorithm~1]{Hazan-2014-Bandit} are different from the ones generated by our algorithm and it remains unclear if the multi-agent variant of~\citet[Algorithm~1]{Hazan-2014-Bandit} can achieve the last-iterate convergence rate.

The following theorem shows that Algorithm~\ref{Alg:SA} can achieve the regret bound of $\tilde{O}(n\sqrt{T})$, which has matched the lower bound established in~\citet{Shamir-2013-Complexity}. 
\begin{theorem}\label{Thm:regret-optimal}
Suppose that the adversary is limited to choosing smooth and $\beta$-strongly concave functions $u^1, u^2, \ldots, u^T: \XCal \mapsto \br$. Each function $u^t$ is Lipschitz continuous and satisfies that $|u^t(x)| \leq L$ for all $x \in \XCal$. If $T \geq 1$ is fixed and the player employs Algorithm~\ref{Alg:SA} with the stepsize choice of $\eta_t = \frac{1}{2nL\sqrt{T}}$, we have 
\begin{equation*}
\textnormal{Reg}_T = \max_{x \in \XCal} \left\{\sum_{t=1}^T u^t(x) - \EE\left[\sum_{t=1}^T u^t(\hat{x}^t)\right]\right\} = \tilde{O}(n\sqrt{T}). 
\end{equation*}
\end{theorem}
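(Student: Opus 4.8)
The plan is to run the by-now-standard ``smoothing plus online mirror descent'' template for barrier-based bandit learning, but bookkept so that the \emph{eager} prox-mapping with the mixed Bregman divergence of Eq.~\eqref{def:prox-map-new} produces an exact telescoping once it meets strong concavity. Fix a comparator $x^\star\in\argmax_{x\in\XCal}\sum_{t=1}^T f_t(x)$, and for each $t$ write $A^t=(\nabla^2 R(x^t)+\eta_t\beta(t+1)I_n)^{-1/2}$, $R_t(\cdot)=R(\cdot)+\tfrac{\eta_t\beta(t+1)}{2}\|\cdot\|^2$ (so $\nabla^2 R_t(x^t)=(A^t)^{-2}$ and $R_t$ is again self-concordant), and let $\hat f_t(x)=\EE_{w\sim\BB^n}[f_t(x+A^t w)]$ be the ellipsoidal smoothing of $f_t$. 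By Lemma~\ref{Lemma:ellipsodial}, $\hat f_t$ is $\beta$-strongly concave, $\EE[\hat v^t\mid x^t]=\nabla\hat f_t(x^t)$, and $0\le f_t(x)-\hat f_t(x)\le\tfrac{\ell}{2}\sigma_{\max}(A^t)^2$ pointwise. First I would peel off the smoothing biases by writing, with the shrunk comparator $x_\epsilon=\bar x+\tfrac1{1+\epsilon}(x^\star-\bar x)\in\XCal_\epsilon$,
\begin{align*}
\sum_{t=1}^T f_t(x^\star)-\EE\Big[\sum_{t=1}^T f_t(x^t)\Big]
&=\underbrace{\sum_{t=1}^T\big(f_t(x^\star)-f_t(x_\epsilon)\big)}_{\le\,G\,\diam(\XCal)\,\epsilon\,T}
+\underbrace{\sum_{t=1}^T\big(f_t(x_\epsilon)-\hat f_t(x_\epsilon)\big)}_{\le\,\frac{\ell}{2}\sum_t\sigma_{\max}(A^t)^2}\\
&\quad+\EE\Big[\sum_{t=1}^T\big(\hat f_t(x_\epsilon)-\hat f_t(x^t)\big)\Big]
+\underbrace{\EE\Big[\sum_{t=1}^T\big(\hat f_t(x^t)-f_t(x^t)\big)\Big]}_{\le\,0}.
\end{align*}
Since $\sigma_{\max}(A^t)^2\le 1/(\eta_t\beta(t+1))$ and $\eta_t=1/(2nL\sqrt T)$, the second bracket is at most $\tfrac{n\ell L\sqrt T}{\beta}\sum_t\tfrac1{t+1}=\tilde{O}(\sqrt T)$; taking $\epsilon=1/T$ makes the first bracket $O(1)$ while keeping $\log(1+1/\epsilon)=O(\log T)$. (If one prefers to state the regret at the actually played points $\hat x^t$, the extra term $\EE[f_t(x^t)-f_t(\hat x^t)]$ is, after taking the expectation over $z^t\sim\bs^n$ so the linear part cancels and only an $O(\ell\|A^t z^t\|^2)$ remainder survives, again at most $\tfrac{\ell}{2}\sigma_{\max}(A^t)^2$, with the same $\tilde{O}(\sqrt T)$ sum.)

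The core is the ``smoothed regret'' $\sum_t\EE[\hat f_t(x_\epsilon)-\hat f_t(x^t)]$, for which I would run an eager-mirror-descent analysis with the time-varying regularizer $R_t$. Because $R$ is a barrier, the update $x^{t+1}=\PCal_R(x^t,\hat v^t,\eta_t)$ equals $\argmin_{x'\in\XCal}\{-\eta_t\langle\hat v^t,x'\rangle+D_{R_t}(x',x^t)\}$ and lies in $\textnormal{int}(\XCal)$, so its optimality condition is the exact identity $\nabla R_t(x^{t+1})=\nabla R_t(x^t)+\eta_t\hat v^t$; feeding this into Lemma~\ref{Lemma:three-point} applied to $R_t$ yields, for any $p\in\XCal$,
\begin{equation*}
\eta_t\langle\hat v^t,\,p-x^t\rangle=D_{R_t}(p,x^t)-D_{R_t}(p,x^{t+1})-D_{R_t}(x^{t+1},x^t)+\eta_t\langle\hat v^t,\,x^{t+1}-x^t\rangle.
\end{equation*}
The last two terms are where the scaling matrix pays off: since $A^t=(\nabla^2 R_t(x^t))^{-1/2}$, the dual local norm of the estimator under $R_t$ satisfies $\|\hat v^t\|_{R_t(x^t),\star}^2=(n\hat f^t)^2\|z^t\|^2=(n\hat f^t)^2\le n^2L^2$, hence $\eta_t\|\hat v^t\|_{R_t(x^t),\star}\le\tfrac12$; for this step size the eager mirror step stays inside the Dikin ellipsoid of $R_t$ at $x^t$, so Lemma~\ref{Lemma:SCB-Dikin} gives the quadratic lower bound $D_{R_t}(x^{t+1},x^t)\ge\tfrac12\|x^{t+1}-x^t\|_{R_t(x^t)}^2$, and Young's inequality bounds $\eta_t\langle\hat v^t,x^{t+1}-x^t\rangle-D_{R_t}(x^{t+1},x^t)\le\tfrac{\eta_t^2}{2}\|\hat v^t\|_{R_t(x^t),\star}^2\le\tfrac{\eta_t^2 n^2L^2}{2}$, contributing $O(nL\sqrt T)$ after rescaling by $\eta_t^{-1}$. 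Combining with $\beta$-strong concavity, $\EE[\hat f_t(x_\epsilon)-\hat f_t(x^t)\mid x^t]\le\EE[\langle\hat v^t,x_\epsilon-x^t\rangle\mid x^t]-\tfrac\beta2\|x_\epsilon-x^t\|^2$, and since $\eta_t$ is constant we may split $\tfrac1{\eta_t}(D_{R_t}(p,x^t)-D_{R_t}(p,x^{t+1}))=\tfrac1{\eta_t}(D_R(p,x^t)-D_R(p,x^{t+1}))+\tfrac{\beta(t+1)}{2}(\|p-x^t\|^2-\|p-x^{t+1}\|^2)$: the $D_R$ part telescopes to $\tfrac1{\eta_1}D_R(x_\epsilon,x^1)\le\tfrac\nu{\eta_1}\log(1+1/\epsilon)$ via Lemma~\ref{Lemma:SCB-upper-bound} (using $\nabla R(x^1)=0$ since $x^1=\bar x$), while an Abel summation collapses $\sum_t[\tfrac{\beta(t+1)}{2}(a_t-a_{t+1})-\tfrac\beta2 a_t]=\tfrac\beta2\sum_t(t\,a_t-(t+1)a_{t+1})\le\tfrac\beta2 a_1\le\tfrac\beta2\diam(\XCal)^2$, where $a_t=\EE\|x_\epsilon-x^t\|^2\ge0$. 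With $\eta_t=1/(2nL\sqrt T)$ this gives $\sum_t\EE[\hat f_t(x_\epsilon)-\hat f_t(x^t)]=O\big(nL\sqrt T\,\nu\log(1+1/\epsilon)\big)+\tfrac\beta2\diam(\XCal)^2=\tilde{O}(\sqrt T)$.

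I expect the main obstacle to be the second display and the telescoping around it, which rests on three intertwined points. (i) The identity $\|\hat v^t\|_{R_t(x^t),\star}=n|\hat f^t|$ is \emph{forced} by the choice $A^t=(\nabla^2 R_t(x^t))^{-1/2}$ and is what controls the variance of the ellipsoidal estimator in the correct (shifted self-concordant) dual norm — in any fixed Euclidean norm $\|\hat v^t\|$ would blow up near $\partial\XCal$. (ii) One must verify that the eager update actually lands inside the Dikin ellipsoid of $R_t$ at $x^t$ (so that Lemma~\ref{Lemma:SCB-Dikin}'s Hessian comparison, equivalently the quadratic lower bound on $D_{R_t}(x^{t+1},x^t)$, is available), and this is exactly the constraint that pins down $\eta_t=\Theta(1/(nL\sqrt T))$. (iii) The exact Abel cancellation between the growing coefficient $\eta_t\beta(t+1)$ baked into the mixed divergence and the $-\tfrac\beta2\|\cdot\|^2$ curvature supplied by strong concavity of $\hat f_t$ is precisely why the Euclidean penalty must carry a coefficient proportional to $\beta(t+1)$, and why both smoothness (for the smoothing biases and the second-order step estimate) and strong concavity (for this exact telescoping) are needed. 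Assembling the four pieces above then yields $\textnormal{Reg}_T=\tilde{O}(\sqrt T)$.
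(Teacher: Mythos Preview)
Your proposal is correct and tracks the paper's proof closely: both decompose the regret into smoothing errors plus a smoothed eager-mirror-descent term, and telescope the latter via the three-point identity, the estimator bound $\|A^t\hat v^t\|\le nL$, and $\beta$-strong concavity. The differences are only tactical. For the step term $\eta_t\langle\hat v^t,x^{t+1}-x^t\rangle$, the paper applies Lemma~\ref{Lemma:SC-key-estimate} (Newton decrement) to obtain $\|x^{t+1}-x^t\|_{x^t}\le 2\eta_t\|\hat v^t\|_{x^t,\star}$ and then H\"older, simply discarding $D_R(x^{t+1},x^t)\ge 0$; you instead retain $D_{R_t}(x^{t+1},x^t)$ and pair it with Young's inequality through a quadratic lower bound. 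That route also works, but two small points: your assertion ``the eager step stays in the Dikin ellipsoid'' is itself the content of Lemma~\ref{Lemma:SC-key-estimate} (not Lemma~\ref{Lemma:SCB-Dikin}), so you are implicitly invoking it anyway; and integrating the Hessian comparison of Lemma~\ref{Lemma:SCB-Dikin} along the segment $[x^t,x^{t+1}]$ yields $D_{R_t}(x^{t+1},x^t)\ge(\tfrac12-\tfrac{r}{6})\|x^{t+1}-x^t\|_{R_t(x^t)}^2$ with $r=\|x^{t+1}-x^t\|_{R_t(x^t)}\le 1$, i.e.\ a constant strictly below $\tfrac12$ --- immaterial for the order but worth stating correctly. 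For the barrier contribution $D_R(\cdot,x^1)$, the paper splits into cases according to whether the maximizer satisfies $\pi_{x^1}(p)\le 1-1/\sqrt T$ (and otherwise moves to a nearby point using Lipschitz continuity of $f_t$), whereas your single shrunk comparator $x_\epsilon$ with $\epsilon=1/T$ is a cleaner packaging of the same idea.
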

\begin{remark}
Theorem~\ref{Thm:regret-optimal} shows that Algorithm~\ref{Alg:SA} is a near-regret-optimal bandit learning algorithm when the adversary is limited to choosing smooth and strongly concave functions. Note that the regret is better than the best-known one of FKM in the same setting~\citep{Agarwal-2010-Optimal}, demonstrating another way where our result improves upon~\citet{Bravo-2018-Bandit} which uses FKM. However, the subproblem solving in Eq.~\eqref{def:prox-map} might become more difficult due to the use of a self-concordant barrier function; indeed, we do not have a closed-form solution even if the subproblem is unconstrained and convex. In contrast, the general prox-mapping can admit a closed-form solution for the proper regularizers, e.g., the quadratic regularizer for a box set and entropy regularizer for a simplex set. This sheds light on the trade-off between the computational efficiency of subproblem solving and regret minimization. This also highlights the practical advantage of multi-agent FKM over Algorithm~\ref{Alg:SA} in terms of computational time for certain application problems. 
\end{remark}
To prove Theorem~\ref{Thm:regret-optimal}, we present our descent inequality for the iterates generated by Algorithm~\ref{Alg:SA}. 
\begin{lemma}\label{Lemma:SA-main}
Suppose that the iterate $\{x^t\}_{t \geq 1}$ is generated by Algorithm~\ref{Alg:SA} and let each function $u^t$ satisfy that $|u^t(x)| \leq L$ for all $x \in \XCal$ and $0 < \eta_t \leq \frac{1}{2nL}$, we have
\begin{equation*}
D_R(p, x^{t+1}) + \tfrac{\eta_t \beta(t+1)}{2}\|x^{t+1} - p\|^2 \leq D_R(p, x^t) + \tfrac{\eta_t\beta(t+1)}{2}\|x^t - p\|^2 + 2\eta_t^2\|A^t\hat{v}^t\|^2 + \eta_t \langle\hat{v}^t, x^t - p\rangle.
\end{equation*}
where $p \in \XCal$ and the sequence $\{\eta_t\}_{t \geq 1}$ is assumed to be non-increasing. 
\end{lemma}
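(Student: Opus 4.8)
The plan is to run the usual eager mirror-descent/three-point-identity energy argument, but to carry it out in the \emph{local norm} induced by the scaling matrix $A^t$, using the self-concordance facts (Lemmas~\ref{Lemma:SCB-Dikin} and \ref{Lemma:SC-key-estimate}) to keep the prox step inside a Dikin ellipsoid. First I would record the variational characterization of the update: $x^{t+1}=\PCal_R(x^t,\hat v^t,\eta_t)$ minimizes over $\XCal$ the function $\phi_t(x')\mydefn\eta_t\langle\hat v^t,x^t-x'\rangle+\tfrac{\eta_t\beta(t+1)}{2}\|x^t-x'\|^2+D_R(x',x^t)$, whose gradient is $\nabla\phi_t(x')=-\eta_t\hat v^t-\eta_t\beta(t+1)(x^t-x')+\nabla R(x')-\nabla R(x^t)$ by \eqref{def:Bregman}. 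The optimality condition $\langle\nabla\phi_t(x^{t+1}),x'-x^{t+1}\rangle\ge 0$ for all $x'\in\XCal$, specialized to $x'=p$, yields a lower bound on $\langle\nabla R(x^{t+1})-\nabla R(x^t),\,p-x^{t+1}\rangle$ in terms of $\eta_t\langle\hat v^t,p-x^{t+1}\rangle$ and $\eta_t\beta(t+1)\langle x^{t+1}-x^t,p-x^{t+1}\rangle$.

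\textbf{Telescoping.} Next I would apply Lemma~\ref{Lemma:three-point} to the triple $(p,x^t,x^{t+1})$, together with the elementary identity $D_R(a,b)+D_R(b,a)=\langle\nabla R(a)-\nabla R(b),a-b\rangle$, to rewrite $D_R(p,x^{t+1})=D_R(p,x^t)-D_R(x^{t+1},x^t)+\langle\nabla R(x^{t+1})-\nabla R(x^t),x^{t+1}-p\rangle$. Substituting the optimality bound, splitting $\langle\hat v^t,x^{t+1}-p\rangle=\langle\hat v^t,x^{t+1}-x^t\rangle+\langle\hat v^t,x^t-p\rangle$, and expanding the Euclidean cross term via $\langle x^{t+1}-x^t,p-x^{t+1}\rangle=\tfrac12\big(\|x^t-p\|^2-\|x^{t+1}-p\|^2-\|x^{t+1}-x^t\|^2\big)$, one lands on
\[
D_R(p,x^{t+1})+\tfrac{\eta_t\beta(t+1)}{2}\|x^{t+1}-p\|^2\ \le\ D_R(p,x^t)+\tfrac{\eta_t\beta(t+1)}{2}\|x^t-p\|^2+\eta_t\langle\hat v^t,x^t-p\rangle+E_t,
\]
with residual $E_t\mydefn -D_R(x^{t+1},x^t)-\tfrac{\eta_t\beta(t+1)}{2}\|x^{t+1}-x^t\|^2+\eta_t\langle\hat v^t,x^{t+1}-x^t\rangle$. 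It remains to show $E_t\le 2\eta_t^2\|A^t\hat v^t\|^2$.

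\textbf{Controlling the residual $E_t$.} I would work with the local norm $\|\cdot\|_{(A^t)^{-2}}$ attached to $(A^t)^{-2}=\nabla^2 R(x^t)+\eta_t\beta(t+1)I_n$, so that $\|h\|_{(A^t)^{-2}}^2=\|h\|_{x^t}^2+\eta_t\beta(t+1)\|h\|^2$ and the dual norm is $\|\cdot\|_{(A^t)^2}$ with $\|\hat v^t\|_{(A^t)^2}=\|A^t\hat v^t\|$. The key observation is that $\phi_t$ is self-concordant (it is the $\nu$-self-concordant barrier $R$ plus a convex quadratic plus a linear term), that $\nabla^2\phi_t(x^t)=(A^t)^{-2}$, and that $\nabla\phi_t(x^t)=-\eta_t\hat v^t$; hence its Newton decrement at $x^t$ equals $\eta_t\|A^t\hat v^t\|$. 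Since $A^t\hat v^t=n\hat f^t A^t(A^t)^{-1}z^t=n\hat f^t z^t$ with $z^t$ a unit vector, we get $\|A^t\hat v^t\|=n|\hat f^t|\le nL$, so the Newton decrement is at most $\eta_t nL\le\tfrac12$ by the hypothesis $\eta_t\le\tfrac{1}{2nL}$, and Lemma~\ref{Lemma:SC-key-estimate} gives $\|x^{t+1}-x^t\|_{(A^t)^{-2}}\le 2\eta_t\|A^t\hat v^t\|\le 1$. In particular $\|x^{t+1}-x^t\|_{x^t}\le 1$, so $x^{t+1}$ lies in the Dikin ellipsoid $W(x^t)$; integrating the lower Hessian bound of Lemma~\ref{Lemma:SCB-Dikin}(2) along $[x^t,x^{t+1}]$ yields $D_R(x^{t+1},x^t)\ge\tfrac13\|x^{t+1}-x^t\|_{x^t}^2$, hence $D_R(x^{t+1},x^t)+\tfrac{\eta_t\beta(t+1)}{2}\|x^{t+1}-x^t\|^2\ge\tfrac13\|x^{t+1}-x^t\|_{(A^t)^{-2}}^2$. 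Bounding $\eta_t\langle\hat v^t,x^{t+1}-x^t\rangle$ by Cauchy--Schwarz in the dual pair and Young's inequality with the weight tuned to the coefficient $\tfrac13$ gives $\eta_t\langle\hat v^t,x^{t+1}-x^t\rangle\le\tfrac13\|x^{t+1}-x^t\|_{(A^t)^{-2}}^2+\tfrac34\eta_t^2\|A^t\hat v^t\|^2$; combining the last two displays yields $E_t\le\tfrac34\eta_t^2\|A^t\hat v^t\|^2\le 2\eta_t^2\|A^t\hat v^t\|^2$, which closes the argument.

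\textbf{Where the difficulty lies.} The only nontrivial point is the Dikin-ellipsoid containment $\|x^{t+1}-x^t\|_{(A^t)^{-2}}\le 1$ in the third step: everything downstream (the quadratic lower bound on $D_R(x^{t+1},x^t)$, and hence the cancellation against the inner-product term) relies on it. The containment in turn rests on recognizing the eager prox step as a constrained Newton-type step for the self-concordant function $\phi_t$ whose Hessian at $x^t$ is exactly $(A^t)^{-2}$ and whose Newton decrement $\eta_t\|A^t\hat v^t\|=\eta_t n|\hat f^t|$ is forced below $\tfrac12$ precisely by the choice $\eta_t\le\tfrac{1}{2nL}$ and the bound $|\hat f^t|\le L$. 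Once this identification is made, Lemma~\ref{Lemma:SC-key-estimate} does the rest.
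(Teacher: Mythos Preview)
Your proposal is correct and shares the paper's backbone: the three-point identity reduces matters to controlling the residual $E_t$, and the key self-concordance step is the identification of $\phi_t$ (your notation; the paper calls it $g$) as a self-concordant function with $\nabla^2\phi_t(x^t)=(A^t)^{-2}$ and Newton decrement $\eta_t\|A^t\hat v^t\|=\eta_t n|\hat f^t|\le\tfrac12$, whence Lemma~\ref{Lemma:SC-key-estimate} gives $\|x^{t+1}-x^t\|_{(A^t)^{-2}}\le 2\eta_t\|A^t\hat v^t\|$.

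Where you diverge is in how you finish off $E_t$. The paper simply \emph{drops} the nonnegative terms $D_R(x^{t+1},x^t)+\tfrac{\eta_t\beta(t+1)}{2}\|x^{t+1}-x^t\|^2$ and bounds $\eta_t\langle\hat v^t,x^{t+1}-x^t\rangle$ by Cauchy--Schwarz in the local dual pair together with the step bound just obtained:
\[
\eta_t\langle\hat v^t,x^{t+1}-x^t\rangle \le \eta_t\,\|A^t\hat v^t\|\cdot\|x^{t+1}-x^t\|_{(A^t)^{-2}} \le 2\eta_t^2\|A^t\hat v^t\|^2,
\]
which is exactly the constant in the lemma. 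You instead \emph{keep} the negative terms, invoke Lemma~\ref{Lemma:SCB-Dikin}(2) to extract a quadratic lower bound $D_R(x^{t+1},x^t)\ge\tfrac13\|x^{t+1}-x^t\|_{x^t}^2$, and then balance via Young's inequality. This is more machinery (an extra lemma plus an integration of the Hessian bound) for a sharper constant $\tfrac34$ that the statement does not require. Both routes are valid; the paper's is shorter and avoids the Dikin-ellipsoid detour altogether.
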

See the proofs of Lemma~\ref{Lemma:SA-main} and Theorem~\ref{Thm:regret-optimal} in Appendix~\ref{app:SA-main} and~\ref{app:regret-optimal}. 

\section{Multi-Agent Learning with Bandit Feedback}\label{sec:MA}
In this section, we consider multi-agent learning with bandit feedback and characterize the behavior of the system when each player employs the multi-agent variant of Algorithm~\ref{Alg:SA}. We first present some basic definitions and notations, discuss a few important examples of strongly monotone games and finally provide our multi-agent mirror descent self-concordant barrier bandit learning algorithm.   

\subsection{Basic Definition and Notation}
We focus on games played by a set of players $i \in \NCal = \{1, 2, \ldots, N\}$. At each round, each player selects an \textit{action} $x_i$ from a convex and compact subset $\XCal_i$ of a finite-dimensional vector space $\br^{n_i}$ and their reward is determined by the profile $x = (x_i; x_{-i}) = (x_1, x_2, \ldots, x_N)$ of the action of all players; subsequently, each player receives the \textit{reward}, and repeats this process. We denote $\|\cdot\|$ as the Euclidean norm (in the corresponding vector space): other norms can be easily accommodated in our framework (and different $\XCal_i$'s can have different norms), although we will not bother with these since we do not play with (and benefit from) complicated geometries.
\begin{definition}\label{def:CG}
A smooth and concave game is a tuple $\GCal = (\NCal, \XCal = \prod_{i=1}^N \XCal_i, \{u_i\}_{i=1}^N)$, where $\NCal$ is a set of $N$ players, $\XCal_i$ is a convex and compact subset of finite-dimensional vector space $\br^{n_i}$ representing the $i^\textnormal{th}$ player's action space, and $u_i: \XCal \mapsto \br$ is the $i^\textnormal{th}$ player's reward function satisfying that: (i) $u_i(x_i; x_{-i})$ is continuous in $x$ and concave in $x_i$ for all fixed $x_{-i}$; (ii) $u_i(x_i; x_{-i})$ is continuously differentiable in $x_i$ and the individual reward gradient $v_i(x)=\nabla_i u_i(x_i; x_{-i})$ is Lipschitz continuous.
\end{definition}
A commonly used solution concept for non-cooperative games is \textit{Nash equilibrium} (NE). For smooth and concave games considered in this paper, we are interested in the pure-strategy Nash equilibria. Indeed, for finite games, the mixed strategy NE is a probability distribution over the pure strategy NE. Our setting assumes continuous and convex action sets, where each action already lives in a continuum, and pursuing pure-strategy Nash equilibria is sufficient. 
\begin{definition}
An action profile $x^\star \in \XCal$ is called a (pure-strategy) Nash equilibrium of a game $G$ if it is resilient to unilateral deviations; that is, $u_i(x_i^\star; x_{-i}^\star) \geq u_i(x_i; x_{-i}^\star)$ for all $x_i \in \XCal_i$ and $i \in \NCal$. 
\end{definition}
It is known that every smooth and concave game admits at least one Nash equilibrium when all action sets are compact~\citep{Debreu-1952-Social} and Nash equilibria admit a variational characterization. We summarize this result in the following proposition. 
\begin{proposition}\label{prop:CG}
In a smooth and concave game $\GCal$, the profile $x^\star \in \XCal$ is a Nash equilibrium if and only if $(x_i-x_i^\star)^\top v_i(x^\star) \leq 0$ for all $x_i \in \XCal_i$ and $i \in \NCal$.
\end{proposition}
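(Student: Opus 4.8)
The plan is to prove the two implications separately, using convexity of each action set $\XCal_i$ for the ``only if'' direction and concavity of each payoff $u_i(\cdot; x_{-i})$ for the ``if'' direction; no deep machinery is needed beyond Definition~\ref{def:CG}.

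\textbf{($\Rightarrow$) Nash implies the variational inequality.} Suppose $x^\star$ is a Nash equilibrium. Fix a player $i \in \NCal$ and an arbitrary $x_i \in \XCal_i$. Since $\XCal_i$ is convex, the segment $x_i^\star + t(x_i - x_i^\star)$ lies in $\XCal_i$ for all $t \in [0,1]$, so I would introduce the scalar function $\phi(t) \mydefn u_i\big(x_i^\star + t(x_i - x_i^\star);\, x_{-i}^\star\big)$ on $[0,1]$. The Nash condition $u_i(x_i^\star; x_{-i}^\star) \geq u_i(x_i'; x_{-i}^\star)$ for all $x_i' \in \XCal_i$ gives $\phi(t) \leq \phi(0)$ for every $t \in [0,1]$, hence $\phi'(0^+) \leq 0$. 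By the differentiability of $u_i$ in $x_i$ (part (ii) of Definition~\ref{def:CG}) and the chain rule, $\phi'(0^+) = (x_i - x_i^\star)^\top \nabla_i u_i(x_i^\star; x_{-i}^\star) = (x_i - x_i^\star)^\top v_i(x^\star)$, which yields $(x_i - x_i^\star)^\top v_i(x^\star) \leq 0$. Since $i$ and $x_i$ were arbitrary, the claimed inequality holds for all $x_i \in \XCal_i$ and $i \in \NCal$.

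\textbf{($\Leftarrow$) The variational inequality implies Nash.} Suppose $(x_i - x_i^\star)^\top v_i(x^\star) \leq 0$ for all $x_i \in \XCal_i$ and $i \in \NCal$. Fix $i$ and $x_i \in \XCal_i$. Because $u_i(\cdot; x_{-i}^\star)$ is concave and differentiable on $\XCal_i$, the gradient inequality gives
\begin{equation*}
u_i(x_i; x_{-i}^\star) \leq u_i(x_i^\star; x_{-i}^\star) + (x_i - x_i^\star)^\top \nabla_i u_i(x_i^\star; x_{-i}^\star) = u_i(x_i^\star; x_{-i}^\star) + (x_i - x_i^\star)^\top v_i(x^\star) \leq u_i(x_i^\star; x_{-i}^\star),
\end{equation*}
where the last step uses the hypothesis. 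As this holds for every $x_i \in \XCal_i$ and every $i \in \NCal$, $x^\star$ is resilient to unilateral deviations, i.e., a Nash equilibrium.

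I do not anticipate a genuine obstacle here; the only point requiring mild care is the forward direction, where one must justify passing from the inequality $\phi(t)\le\phi(0)$ on $[0,1]$ to the sign of the one-sided derivative $\phi'(0^+)$ and then identifying this derivative with $(x_i - x_i^\star)^\top v_i(x^\star)$ via the smoothness assumed in Definition~\ref{def:CG}. Both directions are elementary once convexity of $\XCal_i$ and concavity/differentiability of $u_i(\cdot;x_{-i})$ are invoked.
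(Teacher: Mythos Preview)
Your proof is correct and is the standard argument for this classical variational characterization. The paper itself does not give a proof of this proposition; it explicitly omits it and refers the reader to \citet{Mertikopoulos-2019-Learning}. Your two directions---using the one-sided directional derivative at the maximizer for ($\Rightarrow$) and the first-order concavity inequality for ($\Leftarrow$)---are exactly the standard route one finds in that literature, so there is nothing to compare beyond noting that you have supplied the details the paper elected to skip.
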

Proposition~\ref{prop:CG} shows that Nash equilibria of a smooth and concave game can be characterized as the solution set of variational inequality (VI), so the existence results follow from the seminar results {\red in} the VI literature~\citep{Facchinei-2007-Finite}. 

\subsection{Strongly Monotone Games}
The study of (strongly) monotone games dates to~\citet{Rosen-1965-Existence}, with many subsequent developments; see, e.g.,~\citet{Facchinei-2007-Finite}. Specifically,~\citet{Rosen-1965-Existence} considered a class of games that satisfy the \textit{diagonal strict concavity} (DSC) condition and prove that they admit a unique Nash equilibrium. Further work in this vein appeared in~\citet{Sandholm-2015-Population} and~\citet{Sorin-2016-Finite}, where games that satisfy DSC are referred to as ``contractive" and ``dissipative". These conditions are equivalent to strict monotonicity in the context of convex analysis~\citep{Bauschke-2011-Convex}. To avoid confusion, we provide the formal definition of strongly monotone games.   
\begin{definition}\label{def:SMS}
A smooth and concave game $\GCal$ is called $(\beta, \{\lambda_i\}_{i \in \NCal})$-strongly monotone\footnote{In general, we assume that $\beta > 0$ and $\lambda_i \geq 0$ for all $i \in \NCal$. } if we have $\sum_{i \in \NCal} \lambda_i\langle x'_i-x_i, v_i(x') - v_i(x)\rangle \leq -\beta\|x - x'\|^2$ for any $x, x' \in \XCal$. 
\end{definition}
The notion of (strong) monotonicity, which will play a crucial role in the subsequent analysis, is not necessarily theoretically artificial but encompasses a very rich class of games. We present four typical examples which satisfy the conditions in Definition~\ref{def:SMS} (see Appendix~\ref{app:examples} for proof details and the selection of $\beta$ and $\{\lambda_i\}_{i \in \NCal}$ in our examples). 
\begin{example}[Cournot Competition]\label{Example:CC}
In the Cournot oligopoly model, there is a finite set $\NCal = \{1, 2, \ldots, N\}$ of firms, each supplying the market with a quantity $x_i \in [0, B_i]$ of some good (or service) up to the firm's production capacity, given here by a positive scalar $B_i > 0$. This good is then priced as a decreasing function $P(x)$ of the total supply to the market, as determined by each firm's production; for concreteness, we focus on the linear model $P(x) = a - b\sum_{i \in \NCal} x_i$ where $a$ and $b$ are positive constants. In this model, the reward of the $i^\textnormal{th}$ firm (considered here as a player) is given by
\begin{equation*}
u_i(x) = x_i P(x) - c_i x_i, 
\end{equation*}
where $c_i$ denotes the marginal production cost of the $i^\textnormal{th}$ firm, i.e., as the income obtained by producing $x_i$ units of the good in question minus the corresponding production cost. Letting $\XCal_i = [0, B_i]$ denote the space of possible production values for each firm, we can show that the game $\GCal(\NCal, \XCal, u)$ is $(\beta, \{\lambda_i\}_{i \in \NCal})$-strongly monotone with $\beta = b$ and $\lambda_i = 1$ for all $i \in \NCal$. Note that $a$ and $b$ are unknown to all the players and each player only knows its own $c_i$ and observes the market price, from which only the bandit feedback of its reward function can be recovered. 
\end{example}
\begin{example}[Strongly Concave Potential Game]\label{Example:SCPG}
A game $\GCal = (\NCal, \XCal = \prod_{i=1}^N \XCal_i, \{u_i\}_{i=1}^N)$ is called a potential game~\citep{Monderer-1996-Potential, Sandholm-2001-Potential} if there exists a potential function $f: \XCal \mapsto \br$ such that 
\begin{equation*}
u_i(x_i; x_{-i}) - u_i(\tilde{x}_i; x_{-i}) = f(x_i; x_{-i}) - f(\tilde{x}_i; x_{-i}), 
\end{equation*}
for all $i \in \NCal$, all $x \in \XCal$ and all $\tilde{x}_i \in \XCal_i$. If the potential function $f$ is $\beta$-strongly concave, we can show from some basic results in the context of convex analysis~\citep{Bauschke-2011-Convex} that the game $\GCal(\NCal, \XCal, u)$ is $(\beta, \{\lambda_i\}_{i \in \NCal})$-strongly monotone with $\lambda_i = 1$ for all $i \in \NCal$. 
\end{example}
\begin{example}[Kelly Auctions] \label{Example:KA}
Consider a service provider with a number of splittable resources $s \in \SCal = \{1, 2, \ldots, S\}$ (representing, e.g., bandwidth, server time, ad space on a website, etc.). These resources can be leased to a set of $N$ bidders (players) who can place monetary bids $x_{is} \geq 0$ for the utilization of each resource $s \in \SCal$ up to each player's total budget $B_i$, i.e., $\sum_{s \in \SCal} x_{is} \leq B_i$. A popular and widely used mechanism to allocate resources in this case is the Kelly mechanism~\citep{Kelly-1998-Rate} whereby resources are allocated proportionally to each player's bid, i.e., the $i^\textnormal{th}$ player gets
\begin{equation*}
\rho_{is} = \frac{q_s x_{is}}{d_s + \sum_{j \in \NCal} x_{js}} 
\end{equation*}
units of the $s^\textnormal{th}$ resource (in the above, $q_s$ denotes the available units of said resource and $d_s \geq 0$ is the ``entry barrier" for bidding on it). A simple model for the reward of the $i^\textnormal{th}$ player is given by
\begin{equation*}
u_i(x_i; x_{-i}) = \sum_{s \in \SCal} (g_i\rho_{is} - x_{is}), 
\end{equation*}
where $g_i$ denotes the player's marginal gain from acquiring a unit slice of resources. If we write $\XCal_i = \{x_i \in \br_+^S: \sum_{s \in \SCal} x_{is} \leq b_i\}$ for the entire space of possible bids of the $i^\textnormal{th}$ player on the set of resources $\SCal$, we can show that the game $\GCal(\NCal, \XCal, u)$ is $(\beta, \{\lambda_i\}_{i \in \NCal})$-strongly monotone with $\beta = \frac{\min_{s \in \SCal}\{q_s d_s\}}{(\sum_{s \in \SCal} d_s + \sum_{i \in \NCal} B_i)^3}$ and $\lambda_i = \frac{1}{g_i}$ for all $i \in \NCal$. Although $q_s$ and $d_s$ are known to all bidders, each bidder's reward gradient is not computable: other bidders' bids are not observable and hence only $\rho_{is}$ is observable. Consequently, this is a strongly monotone game with only bandit feedback available.
\end{example}

\begin{example}[Retailer Pricing Games]\label{Example:pricing}
We start with the single-retailer multi-product pricing setting. A retailer (with unlimited inventory) sells $n$ products over a horizon $T$ and makes pricing decisions $p^t = (p_1^t, p_2^t, \ldots, p_n^t)$ at each period $t$ in order to maximize overall revenue across all products. Those products may be substitutes or complements to each other, and consequently, each product's price not only affects its own demand, but also that of the other products. We focus on the linear model $D^t: \br^d \rightarrow \br^d$, where each component $D_i^t(p) = \sum_{j=1}^n a_{ij}^t p_j + b_i^t$ gives the demand for the $i^\textnormal{th}$ product under the decision $p$. Note that $a_{ij}^t <0 $ if the $j^\textnormal{th}$ product is a complement to the $i^\textnormal{th}$ product and $a_{ij}^t > 0$ if the $j^\textnormal{th}$ product is a substitute to the $i^\textnormal{th}$ product; further, $b_i^t > 0$ and all prices are bounded (i.e. lie in some sets $[0, B_i]$). By definition, we have $a_{ii}^t < 0$. The total revenue is $u^t(p) = \sum_{i=1}^n p_i^t D_i^t(p)$, which is $\beta$-strongly concave in $p$ if $A^t + (A^t)^\top \leq -\beta I_n$ for all $t \geq 1$, where $A^t = (a_{ij}^t)_{1 \leq i, j \leq n} \in \br^{n \times n}$ is not necessarily symmetrical. Note that the retailer observes the realized demand $D^t(p^t)$ at the end of period $t$, which only gives the bandit feedback. Interestingly, in this single-setting, without any additional assumptions,
applying our algorithm yields $\tilde{O}(n\sqrt{T})$ regret bound, which appears to be novel. 

We also consider the multi-retailer pricing setting, and for simplicity, we focus on the single product setting. Consider the set of retailers $\NCal = \{1, 2, \ldots, N\}$, each selling a different product that may either be a substitute or complement to products sold by other retailers. The $i^\textnormal{th}$ retailer's action is the price $p_i$ for its product, which is assumed to lie in some closed interval. For the $i^\textnormal{th}$ retailer, the demand for its product depends on the joint price vector of all retailers (as similar products act as substitutes and hence influence this product's demand): $D_i(p) =  \sum_{j=1}^N a_{ij}p_j + b_i$. Here, $a_{ij} <0 $ if the $j^\textnormal{th}$ retailer's product is a complement to the $i^\textnormal{th}$ retailer's product and $a_{ij} > 0$ if the $j^\textnormal{th}$ retailer's product is a substitute to the $i^\textnormal{th}$ retailer's product; further, $b_i > 0$ and $a_{ii} < 0$ for all $i$. The $i^\textnormal{th}$ retailer's reward function is its revenue $u_i(p_i, p_{-i}) = p_i D_i(p)$. Th resulting game is $(\beta, \{\lambda_i\}_{i \in \NCal})$-strongly monotone with $\lambda_i = 1$ for all $i \in \NCal$ if $A + A^\top \leq -\beta I_n$ where the matrix $A = (a_{ij})_{1 \leq i, j \leq n} \in \br^{n \times n}$ is not necessarily symmetrical. 

Note that in both cases, a sufficient condition for strong convexity and/or strong monotonicity is when the matrix $A$ is strictly diagonally dominant. This condition has a interpretable meaning: each product's price impact on its own demand is larger than that of all other products combined. Unless products have very little differentiation, this assumption easily holds in practice.
\end{example}
There are also many other application problems that can be cast into the framework of strongly monotone games~\citep{Orda-1993-Competitive, Cesa-2006-Prediction, Sandholm-2015-Population, Sorin-2016-Finite, Mertikopoulos-2017-Distributed}. Typical examples include strongly-convex-strongly-concave zero-sum two-player games, congestion games~\citep{Mertikopoulos-2019-Learning}, wireless network games~\citep{Weeraddana-2012-Weighted, Tan-2014-Wireless, Zhou-2021-Robust} and various online decision-making problems~\citep{Cesa-2006-Prediction}. With that being said, the notion of strong monotonicity has been widely used for characterizing real application problems. To make this argument more convincing, we consider the zero-sum two-player strongly monotone games which are simply strongly-convex-strongly-concave min-max problems. This problem class is well motivated and has covered many application problems arising from operations research~\citep{Facchinei-2007-Finite}. A class of online decision-making problems with strongly convex loss functions is also well motivated and well studied in the literature (see, e.g.,~\citet{Hazan-2016-Introduction}).

From an economic point of view, one appealing feature of strongly monotone games is that the last-iterate convergence can be achieved by some learning algorithms~\citep{Zhou-2021-Robust}, which is more natural than the time-average-iterate convergence~\citep{Fudenberg-1998-Theory, Cesa-2006-Prediction}. The finite-time convergence rate is derived in terms of the distance between $x^t$ and $x^\star$, where $x^t$ is the realized action and $x^\star$ is the unique Nash equilibrium (under convex and compact action sets). In view of all this (and unless explicitly stated otherwise), we will focus on strongly monotone games throughout this paper. 

\subsection{Algorithm}
In multi-agent learning with bandit feedback, at each round $t = 1, 2, \ldots, T$, every (randomized) decision maker $i \in \NCal$ selects an action $x_i^t \in \XCal_i$. The reward $u_i(x^t)$ is realized after all decision makers have chosen their actions. In multi-agent bandit learning, the feedback is limited to the value of the reward function at the point that each player has chosen, i.e., $u_i(x^t)$. 

We propose a simple multi-agent bandit learning algorithm where each player chooses her action using Algorithm~\ref{Alg:SA}. In other words, it is a straightforward extension of Algorithm~\ref{Alg:SA} from single-agent setting to multi-agent setting. Notably, our new algorithm differs from multi-agent FKM (cf.~\citet[Algorithm~1]{Bravo-2018-Bandit}) in two aspects: ellipsoidal SPSA estimator v.s. spherical SPSA estimator and self-concordant Bregman divergence v.s. general Bregman divergence. We discuss these two crucial components before summarizing the scheme of our multi-agent bandit learning algorithm.

\paragraph{Single-shot ellipsoidal SPSA estimator.}~\citet{Bravo-2018-Bandit} have recently extended the spherical estimator in Eq.~\eqref{def:spherical} from single-agent setting to multi-agent setting; indeed, their approach posits that all the players use a simultaneous perturbation stochastic approximation (SPSA) approach~\citep{Spall-1997-One} to estimate their individual reward gradients $\hat{v}_i$ based off a single reward function evaluation. More specifically, let $u_i: \br^n \mapsto \br$ be a reward function, $\delta > 0$ and the query directions $z_i \sim \bs^{n_i}$ be drawn independently across players where $\bs^{n_i}$ is a $n_i$-dimensional unit sphere, a single-shot spherical SPSA estimator is defined by 
\begin{equation}\label{def:spherical-SPSA}
\hat{v}_i = \frac{n_i}{\delta} \cdot u_i(x_i + \delta z_i; x_{-i} + \delta z_{-i})z_i.  
\end{equation}
This estimator is an unbiased prediction for a partial gradient of a smoothed version of $u_i$; indeed, we have $\EE[\hat{v}_i] = \nabla_i \widehat{u}_i(x)$ where $\hat{u}_i(x) = \EE_{w_i \sim \BB^{n_i}}\EE_{z_{-i} \sim \Pi_{j \neq i} \bs^{n_j}}[u_i(x_i+\delta w_i; x_{-i}+\delta z_{-i})]$ where $\BB^{n_i}$ is a $n_i$-dimensional unit ball. We can easily see the bias-variance dilemma here: as $\delta \rightarrow 0^+$, $\hat{v}_i$ becomes more accurate since $\|\nabla_i \hat{u}_i(x) - \nabla_i u_i(x)\| = O(\delta)$, while the variability of $\hat{v}_i$ grows unbounded since the second moment of $\hat{v}_i$ grows as $O(\delta^{-2})$. By carefully choosing $\delta > 0$,~\citet{Bravo-2018-Bandit} provided the best-known last-iterate convergence rate of $\tilde{O}(\sqrt[3]{\frac{n^2}{T}})$ which almost matches the lower bound established in~\citet{Shamir-2013-Complexity}. However, a gap still remains and they believe it can be closed by using a more sophisticated single-shot estimator.

We now provide a single-shot ellipsoidal SPSA estimator by extending the estimator in Eq.~\eqref{def:ellipsoidal} to multi-agent setting. More specifically, we let $\hat{x}_i = x_i + A_i z_i$ and define
\begin{equation}\label{def:ellipsoidal-SPSA}
\hat{v}_i = n_i \cdot u_i(\hat{x}_i; \hat{x}_{-i})(A_i)^{-1} z_i.  
\end{equation}
The following lemma summarizes some results for the estimator in Eq.~\eqref{def:ellipsoidal-SPSA} and we provide the detailed proof in Appendix~\ref{app:ellipsodial-SPSA} for the sake of completeness.  
\begin{lemma}\label{Lemma:ellipsodial-SPSA}
Suppose that $u_i$ is a concave function and $A_i \in \br^{n_i \times n_i}$ is an invertible matrix for each $i \in \NCal$, we define the smoothed version of $u_i$ with respect to $A_i$ by $\hat{u}_i(x) = \EE_{w_i \sim \BB^{n_i}}\EE_{\z_{-i} \sim \Pi_{j \neq i} \bs^{n_j}}[u_i(x_i+A_iw_i; \hat{x}_{-i})]$ where $\bs^{n_i}$ is a $n_i$-dimensional unit sphere, $\BB^{n_i}$ is a $n_i$-dimensional unit ball and $\hat{x}_i = x_i + A_i z_i$ for all $i \in \NCal$. Then, the following statements hold true: 
\begin{enumerate}
\item $\nabla_i \hat{u}_i(x) = \EE[n_i \cdot u_i(\hat{x}_i; \hat{x}_{-i})(A_i)^{-1}z_i \mid x_1, x_2, \ldots, x_N]$. 
\item If $v_i$ is $\ell_i$-Lipschitz continuous and we let $\sigma_{\max}(A)$ be the largest eigenvalue of $A$, we have $\|\nabla_i \hat{u}_i(x) - \nabla_i u_i(x)\| \leq \ell_i\sqrt{\sum_{j \in \NCal} (\sigma_{\max}(A_j))^2}$. 
\end{enumerate}
\end{lemma}
\begin{remark}
Lemma~\ref{Lemma:ellipsodial-SPSA} generalizes~\citet[Lemma~C.1]{Bravo-2018-Bandit} which is proved for a single-shot spherical SPSA estimator. It shows that $\nabla_i \hat{u}_i(x) = \EE[\hat{v}_i \mid x]$ in which $\hat{u}_i$ is a smoothed version of $u_i$. In our algorithm, the construction of $A_i$ relies on the self-concordant barrier function $R_i$ for $\XCal_i$. 
\end{remark}
\paragraph{Mirror descent.} The general prox-mapping in Eq.~\eqref{def:prox-map-old} has been used in~\citet{Bravo-2018-Bandit} to construct their multi-agent bandit learning algorithm, which achieves the suboptimal regret minimization and last-iterate convergence. In contrast, our multi-agent bandit learning algorithm updates $x_i^{t+1}$ using the specific prox-mapping in Eq.~\eqref{def:prox-map} that can exploit the problem structure and the rule is given by
\begin{equation}\label{def:prox-map-MA}
\PCal_{R_i}(x_i, \hat{v}_i, \eta, \lambda_i, \beta) = \argmin_{x'_i \in \XCal_i} \ \eta\langle\hat{v}_i, x_i - x'_i\rangle + \tfrac{\eta \beta(t+1)}{2\lambda_i}\|x_i - x'_i\|^2 + D_{R_i}(x'_i, x_i),  
\end{equation}
for all $x_i \in \textnormal{int}(\XCal_i)$ and all $\hat{v}_i \in \br^{n_i}$, where $R_i$ is a self-concordant barrier function for $\XCal_i$. In our algorithm, we employ the recursion $x_i^{t+1} \leftarrow \PCal_{R_i}(x_i^t, \hat{v}_i^t, \eta_t)$  in which $\eta_t > 0$ is the chosen step-size and $\hat{v}_i^t$ is a single-shot ellipsoidal SPSA estimator as mentioned before.  
\begin{remark}\label{remark:last-iterate}
The use of mirror descent framework with specific prox-mapping in Eq.~\eqref{def:prox-map-MA} is crucial to last-iterate convergence analysis in the next subsection. Roughly speaking, it allows us to directly bound the term $\sum_{i \in \NCal} \|x_i^T - x_i^\star\|^2$ and leads to a near-optimal last-iterate convergence rate. Moreover, we can see from the proof of Theorem~\ref{Thm:last-iterate-perfect} that the time-dependent coefficient of quadratic term in Eq.~\eqref{def:prox-map-MA} plays a key role in bounding $\sum_{i \in \mathcal{N}} \|x_i^T - x_i^\star\|^2$ by $\tilde{O}(\sqrt{\frac{n^2}{T}})$. This is consistent with the fact that only multi-agent mirror descent is proven to achieve last-iterate convergence rate in strongly monotone games~\citep{Zhou-2021-Robust} while no such results hold for multi-agent FTRL.
\end{remark}
\begin{algorithm}[!t]
\caption{Multi-Agent Mirror Descent Self-Concordant Barrier Bandit Learning}\label{Alg:MA}
\begin{algorithmic}[1]
\STATE \textbf{Input:} step size $\eta_t > 0$, weight $\lambda_i > 0$, module $\beta > 0$, and barrier $R_i: \textnormal{int}(\XCal_i) \mapsto \br$.  
\STATE \textbf{Initialization:} $x_i^1 = \argmin_{x_i \in \XCal_i} R_i(x_i)$.  
\FOR{$t = 1, 2, \ldots$} 
\FOR{$i \in \NCal$} 
\STATE set $A_i^t \leftarrow (\nabla^2 R_i(x_i^t) + \tfrac{\eta_t \beta(t+1)}{\lambda_i} I_{n_i})^{-1/2}$.  \hfill \textsc{\# scaling matrix} 
\STATE draw $z_i^t \sim \bs^{n_i}$.  \hfill \textsc{\# perturbation direction} 
\STATE play $\hat{x}_i^t \leftarrow x_i^t + A_i^t z_i^t$. \hfill \textsc{\# choose action}
\ENDFOR 
\STATE receive $\hat{u}_i^t \leftarrow u_i(\hat{x}^t)$ for all $i \in \NCal$. \hfill \textsc{\# get reward}
\FOR{$i \in \NCal$} 
\STATE set $\hat{v}_i^t \leftarrow n_i\hat{u}_i^t (A_i^t)^{-1} z_i^t$. \hfill \textsc{\# estimate gradient}
\STATE update $x_i^{t+1} \leftarrow \PCal_{R_i}(x_i^t, \hat{v}_i^t, \eta_t, \lambda_i, \beta)$. \hfill \textsc{\# update iterate}
\ENDFOR 
\ENDFOR
\end{algorithmic}
\end{algorithm}

\subsection{Finite-Time Last-Iterate Convergence Rate}
We consider the multi-agent setting in a strongly monotone game in which each player's reward function $u_i: \XCal \mapsto \br$ satisfies that $|u_i(x)| \leq L$ for all $x \in \XCal$ and $\XCal$ is a closed, convex and compact set. With the components presented before, we summarize the scheme of our algorithm in Algorithm~\ref{Alg:MA}. 

In addition to regret minimization, the convergence to Nash equilibria serves as an important criterion for measuring the performance of multi-agent learning algorithms. The following theorem shows that Algorithm~\ref{Alg:MA} can achieve the last-iterate convergence to a unique Nash equilibrium at a rate of $\tilde{O}(nT^{-1/2})$. This has improved the rate of $\tilde{O}(n^{2/3}T^{-1/3})$ achieved by multi-agent FKM~\citep{Bravo-2018-Bandit} and matched the lower bound established in~\citet{Shamir-2013-Complexity}. 

\begin{theorem}\label{Thm:last-iterate-perfect}
Suppose that $x^\star \in \XCal$ is a unique Nash equilibrium of a smooth and $(\beta, \{\lambda_i\}_{i \in \NCal})$-strongly monotone game. Each function $u_i$ satisfies that $|u_i(x)| \leq L$ for all $x \in \XCal$. If $T \geq 1$ is fixed and each player employs Algorithm~\ref{Alg:MA} with the stepsize choice of $\eta_t = \tfrac{1}{2nL\sqrt{t}}$, we have 
\begin{equation*}
\EE\left[\sum_{i \in \NCal}\|\hat{x}_i^T - x_i^\star\|^2\right] = \tilde{O}(nT^{-1/2}). 
\end{equation*}
\end{theorem}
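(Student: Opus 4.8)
The plan is a one-shot potential-function argument: apply the single-agent descent inequality (Lemma~\ref{Lemma:SA-main}) to each player separately, aggregate with the weights $\lambda_i$, pass to conditional expectations, and telescope. Two preliminary reductions come first. Since the quantity in question involves the played points $\hat x_i^T=x_i^T+A_i^Tz_i^T$ and $\|A_i^T\|_2^2\le\lambda_i/(\eta_T\beta(T+1))=O(1/\sqrt T)$ by construction of the scaling matrix, the inequality $\|\hat x_i^T-x_i^\star\|^2\le2\|x_i^T-x_i^\star\|^2+2\|A_i^Tz_i^T\|^2$ with $\|z_i^T\|=1$ reduces matters to bounding $\EE[\sum_i\|x_i^T-x_i^\star\|^2]$. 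Next, because $x^\star$ may lie on $\partial\XCal_i$ for some $i$ (so $D_{R_i}(x_i^\star,\cdot)\equiv+\infty$), I would work with the interior surrogate $p_{i,\epsilon}:=x_i^\star+\epsilon(\bar x_i-x_i^\star)$, where $\bar x_i=\argmin_{x_i\in\XCal_i}R_i(x_i)=x_i^1$ and $\epsilon:=1/T$. By Lemma~\ref{Lemma:SCB-upper-bound}, $p_{i,\epsilon}$ lies in a scaled copy of $\XCal_i$, so $D_{R_i}(p_{i,\epsilon},x_i^1)=D_{R_i}(p_{i,\epsilon},\bar x_i)\le R_i(p_{i,\epsilon})-R_i(\bar x_i)=O(\nu_i\log T)$; and $\|p_{i,\epsilon}-x_i^\star\|=\epsilon\|\bar x_i-x_i^\star\|=O(1/T)$, so it suffices to bound $\EE[\sum_i\|x_i^T-p_{i,\epsilon}\|^2]$.

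For the core estimate, apply Lemma~\ref{Lemma:SA-main} to player $i$ with ``$\beta$'' replaced by $\beta/\lambda_i$ (matching $A_i^t$ and the prox-map of Algorithm~\ref{Alg:MA}) and comparator $p_{i,\epsilon}$, multiply by $\lambda_i$, and sum over $i\in\NCal$. With $\Psi^t:=\sum_i\lambda_iD_{R_i}(p_{i,\epsilon},x_i^t)$ and $\rho^t:=\sum_i\|x_i^t-p_{i,\epsilon}\|^2$ this yields
\[
\Psi^{t+1}+\tfrac{\eta_t\beta(t+1)}{2}\rho^{t+1}\le\Psi^t+\tfrac{\eta_t\beta(t+1)}{2}\rho^t+2\eta_t^2\textstyle\sum_i\lambda_i\|A_i^t\hat v_i^t\|^2+\eta_t\textstyle\sum_i\lambda_i\langle\hat v_i^t,x_i^t-p_{i,\epsilon}\rangle.
\]
Conditioning on the history $\FCal_t$ (which fixes the pivots $x^t$ but not the perturbations $z^t$): since $A_i^t\hat v_i^t=n_i\hat u_i^tz_i^t$ with $|\hat u_i^t|=|u_i(\hat x^t)|\le L$ and $\|z_i^t\|=1$, the variance term is at most $2\eta_t^2n^2L^2\sum_i\lambda_i=O(1/t)$ for the stated step size; and by Lemma~\ref{Lemma:ellipsodial-SPSA}, $\EE[\hat v_i^t\mid\FCal_t]=\nabla_i\hat u_i(x^t)$, so the last term equals $\eta_t\sum_i\lambda_i\langle\nabla_i\hat u_i(x^t),x_i^t-p_{i,\epsilon}\rangle$.

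I would then split $\langle\nabla_i\hat u_i(x^t),\cdot\rangle$ into an exact-gradient part $\sum_i\lambda_i\langle v_i(x^t),x_i^t-p_{i,\epsilon}\rangle$ and a bias part $\sum_i\lambda_i\langle\nabla_i\hat u_i(x^t)-v_i(x^t),x_i^t-p_{i,\epsilon}\rangle$. For the exact part, strong monotonicity (Definition~\ref{def:SMS}) gives $\sum_i\lambda_i\langle x_i^t-p_{i,\epsilon},v_i(x^t)-v_i(p_\epsilon)\rangle\le-\beta\rho^t$, while the residual $\sum_i\lambda_i\langle x_i^t-p_{i,\epsilon},v_i(p_\epsilon)\rangle$ is controlled using the variational characterization of $x^\star$ (Proposition~\ref{prop:CG}, which kills $\langle x_i^t-x_i^\star,v_i(x^\star)\rangle$), the Lipschitz continuity of $v_i$, and $\|p_{i,\epsilon}-x_i^\star\|=O(\epsilon)$, giving an $O(\epsilon)$ term; thus the exact part is $\le-\beta\rho^t+O(\epsilon)$. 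For the bias part, Lemma~\ref{Lemma:ellipsodial-SPSA} and $\|A_j^t\|_2^2\le\lambda_j/(\eta_t\beta(t+1))$ give $\|\nabla_i\hat u_i(x^t)-v_i(x^t)\|\le\ell_i\sqrt{(\sum_j\lambda_j)/(\eta_t\beta(t+1))}$, and combining with $\|x_i^t-p_{i,\epsilon}\|\le\diam(\XCal_i)$ and Young's inequality splits it into $\tfrac{\beta}{2}\rho^t$ plus a remainder of order $1/(\eta_t(t+1))=O(1/\sqrt t)$, so that after the outer $\eta_t$ it contributes $O(1/t)$. Collecting everything and taking full expectations ($\psi_t:=\EE[\Psi^t]$, $r_t:=\EE[\rho^t]$) yields
\[
\psi_{t+1}+\tfrac{\eta_t\beta(t+1)}{2}r_{t+1}\le\psi_t+\tfrac{\eta_t\beta t}{2}r_t+O(1/t)+O(\eta_t\epsilon),
\]
where the $-\tfrac{\eta_t\beta}{2}r_t$ drift from the exact part has been used to turn $\tfrac{\eta_t\beta(t+1)}{2}$ into $\tfrac{\eta_t\beta t}{2}$ and half of it was spent absorbing the bias.

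The final step is the telescoping. The coefficient on $r_{t+1}$ equals $b_{t+1}$ with $b_t:=\tfrac{\eta_{t-1}\beta t}{2}$, while $\tfrac{\eta_t\beta t}{2}\le b_t$ because $\{\eta_t\}$ is nonincreasing; hence $S_t:=\psi_t+b_tr_t$ obeys $S_{t+1}\le S_t+O(1/t)+O(\eta_t\epsilon)$, and since $\psi_1=O(\sum_i\nu_i\log T)$, $r_1=O(1)$ and $\sum_t\eta_t\epsilon=O(1/\sqrt T)$, summing gives $S_T=O(\log T)$ (constants depending on $N$, the $\nu_i,\ell_i$, $\beta$, $nL$ and $\diam(\XCal_i)$). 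Because $\psi_T\ge0$ and $b_T=\Theta(\sqrt T)$, we get $r_T=\EE[\sum_i\|x_i^T-p_{i,\epsilon}\|^2]=O(\log T/\sqrt T)$, and the two reductions of the first paragraph upgrade this to $\EE[\sum_i\|\hat x_i^T-x_i^\star\|^2]=\tilde O(1/\sqrt T)$. I expect the main obstacle to be the bookkeeping around these last two displays: one must check that the growing weight $\tfrac{\eta_t\beta(t+1)}{2}$ is exactly offset by the strong-monotonicity drift down to a quantity $\le b_t$, and that every error source — sampling variance $O(1/t)$, gradient bias after the Young split $O(1/t)$, and surrogate-comparator slack $O(\eta_t/T)$ — is summable to $\tilde O(1)$ rather than to a power of $T$; this is precisely why the schedule $\eta_t=\Theta(1/\sqrt t)$ and the choice $\epsilon=\Theta(1/T)$ are made, and why half of the monotonicity drift is recycled against the bias.
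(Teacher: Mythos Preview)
Your proposal is correct and follows essentially the same route as the paper: apply the per-player descent inequality (the paper packages your aggregation step as Lemma~\ref{Lemma:MA-perfect-main}), take conditional expectations, split the inner product into exact gradient plus bias via Lemma~\ref{Lemma:ellipsodial-SPSA}, use Young's inequality so that the bias contributes $\tfrac{\beta}{2}\rho^t$ plus an $O(1/\sqrt t)$ term, use $\beta$-strong monotonicity to produce the $-\beta\rho^t$ drift, and telescope. The only cosmetic differences are that the paper (i) keeps the residual term $\sum_i\lambda_i\langle v_i(p),x_i^t-p_i\rangle$ intact through the summation and kills it at the end with the Nash inequality, whereas you bound it by $O(\epsilon)$ per step and sum; and (ii) handles the boundary issue by a case split on $\pi_{x_i^1}(x_i^\star)\lessgtr 1-1/\sqrt T$ rather than your uniform surrogate $p_{i,\epsilon}$ with $\epsilon=1/T$ --- both devices yield the same $O(\nu_i\log T)$ control on $D_{R_i}(p_i,x_i^1)$ and an $O(1/\sqrt T)$ slack.
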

\begin{remark}
Theorem~\ref{Thm:last-iterate-perfect} shows that Algorithm~\ref{Alg:MA} attains a near-optimal rate of last-iterate convergence in a smooth and strongly monotone game. It provides the first doubly optimal bandit learning algorithm, in that it achieves (up to log factors) both optimal regret in a single-agent setting and optimal last-iterate convergence rate in a multi-agent setting. In contrast, it remains unclear if the multi-agent FTRL~\citep{Hazan-2014-Bandit} can achieve the last-iterate convergence rate. 
\end{remark}
To prove Theorem~\ref{Thm:last-iterate-perfect}, we present our descent inequality for the iterates generated by Algorithm~\ref{Alg:MA}. 
\begin{lemma}\label{Lemma:MA-perfect-main}
Suppose that the iterate $\{x^t\}_{t \geq 1}$ is generated by Algorithm~\ref{Alg:MA} and let each function $u_i$ satisfy that $|u_i(x)| \leq L$ for all $x \in \XCal$ and $0 < \eta_t \leq \frac{1}{2nL}$, we have
\begin{eqnarray*}
\lefteqn{\sum_{i \in \NCal} \lambda_i D_{R_i}(p_i, x_i^{t+1}) + \tfrac{\eta_{t+1}\beta(t+1)}{2}\left(\sum_{i \in \NCal} \|x_i^{t+1} - p_i\|^2\right)} \\
& \leq & \sum_{i \in \NCal} \lambda_i D_{R_i}(p_i, x_i^t) + \tfrac{\eta_t\beta(t+1)}{2}\left(\sum_{i \in \NCal} \|x_i^t - p_i\|^2\right) + 2\eta_t^2\left(\sum_{i \in \NCal} \lambda_i \|A_i^t\hat{v}_i^t\|^2\right) + \eta_t\left(\sum_{i \in \NCal} \lambda_i \langle\hat{v}_i^t, x_i^t - p_i\rangle\right).
\end{eqnarray*}
where $p_i \in \XCal_i$ and the sequence $\{\eta_t\}_{t \geq 1}$ is assumed to be non-increasing. 
\end{lemma}
See the proofs of Lemma~\ref{Lemma:MA-perfect-main} and Theorem~\ref{Thm:last-iterate-perfect} in Appendix~\ref{app:MA-perfect-main} and~\ref{app:last-iterate-perfect}. We also provide the last-iterate convergence rate results under the imperfect bandit feedback in Appendix~\ref{app:imperfect}.  
\begin{remark}
Currently, there is no computational ``free lunch": there is a trade-off between the cost of solving a subproblem and the total iteration counts. Compared to the multi-agent FKM, our algorithm requires less iteration counts but more time to solve each subproblem in practice. One reason is that the FKM subproblem is classic and has received detailed treatment in the literature, where the contribution of~\citet{Condat-2016-Fast} is dedicated to solving this problem in a very satisfactory manner. On the contrary, the subproblem in our optimal-regret algorithm is new and arises from a design that primarily targets statistical efficiency (i.e., regret). As such, we do not yet know whether more computationally efficient methods and/or solvers exist for this subproblem. However, we argue that the inferior performance of our algorithm in terms of the solution time does not weaken our contribution from a theoretical viewpoint. Indeed, the goal of this paper is to design an \textit{optimal} no-regret learning algorithms in which the regret bound and the last-iterate convergence rate are only measure by \textit{iteration count}. Can we develop a practically fast algorithm with the same strong theoretical guarantee? We leave this for future work and believe this would be a worthwhile research problem that deserves a specialized treatment on its own.
\end{remark}
\begin{remark}
As we have mentioned before, one appealing feature of strongly monotone games is that the finite-time convergence rate can be derived in terms of $\sum_{i \in \NCal}\|x_i^t - x_i^\star\|^2$~\citep{Zhou-2021-Robust} (under convex and compact action sets). On the other hand, $x^t$ possibly converges to a limit cycle or repeatedly hit the boundary in monotone games~\citep{Daskalakis-2018-Training, Mertikopoulos-2018-Cycles} despite the time-average iterate $\frac{1}{t}(\sum_{k=1}^t x^k)$ converges. More recently,~\citet{Cai-2022-Finite} and~\citet{Gorbunov-2022-Last} analyzed the optimistic gradient method (which is a no-regret learning algorithm) in monotone games (under convex and compact action sets) and proved that last-iterate convergence to Nash equilibria at an optimal rate of $O(T^{-1/2})$ in terms of \textbf{a gap function}. However, we are not aware of any no-regret learning algorithm that can achieve the last-iterate convergence rate in terms of $\sum_{i \in \mathcal{N}} \|x_i^t - x_i^\star\|^2$ for \textit{general monotone games}, and it seems impossible to attain such a rate without the quadratic growth of strong monotonicity. Indeed,~\citet{Tseng-1995-Linear} proved that the extragradient (EG) method and other projection-type methods achieved the linear convergence in terms of $\sum_{i \in \mathcal{N}} \|x_i^t - x_i^\star\|^2$ for monotone games \textit{but} under the additional projection-type error bound condition (see Eq. (5) in his paper). In addition, the focus of our paper is to investigate the finite-time last-iterate convergence of no-regret \textit{bandit} learning algorithms while the EG method is gradient-based and has been shown not \textit{no-regret}~\citep[Appendix A.3]{Golowich-2020-Tight}. 

Moreover, analyzing the last-iterate convergence rate of no-regret bandit learning algorithms is completely out of reach of the existing techniques of~\citet{Cai-2022-Finite} and~\citet{Gorbunov-2022-Last}. Can we propose the no-regret bandit learning algorithms to handle monotone games and prove the last-iterate convergence rate in terms of a gap function? We leave the answers to future work.
\end{remark}

\section{Numerical Experiments}\label{sec:experiments}
In this section, we conduct the numerical experiments using Cournot competition and Kelly auction. The baseline approach is multi-agent FKM (cf.~\citet[Algorithm~1]{Bravo-2018-Bandit}) and the platform is MATLAB R2021b on a MacBook Pro with an Intel Core i9 2.4GHz and 16GB memory. The code is provided in \url{https://github.com/tyDLin/Doubly_Optimal_Bandits} for reproducibility. 
\begin{figure*}[!t]
\centering
\includegraphics[width=0.45\textwidth]{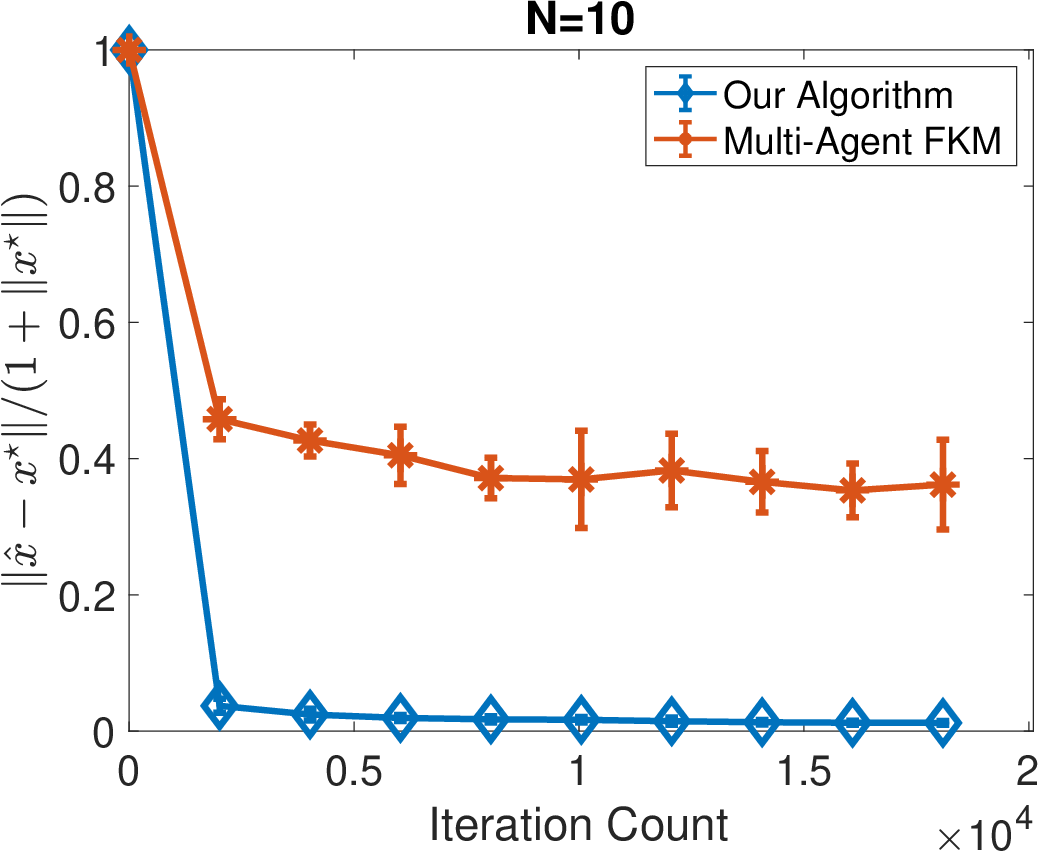}
\includegraphics[width=0.45\textwidth]{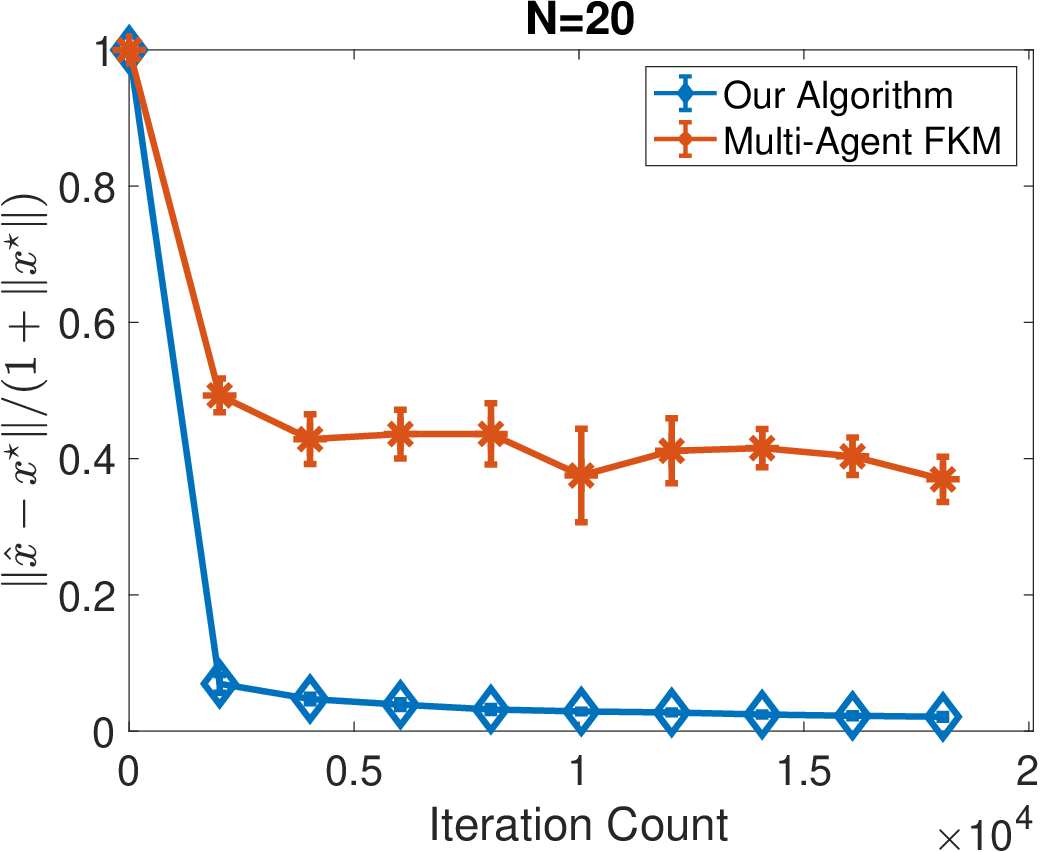}
\includegraphics[width=0.45\textwidth]{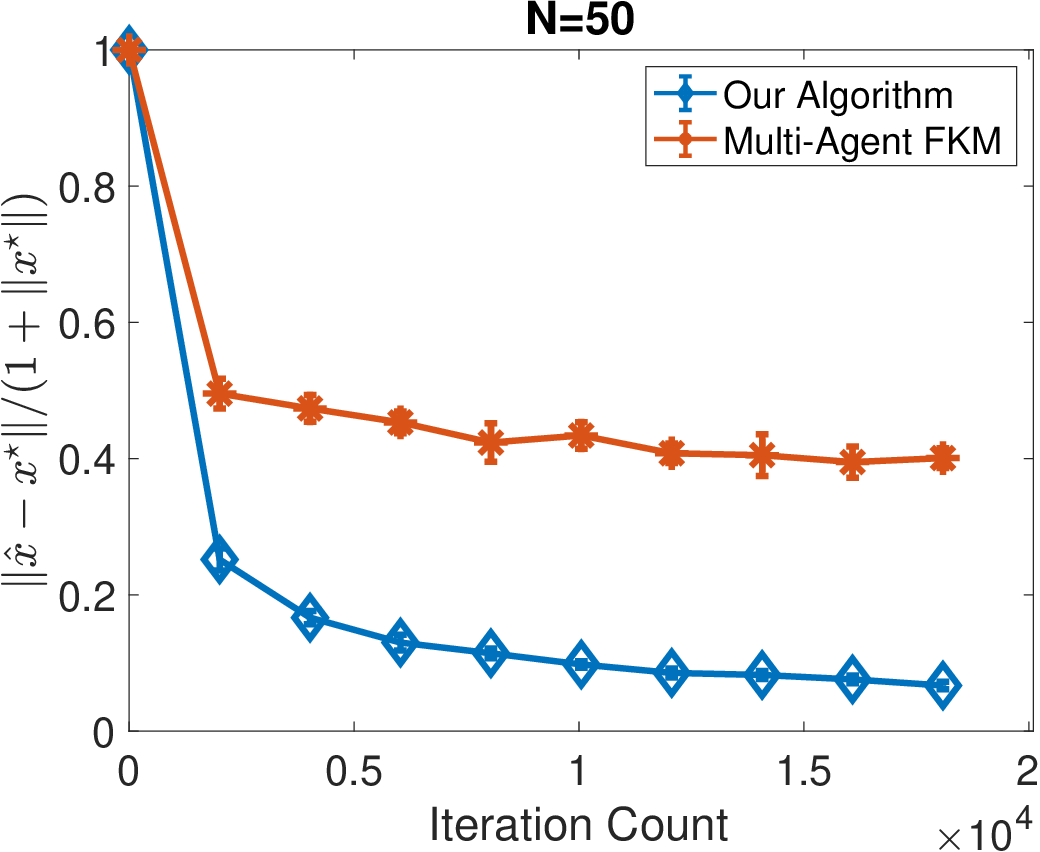}
\includegraphics[width=0.45\textwidth]{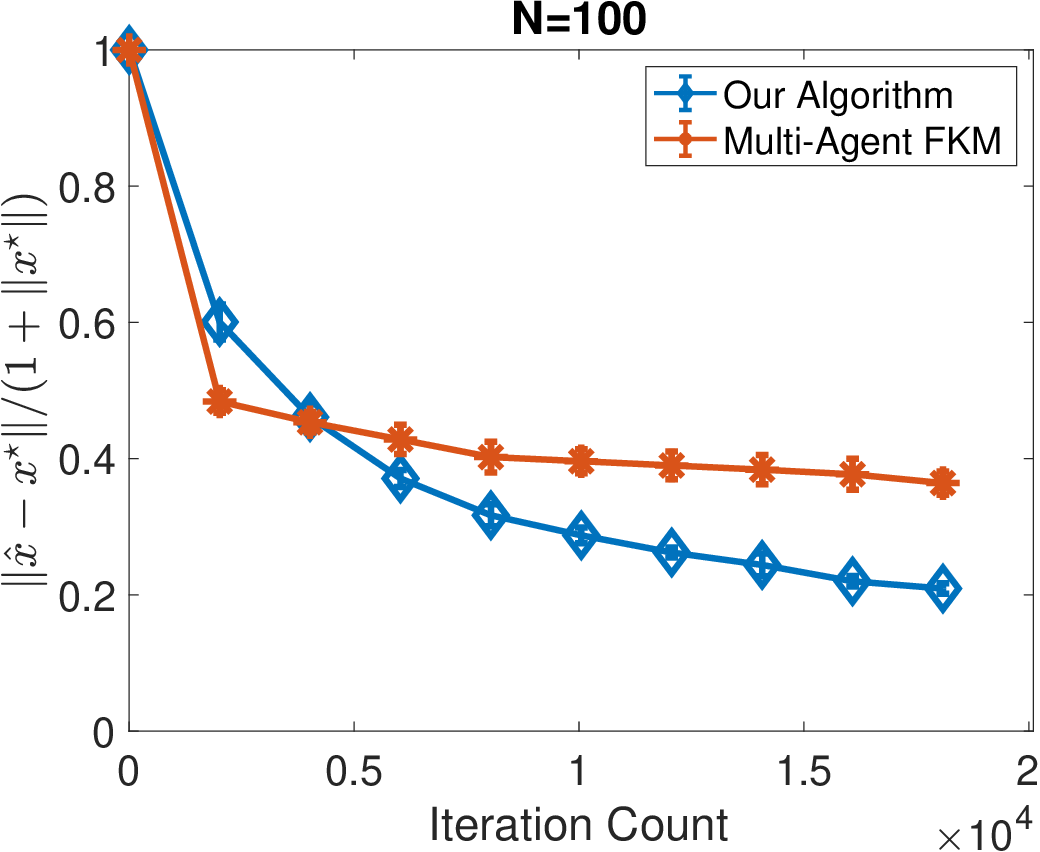}
\caption{Performance of both algorithms for solving Counot competition. The number of independent trials is 10 and $(a, b) = (10, 0.05)$ is fixed. The numerical results are presented as error v.s. iteration count.}\label{Fig:CC}
\end{figure*}
\subsection{Cournot Competition}
We fix $B_i = 1$ for all $i \in \NCal$ and evaluate the algorithms by varying $N \in \{10, 20, 50, 100\}$, $a \in \{10, 20\}$ and $b \in \{0.05, 0.1\}$. For each $i \in \NCal$, we draw $c_i \in \br$ independently from an uniform distribution on the interval $[0, 1]$. The game $\GCal$ is $(\beta, \{\lambda_i\}_{i \in \NCal})$-strongly monotone with $\beta = b$ and $\lambda_i = 1$ for all $i \in \NCal$.

For both Algorithm~\ref{Alg:MA} and multi-agent FKM, we consider the theoretically-correct choices of step sizes without fine-tuning. Indeed, we set $\lambda_i = 1$ for all $i \in \NCal$ and $\beta = b$ for Algorithm~\ref{Alg:MA} due to the structure of Cournot competition. Since $\XCal_i = [0, B_i]$, we set $R_i(x_i) = - \log(x_i) - \log(B_i - x_i)$. According to Theorem~\ref{Thm:last-iterate-perfect}, we set $\eta_t = \frac{1}{2N\sqrt{t}}$. For all $i \in \NCal$, we set $r_i = p_i = \frac{1}{2}B_i$ in multi-agent FKM. We also set $\delta_t = \min\{\min_{i \in \NCal} r_i, \frac{1}{t^{1/3}}\}$ and $\gamma_t = \tfrac{1}{3\beta t}$ according to~\citet[Theorem~5.2]{Bravo-2018-Bandit}. Moreover, it is well known that the Cournot competition is a strongly concave potential game and can be solved by minimizing a quadratic function as follows. 
\begin{equation}\label{prb:CC}
\min_{(x_1, x_2, \ldots, x_N) \in \br^N} \ f(x_1, x_2, \ldots, x_N) = \tfrac{b}{2}\left(\sum_{i \in \NCal} x_i\right)^2 + \tfrac{b}{2}\left(\sum_{i \in \NCal} x_i^2\right) + \sum_{i \in \NCal} c_i x_i - a\left(\sum_{i \in \NCal} x_i\right).  
\end{equation}
The evaluation metric is $\frac{\|\hat{x}^t - x^\star\|}{1 + \|x^\star\|}$ where $\hat{x}^t$ is generated by the algorithms and $x^\star$ is an approximate Nash equilibrium with high accuracy (we employ \textsc{quadprog} in Matlab for minimizing $f$ in Eq.~\eqref{prb:CC}). This point is a benchmark for evaluating the quality of the solution obtained by the algorithms.
\begin{table}
\centering\caption{The solution quality on Cournot competition after 20000 iterations.}\label{Tab:CC}
\begin{tabular}{|c||c|c|} \hline
$(N, a, b)$ 		& Multi-Agent FKM 					& Our Algorithm \\ \hhline{===} 
(10, 10, 0.05) 		& 3.3e-01	$\pm$ 4.5e-02 			& 1.2e-02	$\pm$ 3.6e-03 \\
(10, 10, 0.10) 		& 2.6e-01	$\pm$ 4.9e-02 			& 1.3e-02	$\pm$ 2.9e-03 \\
(10, 20, 0.05) 	& 4.5e-01 	$\pm$ 3.3e-02 		& 9.0e-03	$\pm$ 4.3e-03 \\ 
(10, 20, 0.10) 		& 3.4e-01	$\pm$ 4.7e-02 			& 7.7e-03 	$\pm$ 2.3e-03 \\ \hhline{===}   
(20, 10, 0.05) 	& 3.6e-01	$\pm$ 2.5e-02 			& 2.1e-02 	$\pm$ 1.4e-03 \\  
(20, 10, 0.10) 		& 2.7e-01 	$\pm$ 3.0e-02 		& 2.4e-02	$\pm$ 2.3e-03 \\ 
(20, 20, 0.05) 	& 4.9e-01 	$\pm$ 3.3e-02 		& 1.2e-02	$\pm$ 1.9e-03 \\ 
(20, 20, 0.10) 	& 3.8e-01	$\pm$ 2.1e-02 			& 1.2e-02	$\pm$ 2.0e-03 \\ \hhline{===}    
(50, 10, 0.05) 	& 3.8e-01	$\pm$ 2.4e-02 			& 7.0e-02	$\pm$ 6.5e-03 \\ 
(50, 10, 0.10) 		& 2.3e-01 $\pm$ 2.0e-02 			& 9.2e-02	$\pm$ 5.1e-03 \\
(50, 20, 0.05) 	& 5.0e-01 $\pm$ 3.3e-02 		& 2.8e-02	$\pm$ 1.5e-03 \\ 
(50, 20, 0.10) 	& 3.9e-01 $\pm$ 3.0e-02 		& 3.2e-02	$\pm$ 1.4e-03 \\ \hhline{===}    
(100, 10, 0.05) 	& 3.7e-01 	$\pm$ 2.3e-02 			& 2.0e-01 	$\pm$ 9.6e-03 \\
(100, 10, 0.10) 	& 1.2e-01 	$\pm$ 5.4e-03 		& 1.4e-01 	$\pm$ 1.6e-02 \\
(100, 20, 0.05) 	& 5.4e-01 $\pm$ 2.2e-02 			& 6.8e-02	$\pm$ 2.4e-03 \\
(100, 20, 0.10) 	& 3.7e-01 	$\pm$ 2.7e-02 			& 9.5e-02	$\pm$ 5.0e-03 \\ \hline
\end{tabular}
\end{table}

\paragraph{Experimental results.} Fixing $(a, b) = (10, 0.05)$, we investigate the convergence behavior of both algorithms with a varying number of players, i.e., $N \in \{10, 20, 50, 100\}$. Figure~\ref{Fig:CC} indicates that our algorithm outperforms multi-agent FKM as it exhibits a faster convergence to Nash equilibrium in terms of iteration count. Note that the reason why the multi-agent FKM can be better within the 1000 iterations when $N = 100$ is that its stepsize is \textit{reasonably larger} than that of our algorithm. Indeed, the stepsizes are $\frac{1}{2N\sqrt{t}}$ for our algorithm and $\frac{1}{3\beta t}$ for multi-agent FKM. If $N$ (the dimension) is large and $\beta$ (strongly monotone parameter) is estimated accurately, the stepsize of multi-agent FKM will be larger than that of our algorithm but not explode problematically within the 1000 iterations. This will guarantee that the multi-agent FKM outperforms our algorithm in practice. We also present the numerical results for all $(N, a, b)$ in Table~\ref{Tab:CC}. 
\begin{remark}
It is worth remarking that the multi-agent FKM can outperform our algorithm within the 1000 iterations when the dimension is \textit{large} and the estimation of strongly monotone parameters is \textit{accurate}. The Cournot competition has a relatively simple structure so that the estimation of $\beta$ is exact. Thus, the multi-agent FKM outperforms our algorithm within the 1000 iterations for the case of $N = 100$ (sufficiently large). In contrast, we can see that the multi-agent FKM does not perform well for solving Kelly auction \textit{even} with large dimension. This is because the complex structure of Kelly auction makes the accurate estimation of $\beta$ difficult. 
\end{remark}

\subsection{Kelly Auction}
We fix $B_i = 1$ for all $i \in \NCal$ and evaluate the algorithms by varying $N \in \{10, 20, 50, 100\}$, $S \in \{2, 5\}$ and $\bar{d} \in \{0.5, 1\}$. For each $i \in \NCal$, we draw $g_i \in \br$ independently of a uniform distribution at the interval $[0, 1]$. For each $s \in \SCal$, we draw $q_s$ independently from an uniform distribution on the interval $[0, 1]$ and $d_s$ independently from an uniform distribution on the interval $[0, \bar{d}]$. The game $\GCal$ is $(\beta, \{\lambda_i\}_{i \in \NCal})$-strongly monotone with $\beta = \frac{\min_{s \in \SCal}\{q_s d_s\}}{(\sum_{s \in \SCal} d_s + \sum_{i \in \NCal} B_i)^3}$ and $\lambda_i = \frac{1}{g_i}$ for all $i \in \NCal$. 
\begin{figure*}[!t]
\centering
\includegraphics[width=0.45\textwidth]{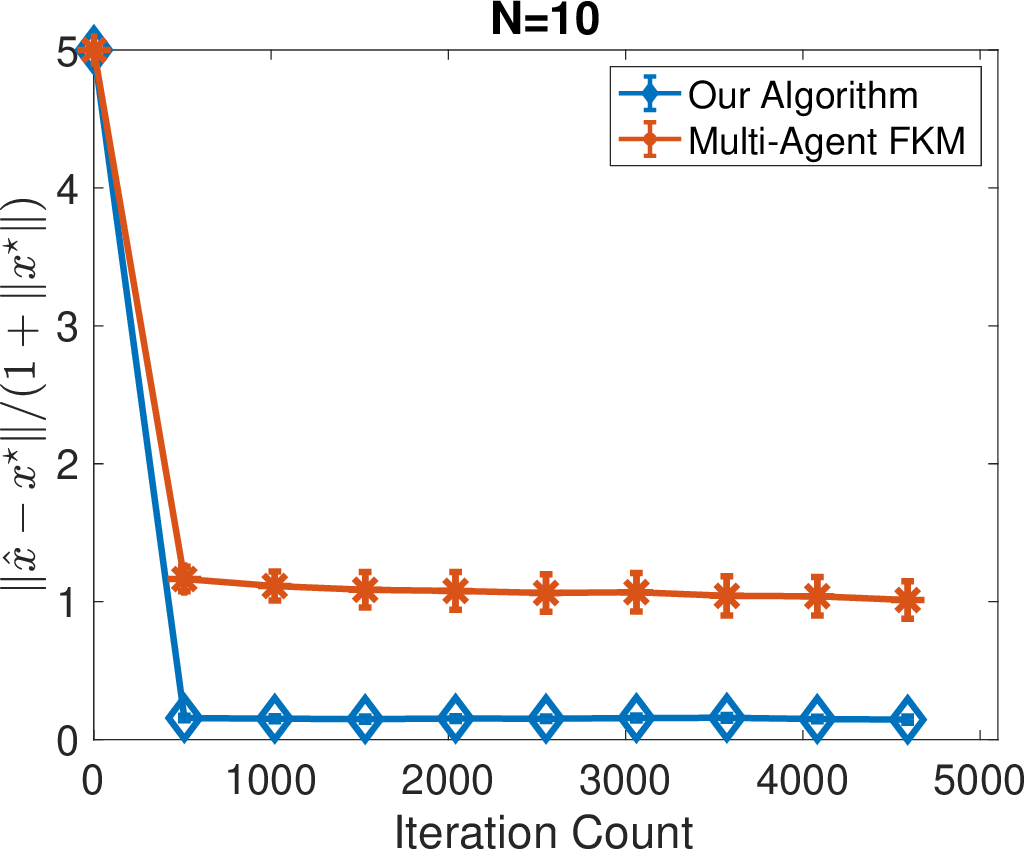}
\includegraphics[width=0.45\textwidth]{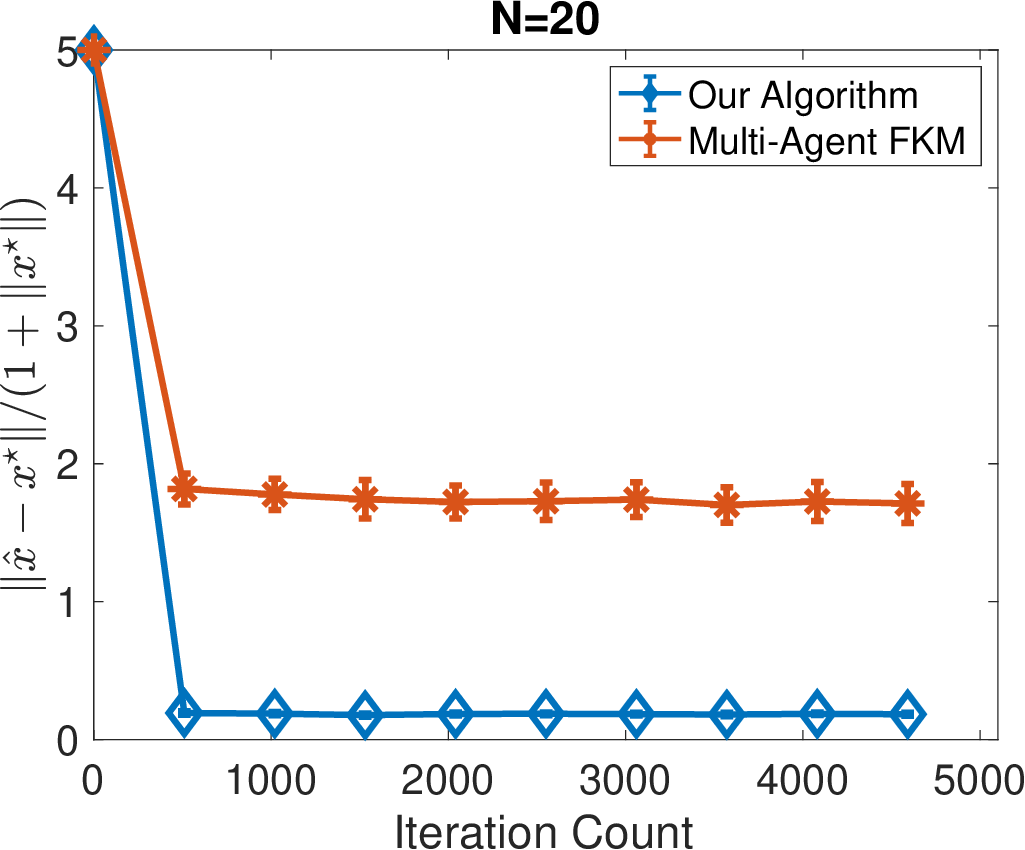}
\includegraphics[width=0.45\textwidth]{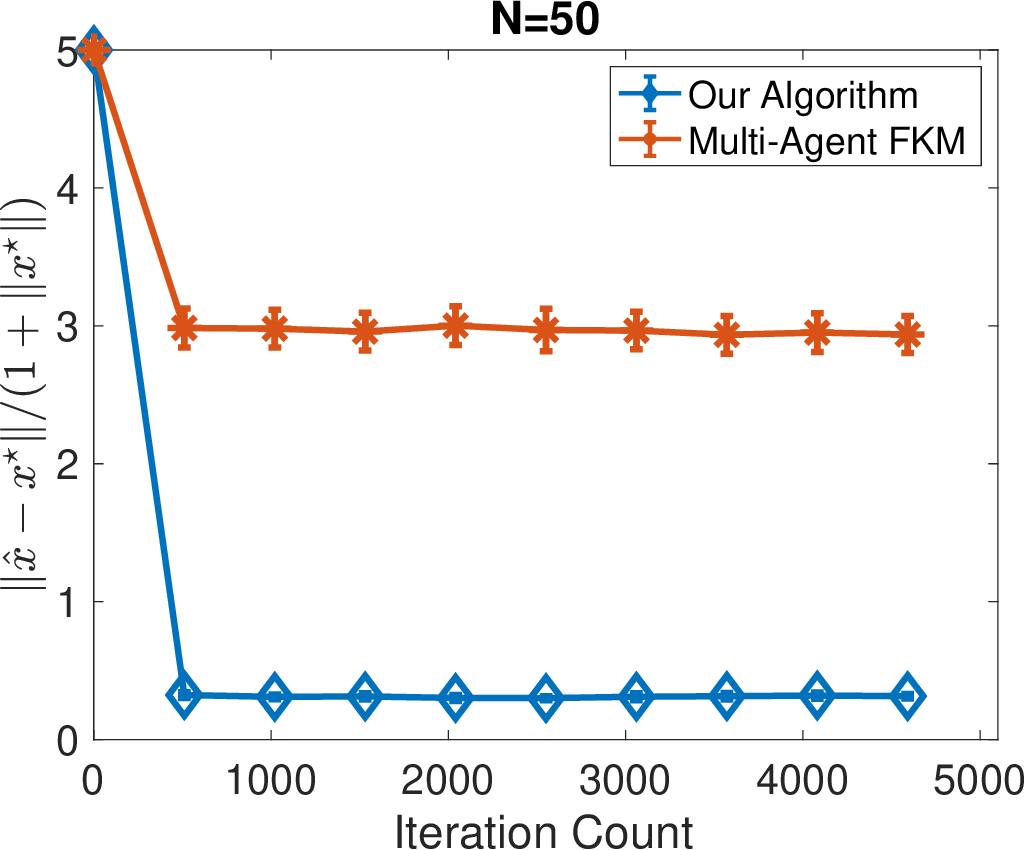}
\includegraphics[width=0.45\textwidth]{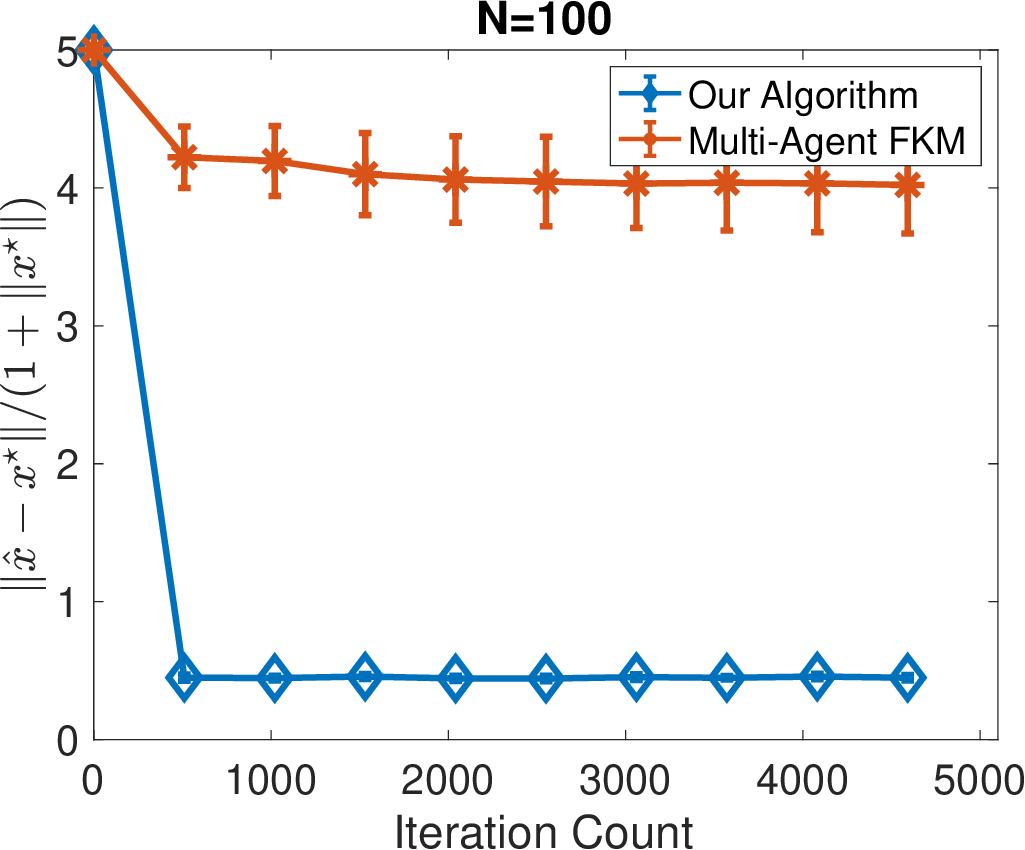}
\caption{Performance of both algorithms for solving Kelly auction. The number of independent trials is 10 and $(S, \bar{d}) = (2, 0.5)$ is fixed. The numerical results are presented as error v.s. iteration count.}\label{Fig:KA}
\end{figure*}

For both Algorithm~\ref{Alg:MA} and the multi-agent FKM, we consider the theoretically-correct choices of step sizes without fine-tuning. Indeed, we set $\lambda_i = \frac{1}{g_i}$ for all $i \in \NCal$ and $\beta = \frac{\min_{s \in \SCal}\{q_s d_s\}}{(\sum_{s \in \SCal} d_s + \sum_{i \in \NCal} B_i)^3}$ for Algorithm~\ref{Alg:MA} due to the structure of Kelly auction. Since $\XCal_i = \{x_i \in \br^S: \one_S^\top x_i \leq B_i, x_i \geq 0\}$ where $\one_S$ is a vector of $S$ dimensional with all one, we set $R_i(x_i) = - \sum_{s \in \SCal}\log(x_{is}) -  \log(B_i- \sum_{s \in \SCal} x_{is})$. According to Theorem~\ref{Thm:last-iterate-perfect}, we set $\eta_t = \frac{1}{2NS\sqrt{t}}$. For all $i \in \NCal$, we set $r_i = \frac{B_i}{S(S+1)}$ and $p_i = \frac{B_i}{S}\one_S$ in multi-agent FKM. We also set $\delta_t = \min\{\min_{i \in \NCal} r_i, \frac{1}{t^{1/3}}\}$ and $\gamma_t = \tfrac{1}{3\beta t}$ according to~\citet[Theorem~5.2]{Bravo-2018-Bandit}. Moreover, it is well known that the Kelly auction in Example~\ref{Example:KA} has a variational characterization~\cite[Proposition~2.1]{Mertikopoulos-2019-Learning} and the equilibrium computation is equivalent to finding a point $x^\star \in \XCal$ such that $\langle v(x^\star), x - x^\star\rangle \leq 0$ for all $x \in \XCal$. The evaluation metric is $\frac{\|\hat{x}^t - x^\star\|}{1 + \|x^\star\|}$ where $\hat{x}^t$ is generated by the algorithms and $x^\star$ is an approximate Nash equilibrium with high accuracy (we obtain it by employing the optimistic gradient method~\citep{Hsieh-2019-Convergence} to find an approximate solution of the VI mentioned above). This point is a benchmark for evaluating the quality of the solution obtained by the algorithms.
\begin{table}
\centering\caption{The solution quality on Kelly auction after 5000 iterations.}\label{Tab:KA}
\begin{tabular}{|c||c|c|} \hline
$(N, S, \bar{d})$  	& Multi-Agent FKM 					& Our Algorithm \\ \hhline{===} 
(10, 2, 0.5) 				& 1.2e+00 $\pm$ 2.0e-01 			& 1.4e-01 $\pm$ 2.3e-02 \\ 
(10, 2, 1.0) 				& 1.2e+00 $\pm$ 1.1e-01 			& 1.3e-01 $\pm$ 1.8e-02 \\
(10, 5, 0.5) 			& 8.7e-01 $\pm$ 1.3e-01 			& 1.8e-01 $\pm$ 5.8e-02 \\
(10, 5, 1.0) 				& 9.2e-01 $\pm$ 9.2e-02 		& 1.6e-01 $\pm$ 3.9e-02 \\ \hhline{===} 
(20, 2, 0.5) 			& 1.6e+00 $\pm$ 4.0e-01 		& 2.0e-01 $\pm$ 2.4e-02 \\ 
(20, 2, 1.0) 			& 1.9e+00 $\pm$ 1.8e-01 			& 1.9e-01 $\pm$ 2.1e-02 \\
(20, 5, 0.5) 			& 1.3e+00 $\pm$ 8.8e-02 		& 1.9e-01 $\pm$ 3.2e-02 \\
(20, 5, 1.0) 			& 1.4e+00 $\pm$ 1.6e-01 			& 1.7e-01 $\pm$ 2.8e-02 \\ \hhline{===} 
(50, 2, 0.5) 			& 2.6e+00 $\pm$ 4.7e-01 			& 3.0e-01 $\pm$ 2.8e-02 \\
(50, 2, 1.0) 			& 3.2e+00 $\pm$ 2.2e-01 		& 3.3e-01 $\pm$ 1.7e-02 \\
(50, 5, 0.5) 			& 2.2e+00 $\pm$ 1.9e-01 			& 2.3e-01 $\pm$ 1.5e-02 \\
(50, 5, 1.0) 			& 2.2e+00 $\pm$ 1.6e-01 			& 2.3e-01 $\pm$ 1.9e-02 \\ \hhline{===} 
(100, 2, 0.5) 			& 4.2e+00 $\pm$ 3.4e-01 		& 4.6e-01 $\pm$ 3.9e-02 \\ 
(100, 2, 1.0) 			& 4.7e+00 $\pm$ 1.3e-01 			& 5.1e-01 $\pm$ 3.1e-02 \\
(100, 5, 0.5) 			& 3.2e+00 $\pm$ 6.0e-02 		& 2.7e-01 $\pm$ 8.7e-03 \\
(100, 5, 1.0) 			& 3.3e+00 $\pm$ 8.5e-02 		& 2.9e-01 $\pm$ 2.1e-02 \\ \hline
\end{tabular}
\end{table}
\paragraph{Experimental results.} Fixing $(S, \bar{d}) = (5, 0.5)$, we investigate the convergence behavior of both algorithms with a varying number of players, i.e., $N \in \{10, 20, 50, 100\}$. Figure~\ref{Fig:KA} indicates that our algorithm outperforms multi-agent FKM as it exhibits a faster convergence to Nash equilibrium in terms of iteration count. Compared to Figure~\ref{Fig:CC}, the quality of the solutions obtained by both algorithms is poorer. Indeed, the simplex set is more complicated than the box set from the Cournot competition. Compared to the multi-agent FKM which gets stuck after a few iterations, our algorithm approaches Nash equilibrium slowly but steadily. This is possibly because of the inaccurate estimation of $\beta$ (too conservative) which results in problematic step sizes for the multi-agent FKM. We also present the numerical results for all $(N, S, \bar{d})$ in Table~\ref{Tab:KA}.

\section{Concluding Remarks and Future Directions}\label{sec:conclu}
In a multi-agent online environment with bandit feedback, the most sensible choice for a player who is oblivious to the presence of others (or who are conservative), is to deploy an optimal no-regret learning algorithm. With this in mind, we investigate the long-run behavior of individual optimal regularized no-regret learning policies. We show that, in strongly monotone games, the joint actions of all players converge to a (necessarily) unique Nash equilibrium, and the rate of convergence matches the lower bound established in~\citet{Shamir-2013-Complexity} up to log factors. We conduct the experiments using Cournot competition and Kelly auction, the numerical results evidence our main results on the convergence property of our algorithm (cf. Theorem~\ref{Thm:last-iterate-perfect}). 

Our work settles an open problem and contributes to the landscape of bandit game-theoretical learning by identifying the first doubly optimal bandit learning algorithm, in that it achieves (up to log factors) both optimal regret in the single-agent learning and optimal last-iterate convergence rate in the multi-agent learning. Future works include the design of a fully decentralized bandit learning algorithm where the players' updates need not be synchronous and the applications of our algorithms to online decision-making problems in practice. 


\bibliographystyle{plainnat}
\bibliography{ref}

\begin{thebibliography}{80}
\providecommand{\natexlab}[1]{#1}
\providecommand{\url}[1]{\texttt{#1}}
\expandafter\ifx\csname urlstyle\endcsname\relax
  \providecommand{\doi}[1]{doi: #1}\else
  \providecommand{\doi}{doi: \begingroup \urlstyle{rm}\Url}\fi

\bibitem[Abernethy et~al.(2008)Abernethy, Hazan, and
  Rakhlin]{Abernethy-2008-Competing}
J.~Abernethy, E.~Hazan, and A.~Rakhlin.
\newblock Competing in the dark: An efficient algorithm for bandit linear
  optimization.
\newblock In \emph{COLT}, pages 263--273, 2008.

\bibitem[Agarwal et~al.(2010)Agarwal, Dekel, and Xiao]{Agarwal-2010-Optimal}
A.~Agarwal, O.~Dekel, and L.~Xiao.
\newblock Optimal algorithms for online convex optimization with multi-point
  bandit feedback.
\newblock In \emph{COLT}, pages 28--40, 2010.

\bibitem[Arora et~al.(2012)Arora, Hazan, and Kale]{Arora-2012-Multiplicative}
S.~Arora, E.~Hazan, and S.~Kale.
\newblock The multiplicative weights update method: A meta-algorithm and
  applications.
\newblock \emph{Theory of Computing}, 8\penalty0 (1):\penalty0 121--164, 2012.

\bibitem[Aybat et~al.(2017)Aybat, Wang, Lin, and Ma]{Aybat-2017-Distributed}
N.~S. Aybat, Z.~Wang, T.~Lin, and S.~Ma.
\newblock Distributed linearized alternating direction method of multipliers
  for composite convex consensus optimization.
\newblock \emph{IEEE Transactions on Automatic Control}, 63\penalty0
  (1):\penalty0 5--20, 2017.

\bibitem[Balandat et~al.(2016)Balandat, Krichene, Tomlin, and
  Bayen]{Balandat-2016-Minimizing}
M.~Balandat, W.~Krichene, C.~Tomlin, and A.~Bayen.
\newblock Minimizing regret on reflexive {B}anach spaces and {N}ash equilibria
  in continuous zero-sum games.
\newblock In \emph{NIPS}, pages 154--162, 2016.

\bibitem[Bauschke and Combettes(2011)]{Bauschke-2011-Convex}
H.~H. Bauschke and P.~L. Combettes.
\newblock \emph{Convex Analysis and Monotone Operator Theory in Hilbert
  Spaces}, volume 408.
\newblock Springer, 2011.

\bibitem[Beck and Teboulle(2003)]{Beck-2003-Mirror}
A.~Beck and M.~Teboulle.
\newblock Mirror descent and nonlinear projected subgradient methods for convex
  optimization.
\newblock \emph{Operations Research Letters}, 31\penalty0 (3):\penalty0
  167--175, 2003.

\bibitem[Bervoets et~al.(2020)Bervoets, Bravo, and
  Faure]{Bervoets-2020-Learning}
S.~Bervoets, M.~Bravo, and M.~Faure.
\newblock Learning with minimal information in continuous games.
\newblock \emph{Theoretical Economics}, 15\penalty0 (4):\penalty0 1471--1508,
  2020.

\bibitem[Blum(1998)]{Blum-1998-Online}
A.~Blum.
\newblock Online algorithms in machine learning.
\newblock In \emph{Online Algorithms}, pages 306--325. Springer, 1998.

\bibitem[Boyd et~al.(2011)Boyd, Parikh, Chu, Peleato, and
  Eckstein]{Boyd-2011-Distributed}
S.~Boyd, N.~Parikh, E.~Chu, B.~Peleato, and J.~Eckstein.
\newblock Distributed optimization and statistical learning via the alternating
  direction method of multipliers.
\newblock \emph{Foundations and Trends{\textregistered} in Machine Learning},
  3\penalty0 (1):\penalty0 1--122, 2011.

\bibitem[Bravo et~al.(2018)Bravo, Leslie, and Mertikopoulos]{Bravo-2018-Bandit}
M.~Bravo, D.~Leslie, and P.~Mertikopoulos.
\newblock Bandit learning in concave {N}-person games.
\newblock In \emph{NIPS}, pages 5666--5676, 2018.

\bibitem[Bubeck and Cesa-Bianchi(2012)]{Bubeck-2012-Regret}
S.~Bubeck and N.~Cesa-Bianchi.
\newblock Regret analysis of stochastic and nonstochastic multi-armed bandit
  problems.
\newblock \emph{Machine Learning}, 5\penalty0 (1):\penalty0 1--122, 2012.

\bibitem[Bubeck and Eldan(2016)]{Bubeck-2016-Multi}
S.~Bubeck and R.~Eldan.
\newblock Multi-scale exploration of convex functions and bandit convex
  optimization.
\newblock In \emph{COLT}, pages 583--589. PMLR, 2016.

\bibitem[Bubeck et~al.(2015)Bubeck, Dekel, Koren, and
  Peres]{Bubeck-2015-Bandit}
S.~Bubeck, O.~Dekel, T.~Koren, and Y.~Peres.
\newblock Bandit convex optimization: $\backslash$sqrtt regret in one
  dimension.
\newblock In \emph{COLT}, pages 266--278. PMLR, 2015.

\bibitem[Bubeck et~al.(2017)Bubeck, Lee, and Eldan]{Bubeck-2017-Kernel}
S.~Bubeck, Y.~T. Lee, and R.~Eldan.
\newblock Kernel-based methods for bandit convex optimization.
\newblock In \emph{SOTC}, pages 72--85, 2017.

\bibitem[Busoniu et~al.(2010)Busoniu, Babuska, and
  De~Schutter]{Bucsoniu-2010-Multi}
L.~Busoniu, R.~Babuska, and B.~De~Schutter.
\newblock Multi-agent reinforcement learning: An overview.
\newblock In \emph{Innovations in Multi-Agent Systems and Applications-1},
  pages 183--221. Springer, 2010.

\bibitem[Cai et~al.(2022)Cai, Oikonomou, and Zheng]{Cai-2022-Finite}
Y.~Cai, A.~Oikonomou, and W.~Zheng.
\newblock Finite-time last-iterate convergence for learning in multi-player
  games.
\newblock In \emph{NeurIPS}, pages 33904--33919, 2022.

\bibitem[Cesa-Bianchi and Lugosi(2006)]{Cesa-2006-Prediction}
N.~Cesa-Bianchi and G.~Lugosi.
\newblock \emph{Prediction, Learning, and Games}.
\newblock Cambridge University Press, 2006.

\bibitem[Chen and Teboulle(1993)]{Chen-1993-Convergence}
G.~Chen and M.~Teboulle.
\newblock Convergence analysis of a proximal-like minimization algorithm using
  {B}regman functions.
\newblock \emph{SIAM Journal on Optimization}, 3\penalty0 (3):\penalty0
  538--543, 1993.

\bibitem[Chen et~al.(2009)Chen, Deng, and Teng]{Chen-2009-Settling}
X.~Chen, X.~Deng, and S-H. Teng.
\newblock Settling the complexity of computing two-player {N}ash equilibria.
\newblock \emph{Journal of the ACM (JACM)}, 56\penalty0 (3):\penalty0 1--57,
  2009.

\bibitem[Condat(2016)]{Condat-2016-Fast}
L.~Condat.
\newblock Fast projection onto the simplex and the $\ell_1$ ball.
\newblock \emph{Mathematical Programming}, 158\penalty0 (1-2):\penalty0
  575--585, 2016.

\bibitem[Daskalakis et~al.(2009)Daskalakis, Goldberg, and
  Papadimitriou]{Daskalakis-2009-Complexity}
C.~Daskalakis, P.~W. Goldberg, and C.~H. Papadimitriou.
\newblock The complexity of computing a {N}ash equilibrium.
\newblock \emph{SIAM Journal on Computing}, 39\penalty0 (1):\penalty0 195--259,
  2009.

\bibitem[Daskalakis et~al.(2018)Daskalakis, Ilyas, Syrgkanis, and
  Zeng]{Daskalakis-2018-Training}
C.~Daskalakis, A.~Ilyas, V.~Syrgkanis, and H.~Zeng.
\newblock Training {GAN}s with optimism.
\newblock In \emph{ICLR}, pages 1--30, 2018.
\newblock URL \url{https://openreview.net/forum?id=SJJySbbAZ}.

\bibitem[Debreu(1952)]{Debreu-1952-Social}
G.~Debreu.
\newblock A social equilibrium existence theorem.
\newblock \emph{Proceedings of the National Academy of Sciences}, 38\penalty0
  (10):\penalty0 886--893, 1952.

\bibitem[Dekel et~al.(2015)Dekel, Eldan, and Koren]{Dekel-2015-Bandit}
O.~Dekel, R.~Eldan, and T.~Koren.
\newblock Bandit smooth convex optimization: Improving the bias-variance
  tradeoff.
\newblock In \emph{NIPS}, pages 2926--2934, 2015.

\bibitem[Facchinei and Pang(2007)]{Facchinei-2007-Finite}
F.~Facchinei and J-S. Pang.
\newblock \emph{Finite-Dimensional Variational Inequalities and Complementarity
  Problems}.
\newblock Springer Science \& Business Media, 2007.

\bibitem[Flaxman et~al.(2005)Flaxman, Kalai, and McMahan]{Flaxman-2005-Online}
A.~D. Flaxman, A.~T. Kalai, and H.~B. McMahan.
\newblock Online convex optimization in the bandit setting: gradient descent
  without a gradient.
\newblock In \emph{SODA}, pages 385--394, 2005.

\bibitem[Fudenberg and Levine(1998)]{Fudenberg-1998-Theory}
D.~Fudenberg and D.~K. Levine.
\newblock \emph{The Theory of Learning in Games}, volume~2.
\newblock MIT Press, 1998.

\bibitem[Golowich et~al.(2020{\natexlab{a}})Golowich, Pattathil, and
  Daskalakis]{Golowich-2020-Tight}
N.~Golowich, S.~Pattathil, and C.~Daskalakis.
\newblock Tight last-iterate convergence rates for no-regret learning in
  multi-player games.
\newblock In \emph{NeurIPS}, volume~33, pages 20766--20778, 2020{\natexlab{a}}.

\bibitem[Golowich et~al.(2020{\natexlab{b}})Golowich, Pattathil, Daskalakis,
  and Ozdaglar]{Golowich-2020-Last}
N.~Golowich, S.~Pattathil, C.~Daskalakis, and A.~Ozdaglar.
\newblock Last iterate is slower than averaged iterate in smooth convex-concave
  saddle point problems.
\newblock In \emph{COLT}, pages 1758--1784. PMLR, 2020{\natexlab{b}}.

\bibitem[Goodfellow et~al.(2014)Goodfellow, Pouget-Abadie, Mirza, Xu,
  Warde-Farley, Ozair, Courville, and Bengio]{Goodfellow-2014-Generative}
I.~J. Goodfellow, J.~Pouget-Abadie, M.~Mirza, B.~Xu, D.~Warde-Farley, S.~Ozair,
  A.~Courville, and Y.~Bengio.
\newblock Generative adversarial nets.
\newblock In \emph{NeurIPS}, pages 2672--2680, 2014.

\bibitem[Gopal and Yang(2013)]{Gopal-2013-Distributed}
S.~Gopal and Y.~Yang.
\newblock Distributed training of large-scale logistic models.
\newblock In \emph{ICML}, pages 289--297. PMLR, 2013.

\bibitem[Gorbunov et~al.(2022)Gorbunov, Taylor, and Gidel]{Gorbunov-2022-Last}
E.~Gorbunov, A.~Taylor, and G.~Gidel.
\newblock Last-iterate convergence of optimistic gradient method for monotone
  variational inequalities.
\newblock In \emph{NeurIPS}, pages 21858--21870, 2022.

\bibitem[Hannan(1957)]{Hannan-1957-Approximation}
J.~Hannan.
\newblock Approximation to {B}ayes risk in repeated play.
\newblock \emph{Contributions to the Theory of Games}, 21\penalty0
  (39):\penalty0 97--139, 1957.

\bibitem[Hazan(2016)]{Hazan-2016-Introduction}
E.~Hazan.
\newblock Introduction to online convex optimization.
\newblock \emph{Foundations and Trends{\textregistered} in Optimization},
  2\penalty0 (3-4):\penalty0 157--325, 2016.

\bibitem[Hazan and Levy(2014)]{Hazan-2014-Bandit}
E.~Hazan and K.~Y. Levy.
\newblock Bandit convex optimization: Towards tight bounds.
\newblock In \emph{NIPS}, pages 784--792, 2014.

\bibitem[Hazan and Li(2016)]{Hazan-2016-Optimal}
E.~Hazan and Y.~Li.
\newblock An optimal algorithm for bandit convex optimization.
\newblock \emph{ArXiv Preprint: 1603.04350}, 2016.

\bibitem[H{\'e}liou et~al.(2020)H{\'e}liou, Mertikopoulos, and
  Zhou]{Heliou-2020-Gradient}
A.~H{\'e}liou, P.~Mertikopoulos, and Z.~Zhou.
\newblock Gradient-free online learning in continuous games with delayed
  rewards.
\newblock In \emph{ICML}, pages 4172--4181. PMLR, 2020.

\bibitem[Hsieh et~al.(2019)Hsieh, Iutzeler, Malick, and
  Mertikopoulos]{Hsieh-2019-Convergence}
Y-G. Hsieh, F.~Iutzeler, J.~Malick, and P.~Mertikopoulos.
\newblock On the convergence of single-call stochastic extra-gradient methods.
\newblock In \emph{NeurIPS}, pages 6938--6948, 2019.

\bibitem[Kalai and Vempala(2005)]{Kalai-2005-Efficient}
A.~Kalai and S.~Vempala.
\newblock Efficient algorithms for online decision problems.
\newblock \emph{Journal of Computer and System Sciences}, 71\penalty0
  (3):\penalty0 291--307, 2005.

\bibitem[Kelly et~al.(1998)Kelly, Maulloo, and Tan]{Kelly-1998-Rate}
F.~P. Kelly, A.~K. Maulloo, and D.~K.~H. Tan.
\newblock Rate control for communication networks: Shadow prices, proportional
  fairness and stability.
\newblock \emph{Journal of the Operational Research society}, 49\penalty0
  (3):\penalty0 237--252, 1998.

\bibitem[Kleinberg(2004)]{Kleinberg-2004-Nearly}
R.~Kleinberg.
\newblock Nearly tight bounds for the continuum-armed bandit problem.
\newblock In \emph{NIPS}, pages 697--704, 2004.

\bibitem[Krichene et~al.(2015)Krichene, Krichene, Dong, and
  Bayen]{Krichene-2015-Convergence}
S.~Krichene, W.~Krichene, R.~Dong, and A.~Bayen.
\newblock Convergence of heterogeneous distributed learning in stochastic
  routing games.
\newblock In \emph{Allerton}, pages 480--487. IEEE, 2015.

\bibitem[Lattimore and Szepesv{\'a}ri(2020)]{Lattimore-2020-Bandit}
T.~Lattimore and C.~Szepesv{\'a}ri.
\newblock \emph{Bandit Algorithms}.
\newblock Cambridge University Press, 2020.

\bibitem[Lee et~al.(2021)Lee, Kroer, and Luo]{Lee-2021-Last}
C-W. Lee, C.~Kroer, and H.~Luo.
\newblock Last-iterate convergence in extensive-form games.
\newblock In \emph{NeurIPS}, pages 14293--14305, 2021.

\bibitem[Liang and Stokes(2019)]{Liang-2019-Interaction}
T.~Liang and J.~Stokes.
\newblock Interaction matters: A note on non-asymptotic local convergence of
  generative adversarial networks.
\newblock In \emph{AISTATS}, pages 907--915. PMLR, 2019.

\bibitem[Lin et~al.(2020)Lin, Zhou, Mertikopoulos, and Jordan]{Lin-2020-Finite}
T.~Lin, Z.~Zhou, P.~Mertikopoulos, and M.~I. Jordan.
\newblock Finite-time last-iterate convergence for multi-agent learning in
  games.
\newblock In \emph{ICML}, pages 6161--6171. PMLR, 2020.

\bibitem[Liu et~al.(2015)Liu, Wright, R{\'e}, Bittorf, and
  Sridhar]{Liu-2015-Asynchronous}
J.~Liu, S.~J. Wright, C.~R{\'e}, V.~Bittorf, and S.~Sridhar.
\newblock An asynchronous parallel stochastic coordinate descent algorithm.
\newblock \emph{The Journal of Machine Learning Research}, 16:\penalty0
  285--322, 2015.

\bibitem[Mahajan et~al.(2017)Mahajan, Keerthi, and
  Sundararajan]{Mahajan-2017-Distributed}
D.~Mahajan, S.~S. Keerthi, and S.~Sundararajan.
\newblock A distributed block coordinate descent method for training
  l1-regularized linear classifiers.
\newblock \emph{The Journal of Machine Learning Research}, 18\penalty0
  (1):\penalty0 3167--3201, 2017.

\bibitem[Mertikopoulos and Zhou(2019)]{Mertikopoulos-2019-Learning}
P.~Mertikopoulos and Z.~Zhou.
\newblock Learning in games with continuous action sets and unknown payoff
  functions.
\newblock \emph{Mathematical Programming}, 173\penalty0 (1):\penalty0 465--507,
  2019.

\bibitem[Mertikopoulos et~al.(2017)Mertikopoulos, Belmega, Negrel, and
  Sanguinetti]{Mertikopoulos-2017-Distributed}
P.~Mertikopoulos, E.~V. Belmega, R.~Negrel, and L.~Sanguinetti.
\newblock Distributed stochastic optimization via matrix exponential learning.
\newblock \emph{IEEE Transactions on Signal Processing}, 65\penalty0
  (9):\penalty0 2277--2290, 2017.

\bibitem[Mertikopoulos et~al.(2018)Mertikopoulos, Papadimitriou, and
  Piliouras]{Mertikopoulos-2018-Cycles}
P.~Mertikopoulos, C.~Papadimitriou, and G.~Piliouras.
\newblock Cycles in adversarial regularized learning.
\newblock In \emph{SODA}, pages 2703--2717. SIAM, 2018.

\bibitem[Mertikopoulos et~al.(2019)Mertikopoulos, Lecouat, Zenati, Foo,
  Chandrasekhar, and Piliouras]{Mertikopoulos-2019-Optimistic}
P.~Mertikopoulos, B.~Lecouat, H.~Zenati, C-S. Foo, V.~Chandrasekhar, and
  G.~Piliouras.
\newblock Optimistic mirror descent in saddle-point problems: Going the
  extra(-gradient) mile.
\newblock In \emph{ICLR}, pages 1--23, 2019.
\newblock URL \url{https://openreview.net/forum?id=Bkg8jjC9KQ}.

\bibitem[Monderer and Shapley(1996)]{Monderer-1996-Potential}
D.~Monderer and L.~S. Shapley.
\newblock Potential games.
\newblock \emph{Games and Economic Behavior}, 14\penalty0 (1):\penalty0
  124--143, 1996.

\bibitem[Nemirovski and Todd(2008)]{Nemirovski-2008-Interior}
A.~S. Nemirovski and M.~J. Todd.
\newblock Interior-point methods for optimization.
\newblock \emph{Acta Numerica}, 17:\penalty0 191--234, 2008.

\bibitem[Nemirovski and Yudin(1983)]{Nemirovski-1983-Problem}
A.~S. Nemirovski and D.~B. Yudin.
\newblock \emph{Problem Complexity and Method Efficiency in Optimization}.
\newblock Wiley-Interscience, 1983.

\bibitem[Nesterov(2018)]{Nesterov-2018-Lectures}
Y.~Nesterov.
\newblock \emph{Lectures on Convex Optimization}, volume 137.
\newblock Springer, 2018.

\bibitem[Nesterov and Nemirovski(1994)]{Nesterov-1994-Interior}
Y.~Nesterov and A.~Nemirovski.
\newblock \emph{Interior-Point Polynomial Algorithms in Convex Programming}.
\newblock SIAM, 1994.

\bibitem[Orda et~al.(1993)Orda, Rom, and Shimkin]{Orda-1993-Competitive}
A.~Orda, R.~Rom, and N.~Shimkin.
\newblock Competitive routing in multiuser communication networks.
\newblock \emph{IEEE/ACM Transactions on Networking}, 1\penalty0 (5):\penalty0
  510--521, 1993.

\bibitem[Rosen(1965)]{Rosen-1965-Existence}
J.~B. Rosen.
\newblock Existence and uniqueness of equilibrium points for concave n-person
  games.
\newblock \emph{Econometrica: Journal of the Econometric Society}, pages
  520--534, 1965.

\bibitem[Saha and Tewari(2011)]{Saha-2011-Improved}
A.~Saha and A.~Tewari.
\newblock Improved regret guarantees for online smooth convex optimization with
  bandit feedback.
\newblock In \emph{AISTATS}, pages 636--642, 2011.

\bibitem[Sandholm(2001)]{Sandholm-2001-Potential}
W.~H. Sandholm.
\newblock Potential games with continuous player sets.
\newblock \emph{Journal of Economic Theory}, 97\penalty0 (1):\penalty0 81--108,
  2001.

\bibitem[Sandholm(2015)]{Sandholm-2015-Population}
W.~H. Sandholm.
\newblock Population games and deterministic evolutionary dynamics.
\newblock In \emph{Handbook of Game Theory with Economic Applications},
  volume~4, pages 703--778. Elsevier, 2015.

\bibitem[Shalev-Shwartz(2012)]{Shalev-2012-Online}
S.~Shalev-Shwartz.
\newblock Online learning and online convex optimization.
\newblock \emph{Foundations and Trends in Machine Learning}, 4\penalty0
  (2):\penalty0 107--194, 2012.

\bibitem[Shalev-Shwartz and Singer(2006)]{Shalev-2006-Convex}
S.~Shalev-Shwartz and Y.~Singer.
\newblock Convex repeated games and {F}enchel duality.
\newblock In \emph{NeurIPS}, pages 1265--1272, 2006.

\bibitem[Shamir(2013)]{Shamir-2013-Complexity}
O.~Shamir.
\newblock On the complexity of bandit and derivative-free stochastic convex
  optimization.
\newblock In \emph{COLT}, pages 3--24. PMLR, 2013.

\bibitem[Shoham and Leyton-Brown(2008)]{Shoham-2008-Multiagent}
Y.~Shoham and K.~Leyton-Brown.
\newblock \emph{Multiagent Systems: Algorithmic, Game-Theoretic, and Logical
  Foundations}.
\newblock Cambridge University Press, 2008.

\bibitem[Sorin and Wan(2016)]{Sorin-2016-Finite}
S.~Sorin and C.~Wan.
\newblock Finite composite games: Equilibria and dynamics.
\newblock \emph{Journal of Dynamics and Games}, 3\penalty0 (1):\penalty0
  101--120, 2016.

\bibitem[Spall(1997)]{Spall-1997-One}
J.~C. Spall.
\newblock A one-measurement form of simultaneous perturbation stochastic
  approximation.
\newblock \emph{Automatica}, 33\penalty0 (1):\penalty0 109--112, 1997.

\bibitem[Tan(2014)]{Tan-2014-Wireless}
C.~W. Tan.
\newblock Wireless network optimization by {P}erron-{F}robenius theory.
\newblock In \emph{CISS}, pages 1--6. IEEE, 2014.

\bibitem[Tseng(1995)]{Tseng-1995-Linear}
P.~Tseng.
\newblock On linear convergence of iterative methods for the variational
  inequality problem.
\newblock \emph{Journal of Computational and Applied Mathematics}, 60\penalty0
  (1-2):\penalty0 237--252, 1995.

\bibitem[Weeraddana et~al.(2012)Weeraddana, Codreanu, Latva-aho, Ephremides,
  and Fischione]{Weeraddana-2012-Weighted}
P.~C. Weeraddana, M.~Codreanu, M.~Latva-aho, A.~Ephremides, and C.~Fischione.
\newblock \emph{Weighted Sum-Rate Maximization in Wireless Networks: A Review}.
\newblock Now Foundations and Trends, 2012.

\bibitem[Wei et~al.(2021{\natexlab{a}})Wei, Lee, Zhang, and Luo]{Wei-2021-Last}
C-Y. Wei, C-W. Lee, M.~Zhang, and H.~Luo.
\newblock Last-iterate convergence of decentralized optimistic gradient
  descent/ascent in infinite-horizon competitive {M}arkov games.
\newblock In \emph{COLT}, pages 4259--4299. PMLR, 2021{\natexlab{a}}.

\bibitem[Wei et~al.(2021{\natexlab{b}})Wei, Lee, Zhang, and
  Luo]{Wei-2021-Linear}
C-Y. Wei, C-W. Lee, M.~Zhang, and H.~Luo.
\newblock Linear last-iterate convergence in constrained saddle-point
  optimization.
\newblock In \emph{ICLR}, pages 1--39, 2021{\natexlab{b}}.
\newblock URL \url{https://openreview.net/forum?id=dx11_7vm5_r}.

\bibitem[Zhang and Kwok(2014)]{Zhang-2014-Asynchronous}
R.~Zhang and J.~Kwok.
\newblock Asynchronous distributed {ADMM} for consensus optimization.
\newblock In \emph{ICML}, pages 1701--1709. PMLR, 2014.

\bibitem[Zhou et~al.(2017{\natexlab{a}})Zhou, Mertikopoulos, Bambos, Glynn, and
  Tomlin]{Zhou-2017-Countering}
Z.~Zhou, P.~Mertikopoulos, N.~Bambos, P.~Glynn, and C.~Tomlin.
\newblock Countering feedback delays in multi-agent learning.
\newblock In \emph{NIPS}, pages 6172--6182, 2017{\natexlab{a}}.

\bibitem[Zhou et~al.(2017{\natexlab{b}})Zhou, Mertikopoulos, Moustakas, Bambos,
  and Glynn]{Zhou-2017-Mirror}
Z.~Zhou, P.~Mertikopoulos, A.~L. Moustakas, N.~Bambos, and P.~Glynn.
\newblock Mirror descent learning in continuous games.
\newblock In \emph{CDC}, pages 5776--5783. IEEE, 2017{\natexlab{b}}.

\bibitem[Zhou et~al.(2018)Zhou, Mertikopoulos, Athey, Bambos, Glynn, and
  Ye]{Zhou-2018-Learning}
Z.~Zhou, P.~Mertikopoulos, S.~Athey, N.~Bambos, P.~Glynn, and Y.~Ye.
\newblock Learning in games with lossy feedback.
\newblock In \emph{NIPS}, pages 1--11, 2018.

\bibitem[Zhou et~al.(2021)Zhou, Mertikopoulos, Moustakas, Bambos, and
  Glynn]{Zhou-2021-Robust}
Z.~Zhou, P.~Mertikopoulos, A.~L. Moustakas, N.~Bambos, and P.~Glynn.
\newblock Robust power management via learning and game design.
\newblock \emph{Operations Research}, 69\penalty0 (1):\penalty0 331--345, 2021.

\bibitem[Zinkevich(2003)]{Zinkevich-2003-Online}
M.~Zinkevich.
\newblock Online convex programming and generalized infinitesimal gradient
  ascent.
\newblock In \emph{ICML}, pages 928--936, 2003.

\end{thebibliography}

\clearpage
\appendix
\section{Proof of Lemma~\ref{Lemma:SA-main}}\label{app:SA-main}
By the definition of $\PCal_R(\cdot, \cdot, \cdot)$ in Eq.~\eqref{def:prox-map}, the iterate $x^{t+1} = \PCal_R(x^t, \hat{v}^t, \eta_t)$ satisfies that 
\begin{equation*}
- \eta_t\hat{v}^t + \eta_t\beta(t+1)(x^{t+1} - x^t) + \nabla R(x^{t+1}) - \nabla  R(x^t) = 0. 
\end{equation*}
Using Lemma~\ref{Lemma:three-point} with $x = x^{t+1}$ and $x' = x^t$, we have
\begin{eqnarray*}
\lefteqn{D_R(p, x^t) = D_R(p, x^{t+1}) +  D_R(x^{t+1}, x^t) + \langle\nabla R(x^t) - \nabla R(x^{t+1}), x^{t+1} - p\rangle} \\ 
& = &  D_R(p, x^{t+1}) +  D_R(x^{t+1}, x^t) - \eta_t \langle\hat{v}^t, x^{t+1} - p\rangle + \eta_t\beta(t+1) \langle x^{t+1} - x^t, x^{t+1} - p\rangle \\
& = & D_R(p, x^{t+1}) - \eta_t \langle\hat{v}^t, x^{t+1} - x^t\rangle - \eta_t \langle\hat{v}^t, x^t - p\rangle + \left(D_R(x^{t+1}, x^t) + \tfrac{\eta_t\beta(t+1)}{2}\|x^{t+1} - x^t\|^2\right) \\
& & + \tfrac{\eta_t\beta(t+1)}{2}\|x^{t+1} - p\|^2 - \tfrac{\eta_t\beta(t+1)}{2}\|x^t - p\|^2. 
\end{eqnarray*}
By the definition, we have $D_R(x^{t+1}, x^t) \geq 0$. Putting these pieces together yields that 
\begin{equation}\label{inequality:SA-main}
D_R(p, x^{t+1}) + \tfrac{\eta_t\beta(t+1)}{2}\|x^{t+1} - p\|^2 \leq D_R(p, x^t) + \tfrac{\eta_t\beta(t+1)}{2}\|x^t - p\|^2 + \eta_t \langle\hat{v}^t, x^{t+1} - x^t\rangle + \eta_t \langle\hat{v}^t, x^t - p\rangle. 
\end{equation}
Then we bound the term $\langle\hat{v}^t, x^{t+1} - x^t\rangle$ by Lemma~\ref{Lemma:SC-key-estimate}. Indeed, we let the function $g$ be defined as
\begin{equation*}
g(x) = \eta_t \langle\hat{v}^t, x^t - x\rangle + \tfrac{\eta_t \beta(t+1)}{2}\|x^t - x\|^2 + D_R(x, x^t)
\end{equation*}
Since $g$ is the sum of a self-concordant barrier function $R$ and a quadratic function, we have $g$ is also a self-concordant function and have
\begin{equation}\label{def:SCB-grad-hess}
\nabla g(x^t) = -\eta_t\hat{v}^t, \quad \nabla^2 g(x^t) = \eta_t \beta(t+1)I_n + \nabla^2 R(x^t). 
\end{equation}
By definition of $A^t$, we have $A^t = (\nabla^2 g(x^t))^{-1/2}$. Then, Eq.~\eqref{def:SCB-grad-hess} implies that 
\begin{equation}\label{inequality:SA-first}
\lambda(x^t, g) = \|(\nabla^2 g(x^t))^{-1}\nabla g(x^t)\|_{x^t} = \eta_t\|\hat{v}^t\|_{x^t, \star}. 
\end{equation}
To apply Lemma~\ref{Lemma:SC-key-estimate}, we need to guarantee that $\lambda(x^t, g) \leq \tfrac{1}{2}$. Indeed, by the definition of $\hat{v}^t$ and using $A^t = (\nabla^2 g(x^t))^{-1/2}$, we have
\begin{equation*}
\lambda(x^t, g) = \eta_t\|\hat{v}^t\|_{x^t, \star} = \eta_t\|A^t\hat{v}^t\| =  n \cdot \eta_t|u^t(x^t + A^t z^t)|\|z^t\| \leq n \cdot \eta_t|u^t(x^t + A^t z^t)|. 
\end{equation*}
Since $x^t + A^t z^t \in \XCal$ and $|u^t(x)| \leq L$ for all $x \in \XCal$, we have $\lambda(x^t, g) \leq n\eta_t L$. Combining it with $0 < \eta_t \leq \tfrac{1}{2nL}$ yields that $\lambda(x^t, g) \leq \tfrac{1}{2}$. By Lemma~\ref{Lemma:SC-key-estimate}, we have
\begin{equation*}
\|x^{t+1} - x^t\|_{x^t} = \|x^t - \argmin_{x' \in \XCal} g(x')\| \leq 2\lambda(x^t, g) \overset{\textnormal{Eq.~\eqref{inequality:SA-first}}}{\leq} 2\eta_t\|\hat{v}^t\|_{x^t, \star}. 
\end{equation*}
This together with the H\"{o}lder's inequality yields that
\begin{equation*}
\langle\hat{v}^t, x^{t+1} - x^t\rangle \leq \|\hat{v}^t\|_{x^t, \star}\|x^{t+1} - x^t\|_{x^t} \leq 2\eta_t\|\hat{v}^t\|_{x^t, \star}^2.  
\end{equation*}
Using $A^t = (\nabla^2 g(x^t))^{-1/2}$ again, we have $\langle\hat{v}^t, x^{t+1} - x^t\rangle \leq 2\eta_t\|A^t\hat{v}^t\|^2$. Plugging this inequality into Eq.~\eqref{inequality:SA-main} yields the desired inequality. 

\section{Proof of Theorem~\ref{Thm:regret-optimal}}\label{app:regret-optimal}
We are in a position to prove Theorem~\ref{Thm:regret-optimal} regarding the regret bound of Algorithm~\ref{Alg:SA}. For simplicity, we assume that $\max_{x, x' \in \XCal} \|x-x'\| \leq B$ for some $B>0$. Fixing a point $p \in \XCal$, we have
\begin{eqnarray}\label{inequality:regret-optimal-main}
\lefteqn{\sum_{t=1}^T u^t(p) - \EE\left[\sum_{t=1}^T u^t(\hat{x}^t)\right] = \underbrace{\sum_{t=1}^T \EE[u^t(p) - \hat{u}^t(p)]}_{\textbf{I}}}  \\
& & + \underbrace{\sum_{t=1}^T \EE[\hat{u}^t(p) - \hat{u}^t(x^t)]}_{\textbf{II}} + \underbrace{\sum_{t=1}^T \EE[\hat{u}^t(x^t) - u^t(x^t)]}_{\textbf{III}} + \underbrace{\sum_{t=1}^T \EE[u^t(x^t) - u^t(\hat{x}^t)]}_{\textbf{IV}}, \nonumber 
\end{eqnarray}
where $\hat{u}^t(x) = \EE_{w \sim \BB^n}[u^t(x+A^t w)]$. Since $\nabla^2 R(x)$ is positive definite, $\eta_t = \frac{1}{2nL\sqrt{T}}$ and $A^t = (\nabla^2 R(x^t) + \eta_t \beta(t+1) I_n)^{-1/2}$, we have
\begin{equation}\label{inequality:upper-bound-A}
\EE[(\sigma_{\max}(A^t))^2] \leq \tfrac{1}{\eta_t\beta(t+1)} = \tfrac{2nL\sqrt{T}}{\beta(t+1)}. 
\end{equation}
By Lemma~\ref{Lemma:ellipsodial} and Eq.~\eqref{inequality:upper-bound-A}, we have
\begin{equation}\label{inequality:estm-I}
\textbf{I} \leq \tfrac{\ell}{2} \sum_{t=1}^T \EE[(\sigma_{\max}(A^t))^2] \leq \tfrac{n\ell L\sqrt{T}}{\beta}\sum_{t=1}^T \tfrac{1}{t+1} \leq \tfrac{n\ell L\sqrt{T}\log(T+1)}{\beta}. 
\end{equation}
Since $u^t$ is concave and $\EE[A^t z^t \mid x^t] = 0$, we have
\begin{equation}\label{inequality:estm-III}
\textbf{III} = \sum_{t=1}^T \EE[\EE[\hat{u}^t(x^t) - u^t(x^t) \mid x^t]] \leq 0. 
\end{equation}
By the definition of $\hat{x}^t$, we have
\begin{equation*}
\textbf{IV} = \sum_{t=1}^T \EE[u^t(x^t) - u^t(\hat{x}^t)] = \sum_{t=1}^T \EE[\EE[u^t(x^t) - u^t(x^t + A^t z^t) \mid x^t]]. 
\end{equation*}
By Lemma~\ref{Lemma:ellipsodial} and Eq.~\eqref{inequality:upper-bound-A} again, we have
\begin{equation}\label{inequality:estm-IV}
\textbf{IV} \leq \tfrac{\ell}{2} \sum_{t=1}^T \EE[(\sigma_{\max}(A^t))^2] \leq \tfrac{n\ell L\sqrt{T}}{\beta}\sum_{t=1}^T \tfrac{1}{t+1} \leq \tfrac{n\ell L\sqrt{T}\log(T+1)}{\beta}. 
\end{equation}
It remains to bound the second term. Since $\eta_t = \tfrac{1}{2nL\sqrt{T}} \leq \tfrac{1}{2nL}$, Lemma~\ref{Lemma:SA-main} implies that 
\begin{equation*}
D_R(p, x^{t+1}) + \tfrac{\eta_t\beta(t+1)}{2}\|x^{t+1} - p\|^2 \leq D_R(p, x^t) + \tfrac{\eta_t\beta(t+1)}{2}\|x^t - p\|^2 + 2\eta_t^2\|A^t\hat{v}^t\|^2 + \eta_t \langle\hat{v}^t, x^t - p\rangle.
\end{equation*}
Taking the expectation of both sides conditioned on $x^t$ and dividing both sides by $\eta_t$, we have
\begin{equation*}
\tfrac{\EE[D_R(p, x^{t+1}) \mid x^t]}{\eta_t} + \tfrac{\beta(t+1)}{2}\EE[\|x^{t+1} - p\|^2 \mid x^t] \leq \tfrac{D_R(p, x^t)}{\eta_t} + \tfrac{\beta(t+1)}{2}\|x^t - p\|^2 + 2\eta_t\EE[\|A^t\hat{v}^t\|^2 \mid x^t] + \EE[\langle\hat{v}^t, x^t - p\rangle \mid x^t].
\end{equation*}
By the definition of $\hat{v}^t$ and using Lemma~\ref{Lemma:ellipsodial}, we have
\begin{equation*}
\EE[\|A^t\hat{v}^t\|^2 \mid x^t] = n^2 \cdot \EE[|u^t(x^t + A^t z^t)|^2\|z^t\|^2 \mid x^t] \leq n^2 L^2, 
\end{equation*}
and 
\begin{equation*}
\EE[\langle\hat{v}^t, x^t - p\rangle \mid x^t] = \langle \nabla \hat{u}^t(x^t), x^t - p\rangle. 
\end{equation*}
Since $u^t$ is $\beta$-strongly concave, Lemma~\ref{Lemma:ellipsodial} implies that $\hat{u}^t$ is also $\beta$-strongly concave and we have $\EE[\langle\hat{v}^t, x^t - p\rangle \mid x^t] \leq \hat{u}^t(x^t) - \hat{u}^t(p) - \tfrac{\beta}{2}\|x^t - p\|^2$. Putting these pieces together yields that 
\begin{equation*}
\tfrac{\EE[D_R(p, x^{t+1}) \mid x^t]}{\eta_t} + \tfrac{\beta(t+1)}{2}\EE[\|x^{t+1} - p\|^2 \mid x^t] \leq \tfrac{D_R(p, x^t)}{\eta_t} + \tfrac{\beta t}{2}\|x^t - p\|^2 + 2\eta_t n^2L^2 + (\hat{u}^t(x^t) - \hat{u}^t(p)).
\end{equation*}
Rearranging the above inequality and take the expectation of both sides, we have
\begin{equation*}
\EE[\hat{u}^t(p) - \hat{u}^t(x^t)] \leq \tfrac{\EE[D_R(p, x^t)]-\EE[D_R(p, x^{t+1})]}{\eta_t} + \tfrac{\beta t}{2}\EE[\|x^t - p\|^2] - \tfrac{\beta(t+1)}{2}\EE[\|x^{t+1} - p\|^2] + 2\eta_t n^2L^2.
\end{equation*}
Therefore, we have
\begin{eqnarray}\label{inequality:estm-II-main}
\textbf{II} & \leq & \sum_{t=1}^T \tfrac{\EE[D_R(p, x^t)]-\EE[D_R(p, x^{t+1})]}{\eta_t} + \tfrac{\beta}{2}\sum_{t=1}^T (t\EE[\|x^t - p\|^2] - (t+1)\EE[\|x^{t+1} - p\|^2]) +  2n^2L^2 \sum_{t=1}^T \eta_t \nonumber \\
& = & 2nL\sqrt{T}D_R(p, x^1) + \tfrac{\beta}{2}\|x^1 - p\|^2 + nL\sqrt{T} \leq nL\sqrt{T}(1+2D_R(p, x^1)) + \tfrac{\beta B^2}{2}. 
\end{eqnarray}
Since $x^1 = \argmin_{x \in \XCal} R(x)$, we have $\nabla R(x^1) = 0$ and $D_R(p, x^1) = R(p) - R(x^1)$. Then, we consider the following two cases: 
\begin{itemize}
\item A point $p \in \XCal$ satisfies that $\pi_{x^1}(p) \leq 1 - \frac{1}{\sqrt{T}}$. By Lemma~\ref{Lemma:SCB-upper-bound}, we have $D_R(p, x^1) = R(p) - R(x^1) \leq \nu\log(T)$. This together with Eq.~\eqref{inequality:estm-II-main} implies that 
\begin{equation}\label{inequality:estm-II}
\textbf{II} \leq nL\sqrt{T}(1+2\nu\log(T)) + \tfrac{\beta B^2}{2}. 
\end{equation}
Plugging Eq.~\eqref{inequality:estm-I}, Eq.~\eqref{inequality:estm-III}, Eq.~\eqref{inequality:estm-IV} and Eq.~\eqref{inequality:estm-II} into Eq.~\eqref{inequality:regret-optimal-main} yields that 
\begin{equation}\label{inequality:regret-optimal-result}
\sum_{t=1}^T u^t(p) - \EE\left[\sum_{t=1}^T u^t(\hat{x}^t)\right] = \tilde{O}(n\sqrt{T}). 
\end{equation}
\item A point $p \in \XCal$ satisfies that $\pi_{x^1}(p) > 1 - \tfrac{1}{\sqrt{T}}$. Thus, we can find $p' \in \XCal$ such that $\|p' - p\| = O(\tfrac{1}{\sqrt{T}})$ and $\pi_{x_1}(p') \leq 1 - \tfrac{1}{\sqrt{T}}$. Since Eq.~\eqref{inequality:regret-optimal-result} holds true for any fixed $p \in \XCal$ satisfying $\pi_{x_1}(p) \leq 1 - \tfrac{1}{\sqrt{T}}$, we have
\begin{equation*}
\sum_{t=1}^T u^t(p') - \EE\left[\sum_{t=1}^T u^t(\hat{x}^t)\right] = \tilde{O}(n\sqrt{T}). 
\end{equation*}
Since each function $u^t$ is Lipschitz continuous, we have 
\begin{equation*}
\left|\sum_{t=1}^T u^t(p') - u^t(p)\right| = O(\sqrt{T}). 
\end{equation*}
Therefore, Eq.~\eqref{inequality:regret-optimal-result} also holds true in this case. 
\end{itemize}
Combining the above two cases with the fact that $p \in \XCal$ is arbitrarily chosen, we have
\begin{equation*}
\textnormal{Reg}_T = \max_{p \in \XCal} \left\{\sum_{t=1}^T u^t(p) - \EE\left[\sum_{t=1}^T u^t(\hat{x}^t)\right]\right\} = \tilde{O}(n\sqrt{T}). 
\end{equation*}
This completes the proof. 

\section{Proofs for Example~\ref{Example:CC},~\ref{Example:KA} and~\ref{Example:pricing}}\label{app:examples}
We show that Cournot competition and Kelly auction in Example~\ref{Example:CC} and~\ref{Example:KA} are $(\beta, \{\lambda_i\}_{i \in \NCal})$-strongly monotone games (cf. Definition~\ref{def:SMS}) for some $\beta > 0$ and $\lambda_i > 0$ for all $i \in \NCal$. 

\paragraph{Cournot competition.} Example~\ref{Example:CC} satisfies Definition~\ref{def:SMS} with $\beta = b$ and $\lambda_i = 1$ for all $i \in \NCal$. Indeed, each players' reward function is given by 
\begin{equation*}
u_i(x) = x_i\left(a - b\left(\sum_{j \in \NCal} x_j\right)\right) - c_ix_i, \quad \textnormal{for some } a, b, c_i > 0. 
\end{equation*}
Taking the derivative of $u_i(x)$ with respect to $x_i$ yields that $v_i(x) = a - c_i - b\left(\sum_{j \in \NCal} x_j\right) - bx_i$. Then, we have  
\begin{equation*}
\sum_{i \in \NCal} \langle x'_i - x_i, v_i(x') - v_i(x)\rangle = -b\left(\sum_{i \in \NCal} x_i\right)^2 - b\|x' - x\|^2 \leq  - b\|x' - x\|^2, \quad \textnormal{for all } x, x' \in \XCal. 
\end{equation*}
Therefore, we conclude the desired result as mentioned before. 

\paragraph{Kelly auction.} Example~\ref{Example:KA} satisfies Definition~\ref{def:SMS} with $\beta = \min_{s \in \SCal}\{q_s d_s\}\left(\sum_{s \in \SCal} d_s + \sum_{i \in \NCal} B_i\right)^{-3}$ and $\lambda_i = 1/g_i$ for all $i \in \NCal$. Indeed, each players' reward function is given by 
\begin{equation*}
u_i(x) = \left(\sum_{s \in \SCal} \frac{g_i q_s x_{is}}{d_s + \sum_{j\in \NCal} x_{js}}\right) - \sum_{s \in \SCal} x_{is}, \quad \textnormal{for some } g_i, d_s, q_s >0. 
\end{equation*}
Define $v_i(x) = \nabla_i u_i(x)$ for all $i \in \NCal$, it suffices to prove that 
\begin{equation*}
\sum_{i \in \NCal} \tfrac{1}{g_i} \langle x'_i - x_i, v_i(x') - v_i(x)\rangle \leq -\tfrac{\min_{s \in \SCal}\{q_s d_s\}}{(\sum_{s \in \SCal} d_s + \sum_{i \in \NCal} B_i)^3}\|x' - x\|^2, \quad \textnormal{for all } x, x' \in \XCal. 
\end{equation*}
The following proposition is a restatement of~\citet[Theorem~6]{Rosen-1965-Existence} and plays an important role in the subsequent analysis. 
\begin{proposition}\label{prop:DSC}
Given a continuous game $\GCal = (\NCal, \XCal = \prod_{i=1}^N \XCal_i, \{u_i\}_{i=1}^N)$, where each $u_i$ is twice continuously differentiable. For each $x \in \XCal$ , define the $\lambda$-weighted Hessian matrix $H^\lambda(x)$ as follows:
\begin{equation*}
H_{ij}^\lambda(x) = \frac{1}{2}\lambda_i \nabla_j v_i(x) + \frac{1}{2}\lambda_j (\nabla_i v_j(x))^\top. 
\end{equation*}
If $H^\lambda(x)$ is negative-definite for every $x \in \XCal$, we have
\begin{equation*}
\sum_{i \in \NCal} \lambda_i \langle x'_i - x_i, v_i(x') - v_i(x)\rangle \leq 0 \quad \textnormal{for all } x, x' \in \XCal, 
\end{equation*}
where the equality holds true if and only if $x = x'$. 
\end{proposition}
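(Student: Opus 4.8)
The plan is to reduce the statement to integrating the quadratic form associated with $H^\lambda$ along the segment joining $x$ and $x'$. Fix $x, x' \in \XCal$; using that each $\XCal_i$ is convex, set $x(t) = x + t(x'-x) \in \XCal$ for $t \in [0,1]$ and write $y = x' - x = (y_1, \dots, y_N)$ with $y_i = x_i' - x_i \in \br^{n_i}$. Define the scalar function $g(t) = \sum_{i \in \NCal} \lambda_i \langle y_i, v_i(x(t))\rangle$. Because each $u_i$ is twice continuously differentiable, each $v_i$ is $C^1$, so $g$ is continuously differentiable and the fundamental theorem of calculus gives $g(1) - g(0) = \int_0^1 g'(t)\,dt$. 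The left-hand side is exactly $\sum_{i \in \NCal}\lambda_i\langle x_i' - x_i, v_i(x') - v_i(x)\rangle$, the quantity to be bounded.

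Next I would compute $g'(t)$ by the chain rule. Differentiating $v_i(x(t))$ in $t$ produces $\sum_{j \in \NCal} \nabla_j v_i(x(t))\, y_j$, so that $g'(t) = \sum_{i,j \in \NCal} \lambda_i\, y_i^\top \nabla_j v_i(x(t))\, y_j$. Writing $J(x)$ for the block Jacobian of $v = (v_1, \dots, v_N)$ whose $(i,j)$ block equals $\nabla_j v_i(x)$, letting $\Lambda = \diag(\lambda_1 I_{n_1}, \dots, \lambda_N I_{n_N})$, and setting $M(t) = \Lambda J(x(t))$, this reads $g'(t) = y^\top M(t)\, y$. Since a quadratic form sees only the symmetric part of its matrix, $g'(t) = y^\top \tfrac{1}{2}(M(t) + M(t)^\top)\, y$, and a direct block computation identifies the $(i,j)$ block of $\tfrac{1}{2}(M(t) + M(t)^\top)$ as $\tfrac{1}{2}\lambda_i \nabla_j v_i(x(t)) + \tfrac{1}{2}\lambda_j(\nabla_i v_j(x(t)))^\top = H_{ij}^\lambda(x(t))$. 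Hence $g'(t) = y^\top H^\lambda(x(t))\, y$.

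Finally I would invoke the hypothesis: $H^\lambda(x(t))$ is negative definite for every $t \in [0,1]$, so $g'(t) \leq 0$ for all $t$, and therefore $g(1) - g(0) \leq 0$, which is the claimed inequality. For the equality characterization, if $x = x'$ then $y = 0$ and both sides vanish. Conversely, if $x \neq x'$ then $y \neq 0$, so $y^\top H^\lambda(x(t))\, y < 0$ for every $t \in [0,1]$; since $t \mapsto y^\top H^\lambda(x(t))\, y$ is continuous on the compact interval $[0,1]$ (the entries of $H^\lambda$ are continuous in $x$, as the $u_i$ are $C^2$), the integral is strictly negative, so the inequality is strict. This yields equality precisely when $x = x'$.

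The only real obstacle I anticipate is the block-matrix bookkeeping in the symmetrization step: one must keep straight that $\nabla_j v_i(x)$ is an $n_i \times n_j$ matrix and verify that the symmetric part of $\Lambda J(x)$ matches the stated $H^\lambda(x)$ block by block, in particular the transpose $(\nabla_i v_j(x))^\top$ appearing in the off-diagonal blocks. The remaining points -- staying inside $\XCal$ along the segment (immediate from convexity) and the continuity needed for the strict inequality -- are routine. Everything else is the standard derivation of monotonicity from a negative definite weighted Jacobian.
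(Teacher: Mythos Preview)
Your argument is correct and is precisely the classical line-integral proof due to Rosen. Note, however, that the paper does not supply its own proof of this proposition: it explicitly presents the result as a restatement of \citet[Theorem~6]{Rosen-1965-Existence} and proceeds directly to use it. So there is no paper-proof to compare against; what you have written is essentially Rosen's original derivation, carried out cleanly with the block-Jacobian symmetrization made explicit.
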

As a consequence of Proposition~\ref{prop:DSC}, we have $\sum_{i \in \NCal} \lambda_i \langle x'_i - x_i, v_i(x') - v_i(x)\rangle \leq -\beta\|x' - x\|^2$ for all $x, x' \in \XCal$ if $H^\lambda(x) \preceq -\beta I_n$ for all $x \in \XCal$. Thus, it suffices to show that 
\begin{equation}\label{condition:KA-main}
H^\lambda(x) \preceq -\tfrac{\min_{s \in \SCal}\{q_s d_s\}}{(\sum_{s \in \SCal} d_s + \sum_{i \in \NCal} B_i)^3} I_n, \quad \textnormal{for all } x \in \XCal. 
\end{equation}
where $H_{ij}^\lambda(x) = \frac{1}{2g_i} \nabla_j v_i(x) + \frac{1}{2g_j} (\nabla_i v_j(x))^\top$ for each $x \in \XCal$ and all $(i, j) \in [N] \times [N]$. 

We define the so-called social welfare function as follows, 
\begin{equation}\label{def:KA-welfare}
U(x) = \sum_{k \in \NCal} \tfrac{1}{g_k} u_k(x) = \sum_{s \in \SCal} q_s \left(\tfrac{\sum_{i \in \NCal} x_{is}}{d_s + \sum_{i \in \NCal} x_{is}}\right) - \sum_{i \in \NCal} \sum_{s \in \SCal} x_{is}. 
\end{equation}
By the definition, we have
\begin{equation*}
\begin{array}{lclcl}
\nabla_{ii}^2 U(x) & = & \sum_{k \in \NCal} \tfrac{1}{g_k} \nabla_{ii}^2 u_k(x) & = & 2H_{ii}^\lambda(x) - \tfrac{1}{g_i} \nabla_{ii}^2 u_i(x) + \sum_{k \neq i} \tfrac{1}{g_k} \nabla_{ii}^2 u_k(x), \\
\nabla_{ij}^2 U(x) & = & \sum_{k \in \NCal} \tfrac{1}{g_k} \nabla_{ij}^2 u_k(x) & = & 2H_{ij}^\lambda(x) + \sum_{k \neq i, j} \tfrac{1}{g_k} \nabla_{ij}^2 u_k(x). 
\end{array}
\end{equation*}
For the simplicity, we just write $M_{ij}(x) = \nabla_{ij}^2 U(x)$, $D_{ij}(x) = \frac{1}{g_i} \nabla_{ij} u_i(x) \delta_{ij}$ and $B_{ij}^k(x) = \nabla_{ij} u_k(x) (1-\delta_{ik})(1-\delta_{jk})$. Then, we have
\begin{equation}\label{condition:KA-first}
2H^\lambda(x) = M(x) + D(x) - \sum_{i \in \NCal} \tfrac{1}{g_i} B^i(x), \quad \textnormal{for all } x \in \XCal. 
\end{equation}
We are now ready to prove Eq.~\eqref{condition:KA-main}. Indeed, since the $f(x) = \frac{x}{c+x}$ is concave for all $c > 0$, we have
\begin{itemize}
\item The social welfare function $U(x)$ is concave in $x$. 
\item Each reward function $u_i(x_i; x_{-i})$ is concave in $x_i$ and convex in $x_{-i}$. 
\end{itemize}
Since $U(x)$ is concave in $x$, we have $M(x) \preceq 0$ for all $x \in \XCal$. Since $u_i(x_i; x_{-i})$ is convex in $x_{-i}$, we have $B^i(x) \succeq 0$ for all $i \in \NCal$ and all $x \in \XCal$ (simply note that $B^i(x)$ is the Hessian of $u_i(x_i; x_{-i})$ with the variable $x_i$ omitted). Putting all these pieces together with Eq.~\eqref{condition:KA-first} yields that 
\begin{equation}\label{condition:KA-second}
H^\lambda(x) \preceq \tfrac{1}{2}D(x), \quad \textnormal{for all } x \in \XCal. 
\end{equation}
Note that $D(x) \in \br^{n \times n}$ is a block diagonal matrix with $\frac{1}{g_i} \nabla_{ii}^2 u_i(x_i; x_{-i})$ being the diagonal elements. Since $u_i(x)$ is the sum of $S$ functions where the $s^{\textnormal{th}}$ one only depends on $(x_{1s}, x_{2s}, \ldots, x_{Ns})$, we have
\begin{equation*}
\nabla_{ii}^2 u_i(x_i; x_{-i}) = \begin{pmatrix}
- \frac{2g_i q_1(d_1 + \sum_{j \neq i} x_{j1})}{(d_1 + \sum_{j \in \NCal} x_{j1})^3}  & & & \\ 
& - \frac{2g_i q_2(d_2 + \sum_{j \neq i} x_{j2})}{(d_2 + \sum_{j \in \NCal} x_{j2})^3} & & \\
& & & \ddots & \\ 
& & & & - \frac{2g_i q_S(d_S + \sum_{j \neq i} x_{jS})}{(d_S + \sum_{j \in \NCal} x_{jS})^3} 
\end{pmatrix}
\end{equation*}
Since $0 < d_s + \sum_{j \neq i} x_{js} \leq d_s + \sum_{j \in \NCal} x_{js} \leq \sum_{s \in \SCal} d_s + \sum_{i \in \NCal} B_i$ for all $x \in \XCal$, we have
\begin{equation}\label{condition:KA-third}
\tfrac{1}{g_i} \nabla_{ii}^2 u_i(x_i; x_{-i}) \preceq -\tfrac{2\min_{s \in \SCal}\{q_s d_s\}}{(\sum_{s \in \SCal} d_s + \sum_{i \in \NCal} B_i)^3} I_{n_i}. 
\end{equation}
Plugging Eq.~\eqref{condition:KA-third} into Eq.~\eqref{condition:KA-second}, we have
\begin{equation*}
H^\lambda(x) \preceq -\tfrac{\min_{s \in \SCal}\{q_s d_s\}}{(\sum_{s \in \SCal} d_s + \sum_{i \in \NCal} B_i)^3} I_n. 
\end{equation*}
Therefore, we conclude the desired result as mentioned before. 

\paragraph{Retailer Pricing Games.} For the single-retailer multi-product pricing setting, we have 
\begin{equation*}
u^t(p) = \sum_{i=1}^n p_i^t D_i^t(p) = \sum_{i=1}^n p_i\left(\sum_{j=1}^n a_{ij}^t p_j + b_i^t\right) = \sum_{i=1}^n \sum_{j=1}^n a_{ij}^t p_i p_j + \sum_{i=1}^n b_i^t p_i. 
\end{equation*}
Recall that $A^t = (a_{ij}^t)_{1 \leq i, j \leq n} \in \br^{n \times n}$ (not necessarily symmetrical) and we let $b = (b_i^t)_{1 \leq i \leq n}$ , we have 
\begin{equation*}
u^t(p) = p^\top A p + b^\top p. 
\end{equation*}
Since the transpose of a scalar is itself, we have $u^t(p) = (u^t(p))^\top$ for all $p$.  Then, we have 
\begin{equation*}
u^t(p) = \tfrac{1}{2}\left(p^\top A^t p + b^\top p + p^\top (A^t)^\top p + p^\top b\right) = \tfrac{1}{2}p^\top (A^t + (A^t)^\top)p + b^\top p. 
\end{equation*}
If $A^t + (A^t)^\top \leq -\beta I_n$ for all $t \geq 1$, we can easily verify that 
\begin{equation*}
u^t(p) - u^t(p') - (p - p')^\top \nabla u^t(p) \leq -\tfrac{\beta}{2}\|p - p'\|^2. 
\end{equation*}
which implies that the total revenue function $u^t(p)$ is $\beta$-strongly concave in $p$. 

For the multi-retailer pricing with a single product (for simplicity), we can show that the game satisfies Definition~\ref{def:SMS} with $\beta > 0$ and $\lambda_i = 1$ for all $i \in \NCal$ if $A + A^\top \leq -\beta I_n$. Indeed, each players' reward function is given by 
\begin{equation*}
u_i(p) = p_i D_i(p) = \sum_{j \in \NCal} a_{ij} p_i p_j + b_i p_i
\end{equation*}
Taking the derivative of $u_i(p)$ with respect to $p_i$ yields that $v_i(p) = \sum_{j \in \NCal} a_{ij} p_j + a_{ii} p_i + b_i$.  As a consequence of Proposition~\ref{prop:DSC}, we have $\sum_{i \in \NCal} \lambda_i \langle p'_i - p_i, v_i(p') - v_i(p)\rangle \leq -\beta\|p' - p\|^2$ for all $p, p' \in \br^n$ if $H^\lambda(p) \preceq -\beta I_n$ for all $p \in \br^n$. Thus, it suffices to show that 
\begin{equation}\label{condition:pricing-main}
H(p) \preceq -\beta I_n, \quad \textnormal{for all } p \in \br^n,  
\end{equation}
where $H_{ij}(p) = \frac{1}{2} \nabla_j v_i(p) + \frac{1}{2} (\nabla_i v_j(p))^\top$ for each $p \in \br^n$ and all $1 \leq i, j \leq n$. Through some simple calculations, we have $H(p) = A + A^\top$ for all $p \in \br^n$ where $A = (a_{ij})_{1 \leq i, j \leq n} \in \br^{n \times n}$. Therefore, we conclude the desired result in Eq.~\eqref{condition:pricing-main} from the condition assumed in Example~\ref{Example:pricing}. 

\section{Proof of Lemma~\ref{Lemma:ellipsodial-SPSA}}\label{app:ellipsodial-SPSA}
By the definition, we have
\begin{equation*}
\EE[\hat{v}_i \mid x_1, x_2, \ldots, x_N] = \frac{n_i}{\prod\limits_{i \in \NCal} \vol(\bs_i)} \int_{\bs_1} \cdots \int_{\bs_N} u_i(x_1 + A_1z_1, \ldots, x_N + A_Nz_N) (A_i)^{-1} z_i \; dz_1 \ldots dz_N.  
\end{equation*}
Since all of $A_i$ are invertible, we can define the auxiliary functions $U_i(x)$ by 
\begin{equation}\label{def:U}
U_i(x_1, x_2, \ldots, x_N) = u_i(A_1x_1, A_2x_2, \ldots, A_N x_N), \quad \textnormal{for all } i \in \NCal, 
\end{equation}
and have
\begin{equation*}
\EE[\hat{v}_i \mid x_1, x_2, \ldots, x_N] = (A_i)^{-1}\left(\frac{n_i}{\prod\limits_{i \in \NCal} \vol(\bs_i)} \int_{\bs_1} \cdots \int_{\bs_N} U_i((A_1)^{-1}x_1 + z_1, \ldots, (A_N)^{-1}x_N + z_N) z_i \; dz_1 \ldots dz_N\right).  
\end{equation*}
For simplicity, we let $\tilde{x}_i = (A_i)^{-1}x_i$. By applying the same argument as used in~\cite[Lemma~C.1]{Bravo-2018-Bandit} with the independence of sampling directions $\{z_i\}_{i \in \NCal}$, we have
\begin{eqnarray*}
\lefteqn{\frac{n_i}{\prod\limits_{i \in \NCal} \vol(\bs_i)} \int_{\bs_1} \cdots \int_{\bs_N} U_i(\tilde{x}_1 + z_1, \ldots, \tilde{x}_N + z_N) z_i \; dz_1 \ldots dz_N} \\
& = & \frac{n_i}{\prod\limits_{i \in \NCal} \vol(\bs_i)} \int_{\bs_1} \cdots \int_{\bs_N} U_i(\tilde{x}_1 + z_1, \ldots, \tilde{x}_N + z_N) \frac{z_i}{\|z_i\|} \; dz_1 \ldots dz_N \\
& = & \frac{n_i}{\prod\limits_{i \in \NCal} \vol(\bs_i)} \int_{\bs_i} \int_{\prod\limits_{j \neq i} \bs_j} U_i(\tilde{x}_i + z_i; \tilde{x}_{-i} + z_{-i}) \frac{z_i}{\|z_i\|} \; dz_i dz_{-i} \\
& = & \frac{n_i}{\prod\limits_{i \in \NCal} \vol(\bs_i)} \int_{\BB_i} \int_{\prod\limits_{j \neq i} \bs_j} \nabla_i U_i(\tilde{x}_i + w_i; \tilde{x}_{-i} + z_{-i}) \frac{z_i}{\|z_i\|} \; dw_i dz_{-i}, 
\end{eqnarray*}
where, in the last equality, we use the identity 
\begin{equation*}
\nabla \int_{\BB} F(x+w) dw = \int_{\bs} F(x+z)\frac{z}{\|z\|} \; dz, 
\end{equation*}
which, in turn, follows from the Stoke's theorem. Since $\vol(\BB_i) = (1/n_i)\vol(\bs_i)$, the above argument indeed implies that $\EE[\hat{v}_i \mid x] = (A_i)^{-1}\nabla_i \hat{U}_i(\tilde{x})$ with $\hat{U}_i(x) = \EE_{w_i \sim \BB^{n_i}}\EE_{z_{-i} \sim \Pi_{j \neq i} \bs^{n_j}}[U_i(x_i+w_i; x_{-i}+z_{-i})]$. Using Eq.~\eqref{def:U} and $\tilde{x}_i = (A_i)^{-1}x_i$, we have
\begin{equation*}
\EE[\hat{v}_i \mid x_1, x_2, \ldots, x_N] = (A_i)^{-1} A_i\nabla_i \hat{u}_i(x_i) = \nabla_i \hat{u}_i(x_i), 
\end{equation*}
with $\hat{u}_i(x) = \EE_{w_i \sim \BB^{n_i}}\EE_{z_{-i} \sim \Pi_{j \neq i} \bs^{n_j}}[u_i(x_i+A_iw_i; \hat{x}_{-i})]$ where $\hat{x}_i = x_i + A_i z_i$ for all $i \in \NCal$.
	
Moreover, we have $v_i(x)$ is $\ell_i$-Lipschitz continuous. Thus, we have
\begin{equation*}
\|v_i(x_i+A_iw_i; \hat{x}_{-i}) - v_i(x)\| \leq \ell_i\sqrt{\|A_iw_i\|^2 + \sum_{j \neq i} \|A_j z_j\|^2} \leq \ell_i\sqrt{\sum_{j \in \NCal} (\sigma_{\max}(A_j))^2}. 
\end{equation*}
This completes the proof. 

\section{Proof of Lemma~\ref{Lemma:MA-perfect-main}}\label{app:MA-perfect-main}
Since Algorithm~\ref{Alg:MA} is developed in which each player chooses her decision using Algorithm~\ref{Alg:SA}, Lemma~\ref{Lemma:SA-main} implies that 
\begin{equation*}
D_{R_i}(p_i, x_i^{t+1}) + \tfrac{\eta_t\beta(t+1)}{2\lambda_i}\|x_i^{t+1} - p_i\|^2 \leq D_{R_i}(p_i, x_i^t) + \tfrac{\eta_t\beta(t+1)}{2\lambda_i}\|x_i^t - p_i\|^2 + 2\eta_t^2\|A_i^t\hat{v}_i^t\|^2 + \eta_t \langle\hat{v}_i^t, x_i^t - p_i\rangle, 
\end{equation*}
where $p_i \in \XCal_i$ and $\{\eta_t\}_{t \geq 1}$ is a nonincreasing sequence satisfying that $0 < \eta_t \leq \tfrac{1}{2n_i L}$. Since $n = \sum_{i \in \NCal} n_i$ in the multi-agent setting, the above inequality holds true for all $i \in \NCal$ if $0 < \eta_t \leq \tfrac{1}{2nL}$. Multiplying it by $\lambda_i$, summing over $i \in \NCal$, and then using $\eta_t \geq \eta_{t+1}$, we obtain the desired inequality.

\section{Proof of Theorem~\ref{Thm:last-iterate-perfect}}\label{app:last-iterate-perfect}
We are in a position to prove Theorem~\ref{Thm:last-iterate-perfect} regarding the last-iterate convergence rate of Algorithm~\ref{Alg:MA} for the case when the bandit feedback is available. For simplicity, we assume that $\max_{x, x' \in \XCal_i} \|x-x'\| \leq B_i$ for some $B_i>0$. Since $\eta_t = \tfrac{1}{2nL\sqrt{t}} \leq \tfrac{1}{2nL}$, Lemma~\ref{Lemma:MA-perfect-main} implies that 
\begin{eqnarray*}
\lefteqn{\sum_{i \in \NCal} \lambda_i D_{R_i}(p_i, x_i^{t+1}) + \tfrac{\eta_{t+1}\beta(t+1)}{2}\left(\sum_{i \in \NCal} \|x_i^{t+1} - p_i\|^2\right)} \\
& \leq & \sum_{i \in \NCal} \lambda_i D_{R_i}(p_i, x_i^t) + \tfrac{\eta_t\beta(t+1)}{2}\left(\sum_{i \in \NCal} \|x_i^t - p_i\|^2\right) + 2\eta_t^2\left(\sum_{i \in \NCal} \lambda_i \|A_i^t\hat{v}_i^t\|^2\right) + \eta_t\left(\sum_{i \in \NCal} \lambda_i \langle\hat{v}_i^t, x_i^t - p_i\rangle\right).
\end{eqnarray*}
Taking the expectation of both sides conditioned on $x^t$, we have
\begin{eqnarray}\label{inequality:estm-main}
\lefteqn{\EE\left[\sum_{i \in \NCal} \lambda_i D_{R_i}(p_i, x_i^{t+1}) \mid x^t\right] + \tfrac{\eta_{t+1}\beta(t+1)}{2}\EE\left[\sum_{i \in \NCal} \|x_i^{t+1} - p_i\|^2 \mid x^t\right]} \\
& \leq & \sum_{i \in \NCal} \lambda_i D_{R_i}(p_i, x_i^t) + \tfrac{\eta_t\beta(t+1)}{2}\left(\sum_{i \in \NCal} \|x_i^t - p_i\|^2\right) + 2\eta_t^2\EE\left[\sum_{i \in \NCal} \lambda_i \|A_i^t\hat{v}_i^t\|^2 \ \Big\vert x^t\right] + \eta_t\EE\left[\sum_{i \in \NCal} \lambda_i\langle\hat{v}_i^t, x_i^t - p_i\rangle \mid x^t\right]. \nonumber
\end{eqnarray}
By the definition of $\hat{v}_i^t$ and using Lemma~\ref{Lemma:ellipsodial-SPSA}, we have
\begin{equation}\label{inequality:estm-first}
\EE[\|A_i^t\hat{v}_i^t\|^2 \mid x^t] = n_i^2 \cdot \EE[|u_i(x_1^t + A_1^t z_1^t, \ldots, x_N^t + A_N^t z_N^t)|^2\|z_i^t\|^2 \mid x^t] \leq n_i^2 L^2, 
\end{equation}
and 
\begin{equation*}
\EE[\langle\hat{v}_i^t, x_i^t - p_i\rangle \mid x^t] = \langle \nabla_i \hat{u}_i(x^t), x_i^t - p_i\rangle. 
\end{equation*}
By using Lemma~\ref{Lemma:ellipsodial-SPSA} again and the Young's inequality, we have
\begin{eqnarray}\label{inequality:estm-residue}
\lefteqn{\EE[\langle\hat{v}_i^t, x_i^t - p_i\rangle \mid x^t] = \langle \nabla_i u_i(x^t), x_i^t - p_i\rangle + \langle \nabla_i \hat{u}_i(x^t) - \nabla_i u_i(x^t), x_i^t - p_i\rangle} \\
& \leq & \langle \nabla_i u_i(x^t), x_i^t - p_i\rangle + \tfrac{\lambda_i}{2\beta}\|\nabla_i \hat{u}_i(x^t) - \nabla_i u_i(x^t)\|^2 + \tfrac{\beta}{2\lambda_i}\|x_i^t - p_i\|^2 \nonumber \\
& \leq & \langle \nabla_i u_i(x^t), x_i^t - p_i\rangle + \tfrac{\lambda_i\ell_i^2}{2\beta}\left(\sum_{i \in \NCal} (\sigma_{\max}(A_i^t))^2\right) + \tfrac{\beta}{2\lambda_i}\|x_i^t - p_i\|^2. \nonumber
\end{eqnarray}
Since $\nabla^2 R_i(x)$ is positive definite for all $x \in \XCal$ and $\eta_t = \frac{1}{2nL\sqrt{t}}$, we have
\begin{equation}\label{inequality:upper-bound-Ai}
(\sigma_{\max}(A_i^t))^2 \leq \tfrac{\lambda_i}{\eta_t\beta(t+1)} = \tfrac{2nL\lambda_i}{\beta\sqrt{t+1}}. 
\end{equation}
Plugging Eq.~\eqref{inequality:upper-bound-Ai} into Eq.~\eqref{inequality:estm-residue} and summing up the resulting inequality over $i \in \NCal$, we have
\begin{equation*}
\EE\left[\sum_{i \in \NCal} \lambda_i \langle\hat{v}_i^t, x_i^t - p_i\rangle \mid x^t\right] \leq \sum_{i \in \NCal} \lambda_i \langle \nabla_i u_i(x^t), x_i^t - p_i\rangle +\tfrac{nL}{\beta^2\sqrt{t+1}}\left(\sum_{i \in \NCal} \lambda_i\right)\left(\sum_{i \in \NCal} \lambda_i^2 \ell_i^2\right) + \tfrac{\beta}{2}\left(\sum_{i \in \NCal} \|x_i^t - p_i\|^2\right).  
\end{equation*}
Since a game is $(\beta, \{\lambda_i\}_{i \in \NCal}\}$-strongly monotone, we have
\begin{equation*}
\sum_{i \in \NCal} \lambda_i \langle \nabla_i u_i(x^t), x_i^t - p_i\rangle \leq -\beta\left(\sum_{i \in \NCal} \|x_i^t - p_i\|^2\right) + \sum_{i \in \NCal} \lambda_i\langle \nabla_i u_i(p), x_i^t - p_i\rangle. 
\end{equation*}
Putting these pieces together yields that 
\begin{equation}\label{inequality:estm-second}
\EE\left[\sum_{i \in \NCal} \lambda_i\langle\hat{v}_i^t, x_i^t - p_i\rangle \mid x^t\right] \leq \tfrac{nL}{\beta^2\sqrt{t+1}}\left(\sum_{i \in \NCal} \lambda_i\right)\left(\sum_{i \in \NCal} \lambda_i^2\ell_i^2\right) - \tfrac{\beta}{2}\left(\sum_{i \in \NCal} \|x_i^t - p_i\|^2\right) + \sum_{i \in \NCal} \lambda_i\langle \nabla_i u_i(p), x_i^t - p_i\rangle.
\end{equation}
Plugging Eq.~\eqref{inequality:estm-first} and Eq.~\eqref{inequality:estm-second} into Eq.~\eqref{inequality:estm-main} and taking the expectation of both sides, we have
\begin{eqnarray*}
\lefteqn{\EE\left[\sum_{i \in \NCal} \lambda_i D_{R_i}(p_i, x_i^{t+1})\right] + \tfrac{\eta_{t+1}\beta(t+1)}{2}\EE\left[\sum_{i \in \NCal} \|x_i^{t+1} - p_i\|^2\right] \leq \eta_t\EE\left[\sum_{i \in \NCal} \lambda_i \langle\nabla_i u_i(p), x_i^t - p_i\rangle\right]} \\
& & + \EE\left[\sum_{i \in \NCal} \lambda_i D_{R_i}(p_i, x_i^t)\right] + \tfrac{\eta_t\beta t}{2}\EE\left[\sum_{i \in \NCal} \|x_i^t - p_i\|^2\right] + 2\eta_t^2 L^2\left(\sum_{i \in \NCal} n_i^2\lambda_i \right) + \tfrac{\eta_t nL}{\beta^2\sqrt{t+1}}\left(\sum_{i \in \NCal} \lambda_i\right)\left(\sum_{i \in \NCal} \lambda_i^2\ell_i^2\right).
\end{eqnarray*}
Summing up the above inequality over $t = 1, 2, \ldots, T-1$ and using $\eta_t = \frac{1}{2nL\sqrt{t}}$, we have
\begin{eqnarray*}
\lefteqn{\EE\left[\sum_{i \in \NCal} \lambda_i D_{R_i}(p_i, x_i^T)\right] + \tfrac{\beta\sqrt{T}}{4nL}\EE\left[\sum_{i \in \NCal} \|x_i^T - p_i\|^2\right] \leq \sum_{t=1}^{T-1} \eta_t\EE\left[\sum_{i \in \NCal} \lambda_i \langle \nabla_i u_i(p), x_i^t - p_i\rangle\right]} \\
& & + \sum_{i \in \NCal} \lambda_i D_{R_i}(p_i, x_i^1) + \tfrac{\beta}{4nL}\left(\sum_{i \in \NCal} \|x_i^1 - p_i\|^2\right) + \tfrac{1}{2n^2}\left(\sum_{i \in \NCal} n_i^2\lambda_i \right)\left(\sum_{t=1}^{T-1} \tfrac{1}{t}\right) + \tfrac{1}{2\beta^2}\left(\sum_{i \in \NCal} \lambda_i\right)\left(\sum_{i \in \NCal} \lambda_i^2\ell_i^2\right)\left(\sum_{t=1}^{T-1} \tfrac{1}{t}\right).  
\end{eqnarray*}
Rearranging the above inequality and using $D_{R_i}(x_i^\star, x_i^T) \geq 0$ and $\|x_i^1 - p_i\| \leq B_i$, we have
\begin{eqnarray}\label{inequality:estm-third}
\lefteqn{\tfrac{\beta\sqrt{T}}{4nL}\EE\left[\sum_{i \in \NCal} \|x_i^T - p_i\|^2\right] \leq \sum_{t=1}^{T-1} \eta_t\EE\left[\sum_{i \in \NCal} \lambda_i\langle \nabla_i u_i(p), x_i^t - p_i\rangle\right]} \\
& & + \sum_{i \in \NCal} \lambda_i D_{R_i}(p_i, x_i^1) + \tfrac{\beta}{4nL}\left(\sum_{i \in \NCal} B_i^2\right) + \left(\tfrac{1}{2n^2}\left(\sum_{i \in \NCal} n_i^2\lambda_i \right) + \tfrac{1}{2\beta^2}\left(\sum_{i \in \NCal} \lambda_i\right)\left(\sum_{i \in \NCal} \lambda_i^2\ell_i^2\right)\right)\log(T). \nonumber 
\end{eqnarray}
By the initialization $x_i^1 = \argmin_{x \in \XCal} R_i(x)$, we have $\nabla R_i(x_i^1) = 0$ which implies that $D_{R_i}(p_i, x_i^1) = R_i(p_i) - R(x_i^1)$. Then, let us inspect each coordinate of a unique Nash equilibrium $x^\star$ and set $p$ coordinate by coordinate in Eq.~\eqref{inequality:estm-third}; indeed, we consider the following two cases: 
\begin{itemize}
\item A point $x_i^\star \in \XCal_i$ satisfies that $\pi_{x_i^1}(x_i^\star) \leq 1 - \tfrac{1}{\sqrt{T}}$. By Lemma~\ref{Lemma:SCB-upper-bound}, we have $D_{R_i}(x_i^\star, x_i^1) = R_i(x_i^\star) - R_i(x_i^1) \leq \nu_i\log(T)$. In this case, we set $p_i = x_i^\star$. 
\item A point $x_i^\star \in \XCal_i$ satisfies that $\pi_{x_i^1}(x_i^\star) > 1 - \tfrac{1}{\sqrt{T}}$. Thus, we can find $\bar{x}_i \in \XCal_i$ such that $\|\bar{x}_i - x_i^\star\| = O(\frac{1}{\sqrt{T}})$ and $\pi_{x_i^1}(\bar{x}_i) \leq 1 - \tfrac{1}{\sqrt{T}}$. By Lemma~\ref{Lemma:SCB-upper-bound}, we have $D_{R_i}(\bar{x}_i, x_i^1) = R_i(\bar{x}_i) - R_i(x_i^1) \leq \nu_i\log(T)$. In this case, we set $p_i = \bar{x}_i$. 
\end{itemize}
For simplicity, we denote the set of coordinates in the former case by $\ICal$ which implies that the set of coordinates in the latter case is $\NCal \setminus \ICal$. Then, Eq.~\eqref{inequality:estm-third} becomes 
\begin{eqnarray*}
\lefteqn{\tfrac{\beta\sqrt{T}}{4nL}\EE\left[\sum_{i \in \ICal} \|x_i^T - x_i^\star\|^2 + \sum_{i \in \NCal \setminus \ICal} \|x_i^T - \bar{x}_i\|^2\right]} \\
& \leq & \sum_{t=1}^{T-1} \eta_t\left(\EE\left[\sum_{i \in \ICal} \lambda_i \langle \nabla_i u_i(x_\ICal^\star, \bar{x}_{\NCal \setminus \ICal}), x_i^t - x_i^\star\rangle + \sum_{i \in \NCal \setminus \ICal} \lambda_i \langle \nabla_i u_i(x_\ICal^\star, \bar{x}_{\NCal \setminus \ICal}), x_i^t - \bar{x}_i\rangle\right]\right) + \tfrac{\beta}{4nL}\left(\sum_{i \in \NCal} B_i^2\right) \\ 
& & + \left(\tfrac{1}{2n^2}\left(\sum_{i \in \NCal} n_i^2\lambda_i \right) + \tfrac{1}{2\beta^2}\left(\sum_{i \in \NCal} \lambda_i\right)\left(\sum_{i \in \NCal} \lambda_i^2\ell_i^2\right) + \sum_{i \in \NCal} \nu_i\lambda_i\right)\log(T). 
\end{eqnarray*}
Note that 
\begin{equation*}
\EE\left[\sum_{i \in \NCal \setminus \ICal} \|x_i^T - x_i^\star\|^2\right] \leq \EE\left[\sum_{i \in \NCal \setminus \ICal} \|x_i^T - \bar{x}_i\|^2\right] + \EE\left[\sum_{i \in \NCal \setminus \ICal} \|x_i^T - x_i^\star\|\|x_i^\star - \bar{x}_i\|\right]. 
\end{equation*}
Since $\nabla_i u_i: \XCal_i \mapsto \br^{n_i}$ is $\ell_i$-Lipschitz continuous, we have
\begin{eqnarray*}
\lefteqn{\EE\left[\sum_{i \in \ICal} \lambda_i\langle \nabla_i u_i(x_\ICal^\star, \bar{x}_{\NCal \setminus \ICal}), x_i^t - x_i^\star\rangle + \sum_{i \in \NCal \setminus \ICal} \lambda_i\langle \nabla_i u_i(x_\ICal^\star, \bar{x}_{\NCal \setminus \ICal}), x_i^t - \bar{x}_i\rangle\right]} \\
& = & \EE\left[\sum_{i \in \NCal} \lambda_i \langle \nabla_i u_i(x^\star), x_i^t - x_i^\star\rangle\right] + \EE\left[\sum_{i \in \NCal} \lambda_i \langle \nabla_i u_i(x_\ICal^\star, \bar{x}_{\NCal \setminus \ICal}) - \nabla_i u_i(x^\star), x_i^t - x_i^\star\rangle\right] \\ 
& & + \EE\left[\sum_{i \in \NCal \setminus \ICal} \lambda_i \langle \nabla_i u_i(x_\ICal^\star, \bar{x}_{\NCal \setminus \ICal}), x_i^\star - \bar{x}_i\rangle\right] \\
& \leq & \EE\left[\sum_{i \in \NCal} \lambda_i \langle \nabla_i u_i(x^\star), x_i^t - x_i^\star\rangle\right] + \EE\left[\sum_{i \in \NCal} \lambda_i\|\nabla_i u_i(x_\ICal^\star, \bar{x}_{\NCal \setminus \ICal}) - \nabla_i u_i(x^\star)\|\|x_i^t - x_i^\star\|\right] \\
& & + \EE\left[\sum_{i \in \NCal \setminus \ICal} \lambda_i \|\nabla_i u_i(x_\ICal^\star, \bar{x}_{\NCal \setminus \ICal})\|\|x_i^\star - \bar{x}_i\|\right] \\
& \leq & \EE\left[\sum_{i \in \NCal} \lambda_i \langle \nabla_i u_i(x^\star), x_i^t - x_i^\star\rangle\right] + \EE\left[\left(\sum_{i \in \NCal} \lambda_i \ell_i \|\|x_i^t - x_i^\star\|\right)\left(\sum_{i \in \NCal \setminus \ICal} \|x_i^\star - \bar{x}_i\| \right)\right] \\
& & + \EE\left[\sum_{i \in \NCal} \lambda_i \|\nabla_i u_i(x_\ICal^\star, \bar{x}_{\NCal \setminus \ICal})\|\|x_i^\star - \bar{x}_i\|\right]. 
\end{eqnarray*}
Since $x^\star \in \XCal$ is a Nash equilibrium and $x^t \in \XCal$, we have $\langle \nabla_i u_i(x^\star), x_i^t - x_i^\star\rangle \leq 0$ for all $i \leq \NCal$ (cf. Proposition~\ref{prop:CG}). Putting these pieces together yields that 
\begin{eqnarray*}
\lefteqn{\EE\left[\sum_{i \in \NCal} \|x_i^T - x_i^\star\|^2\right] \leq \EE\left[\sum_{i \in \NCal \setminus \ICal} \|x_i^T - x_i^\star\|\|x_i^\star - \bar{x}_i\|\right]}  \\
& & + \tfrac{4nL}{\beta\sqrt{T}} \sum_{t=1}^{T-1} \eta_t\left(\EE\left[\left(\sum_{i \in \NCal} \lambda_i \ell_i \|\|x_i^t - x_i^\star\|\right)\left(\sum_{i \in \NCal \setminus \ICal} \|x_i^\star - \bar{x}_i\| \right) + \sum_{i \in \NCal} \lambda_i \|\nabla_i u_i(x_\ICal^\star, \bar{x}_{\NCal \setminus \ICal})\|\|x_i^\star - \bar{x}_i\| \right]\right) \\ 
& & + \tfrac{1}{\sqrt{T}}\left(\sum_{i \in \NCal} B_i^2\right) + \tfrac{4nL}{\beta\sqrt{T}}\left(\tfrac{1}{2n^2}\left(\sum_{i \in \NCal} n_i^2\lambda_i \right) + \tfrac{1}{2\beta^2}\left(\sum_{i \in \NCal} \lambda_i\right)\left(\sum_{i \in \NCal} \lambda_i^2\ell_i^2\right) + \sum_{i \in \NCal} \nu_i\lambda_i\right)\log(T). 
\end{eqnarray*}
Since $\|x_i^\star - \bar{x}_i\| = O(\tfrac{1}{\sqrt{T}})$ and $\|x_i^t - x_i^\star\| \leq B_i$ for all $i \in \NCal$ and all $t = 1, 2, \ldots, T$, and $\eta_t = \frac{1}{2nL\sqrt{t}}$ for all $t = 1, 2, \ldots, T$, we have
\begin{equation*}
\EE\left[\sum_{i \in \NCal \setminus \ICal} \|x_i^T - x_i^\star\|\|x_i^\star - \bar{x}_i\|\right] = O(T^{-1/2}), 
\end{equation*}
and 
\begin{equation*}
\tfrac{4nL}{\beta\sqrt{T}} \sum_{t=1}^{T-1} \eta_t\left(\EE\left[\left(\sum_{i \in \NCal} \lambda_i \ell_i \|\|x_i^t - x_i^\star\|\right)\left(\sum_{i \in \NCal \setminus \ICal} \|x_i^\star - \bar{x}_i\| \right)\right]\right) = O(T^{-1/2}). 
\end{equation*}
Since $\sum_{i \in \NCal} n_i^2\lambda_i \leq n^2 (\max_{i \in \NCal} \lambda_i)$, we have
\begin{equation*}
\tfrac{4nL}{\beta\sqrt{T}}\left(\tfrac{1}{2n^2}\left(\sum_{i \in \NCal} n_i^2\lambda_i \right) + \tfrac{1}{2\beta^2}\left(\sum_{i \in \NCal} \lambda_i\right)\left(\sum_{i \in \NCal} \lambda_i^2\ell_i^2\right) + \sum_{i \in \NCal} \nu_i\lambda_i\right)\log(T) = \tilde{O}(nT^{-1/2}). 
\end{equation*}
Since $\nabla_i u_i$ is Lipschitz continuous, $x^\star, \bar{x} \in \XCal$ where $\XCal$ is compact and bounded, $\|x_i^\star - \bar{x}_i\| = O(\tfrac{1}{\sqrt{T}})$ and $\eta_t = \frac{1}{2nL\sqrt{t}}$ for all $t = 1, 2, \ldots, T$, we have 
\begin{equation*}
\tfrac{4nL}{\beta\sqrt{T}} \sum_{t=1}^{T-1} \eta_t\left(\EE\left[\sum_{i \in \NCal} \lambda_i \|\nabla_i u_i(x_\ICal^\star, \bar{x}_{\NCal \setminus \ICal})\|\|x_i^\star - \bar{x}_i\| \right]\right) = O(T^{-1/2}). 
\end{equation*}
In addition, we have $\frac{1}{\sqrt{T}}\left(\sum_{i \in \NCal} B_i^2\right) = O(T^{-1/2})$. Putting these pieces together yields that 
\begin{equation*}
\EE\left[\sum_{i \in \NCal}\|x_i^T - x_i^\star\|^2\right] = \tilde{O}(nT^{-1/2}). 
\end{equation*}
By the definition of $\hat{x}^t$, we have 
\begin{equation*}
\EE\left[\sum_{i \in \NCal}\|\hat{x}_i^T - x_i^\star\|^2\right] \leq 2\EE\left[\sum_{i \in \NCal}\|\hat{x}_i^T - x_i^T\|^2\right] + 2\EE\left[\sum_{i \in \NCal}\|x_i^T - x_i^\star\|^2\right]. 
\end{equation*}
By the update formula of $A_i^t$, we have
\begin{equation*}
\EE\left[\sum_{i \in \NCal}\|\hat{x}_i^T - x_i^T\|^2\right] \leq \EE\left[\sum_{i \in \NCal} (\sigma_{\max}(A_i^T))^2\right] \leq \tfrac{2nL(\sum_{i \in \NCal} \lambda_i)}{\beta\sqrt{T}}. 
\end{equation*}
Putting these pieces together yields that 
\begin{equation*}
\EE\left[\sum_{i \in \NCal}\|\hat{x}_i^T - x_i^\star\|^2\right] \leq \tfrac{4nL(\sum_{i \in \NCal} \lambda_i)}{\beta\sqrt{T}} + 2\EE\left[\sum_{i \in \NCal}\|x_i^T - x_i^\star\|^2\right] = \tilde{O}(nT^{-1/2}). 
\end{equation*}
This concludes the proof. 

\section{Imperfect Bandit Feedback}\label{app:imperfect}
We turn to the setting with imperfect bandit feedback and prove that Algorithm~\ref{Alg:MA} still achieves the last-iterate convergence to a unique Nash equilibrium at a rate of $\tilde{O}(nT^{-1/2})$. In particular, we assume that each player $i \in \NCal$ has access to a ``black box" feedback mechanism --- \textit{an oracle} --- which returns an estimate of their reward functions at their current action profile. This information can be imperfect for a multitude of reasons: for instance, (i) estimates may be susceptible to random measurement errors; (ii) the transmission of this information could be subject to noise; and/or (iii) the game's reward functions may be stochastic expectations of the form
\begin{equation*}
u_i(x) = \EE[\hat{u}_i(x, \omega)] \quad \textnormal{for some random variable } \omega. 
\end{equation*}
With all this in mind, we consider Algorithm~\ref{Alg:MA} with imperfect bandit feedback; that is, we replace Step 8 in Algorithm~\ref{Alg:MA} by $\hat{u}_i^t \leftarrow u_i(\hat{x}^t) + \xi_{i, t}$ for all $i \in \NCal$ where $\xi_t = (\xi_{i, t})_{i \in \NCal}$ is a bounded martingale difference adapted to the filtration generated by $\hat{x}^t$ (i.e., $\xi_{t-1}$ is measurable with respect to $x^t$ but $\xi_t$ is not). More specifically, $(\xi_t)_{t \geq 1}$ satisfies the statistical hypotheses (almost surely): (i) \textit{Zero-mean:} $\EE[\xi_{i, t} \mid x^t] = 0$; and (ii) \textit{Boundedness:} $\|\xi_{i, t}\| \leq \sigma$ for some constant $\sigma \geq 0$. Alternatively, the above conditions simply posit that the players' individual reward estimates are conditionally unbiased and the difference remains bounded, i.e., 
\begin{equation*}
\EE[\hat{u}_i^t \mid x^t] = u_i(\hat{x}^t), \quad \|\hat{u}_i^t - u_i(\hat{x}^t)\| \leq \sigma, \quad \textnormal{for all } i \in \NCal.  
\end{equation*}
The above conditions are satisfied by a broad range of error processes, including all compactly supported distributions (In particular, we will not be assuming i.i.d. errors; this point is crucial for applications to distributed control where measurements are typically correlated with the state of the system); further, many applications in operations research, control, network theory, and machine learning can be used as motivations for the bounded noise framework. 
\begin{theorem}\label{Thm:last-iterate-imperfect}
Suppose that $x^\star \in \XCal$ is a unique Nash equilibrium of a smooth and $(\beta, \{\lambda_i\}_{i \in \NCal})$-strongly monotone game. Each function $u_i$ satisfies that $|u_i(x)| \leq L$ for all $x \in \XCal$. If $T \geq 1$ is fixed and each player employs Algorithm~\ref{Alg:MA} with imperfect bandit feedback where $\xi_{i, t} > 0$ satisfies the above statistical hypotheses (almost surely) and the stepsize choice of $\eta_t = \tfrac{1}{2n(L+\sigma)\sqrt{t}}$, we have
\begin{equation*}
\EE\left[\sum_{i \in \NCal}\|\hat{x}_i^T - x_i^\star\|^2\right] = \tilde{O}\left(\sqrt{\frac{n^2}{T}}\right). 
\end{equation*}
\end{theorem}
\begin{proof}
The key is to establish a descent lemma for the iterates generated by Algorithm~\ref{Alg:MA} with imperfect bandit feedback and $\eta_t = \tfrac{1}{2n(L+\sigma)\sqrt{t}}$. 
\begin{lemma}\label{Lemma:MA-imperfect-main}
Suppose that the iterate $\{x^t\}_{t \geq 1}$ is generated by Algorithm~\ref{Alg:MA} with imperfect bandit feedback and let each function $u_i$ satisfy that $|u_i(x)| \leq L$ for all $x \in \XCal$ and $0 < \eta_t \leq \frac{1}{2n(L+\sigma)}$, we have
\begin{eqnarray*}
\lefteqn{\sum_{i \in \NCal} \lambda_i D_{R_i}(p_i, x_i^{t+1}) + \tfrac{\eta_t\beta(t+1)}{2}\left(\sum_{i \in \NCal} \|x_i^{t+1} - p_i\|^2\right)} \\
& \leq & \sum_{i \in \NCal} \lambda_i D_{R_i}(p_i, x_i^t) + \tfrac{\eta_t\beta(t+1)}{2}\left(\sum_{i \in \NCal} \|x_i^t - p_i\|^2\right) + 2\eta_t^2\left(\sum_{i \in \NCal} \lambda_i\|A_i^t\hat{v}_i^t\|^2\right) + \eta_t\left(\sum_{i \in \NCal} \lambda_i\langle\hat{v}_i^t, x_i^t - p_i\rangle \right).
\end{eqnarray*}
where $p_i \in \XCal_i$ and the sequence $\{\eta_t\}_{t \geq 1}$ is assumed to be non-increasing. 
\end{lemma}
\begin{proof}
Using the same argument as in Lemma~\ref{Lemma:MA-perfect-main}, we have
\begin{eqnarray}\label{inequality:MA-imperfect-main}
\lefteqn{\sum_{i \in \NCal} \lambda_i D_{R_i}(p_i, x_i^{t+1}) + \tfrac{\eta_t \beta(t+1)}{2}\left(\sum_{i \in \NCal} \|x_i^{t+1} - p_i\|^2\right)} \\
& \leq & \sum_{i \in \NCal} \lambda_i D_{R_i}(p_i, x_i^t) + \tfrac{\eta_t\beta(t+1)}{2}\left(\sum_{i \in \NCal} \|x_i^t - p_i\|^2\right) + \eta_t \left(\sum_{i \in \NCal} \lambda_i\langle\hat{v}_i^t, x_i^{t+1} - x_i^t\rangle\right) + \eta_t\left(\sum_{i \in \NCal} \lambda_i\langle\hat{v}_i^t, x_i^t - p_i\rangle \right). \nonumber
\end{eqnarray}
Next, we bound the term $\langle\hat{v}_i^t, x_i^{t+1} - x_i^t\rangle$ using Lemma~\ref{Lemma:SC-key-estimate}. Indeed, we define the function $g$ by 
\begin{equation*}
g(x_i) = \eta_t \langle\hat{v}_i^t, x_i^t - x_i\rangle + \tfrac{\eta_t \beta(t+1)}{2\lambda_i}\|x_i^t - x_i\|^2 + D_{R_i}(x_i, x_i^t). 
\end{equation*}
and apply the same argument as used in Lemma~\ref{Lemma:SA-main} and Lemma~\ref{Lemma:MA-perfect-main}. The only difference is that we have a new upper bound for the term $\|A_i^t\hat{v}_i^t\|$ here: 
\begin{equation*}
\|\hat{v}_i^t\|_{x_i^t, \star} = \|A_i^t\hat{v}_i^t\| =  n|u_i(x^t + A^t z^t) + \xi_{i, t}|\|z_i^t\| \leq n|u_i(x^t + A^t z^t) + \xi_{i, t}|. 
\end{equation*}
Combining the above inequality with the fact that $0 < \eta_t \leq \tfrac{1}{2n(L+\sigma)}$ gives $\lambda(x^t, g) \leq \tfrac{1}{2}$. Therefore, by Lemma~\ref{Lemma:SC-key-estimate}, we have $\|x_i^{t+1} - x_i^t\|_{x_i^t} \leq 2\eta_t\|\hat{v}_i^t\|_{x_i^t, \star}$. This together with the H\"{o}lder's inequality yields that
\begin{equation*}
\langle\hat{v}_i^t, x_i^{t+1} - x_i^t\rangle \leq \|\hat{v}_i^t\|_{x_i^t, \star}\|x_i^{t+1} - x_i^t\|_{x_i^t} \leq 2\eta_t\|\hat{v}_i^t\|_{x_i^t, \star}^2 = 2\eta_t\|A_i^t\hat{v}_i^t\|^2.  
\end{equation*}
Plugging the above inequality into Eq.~\eqref{inequality:MA-imperfect-main} yields the desired inequality. 
\end{proof}
Since the inequality in Lemma~\ref{Lemma:MA-imperfect-main} is the same as that derived in Lemma~\ref{Lemma:MA-perfect-main}, we can prove Theorem~\ref{Thm:last-iterate-imperfect} by applying the same argument used in Theorem~\ref{Thm:last-iterate-perfect}.
\end{proof}
\begin{remark}
Theorem~\ref{Thm:last-iterate-imperfect} shows that Algorithm~\ref{Alg:MA} can achieve the near-optimal rate of last-iterate convergence in a more challenging environment in which the bandit feedback is only imperfect with zero-mean and bounded noises. Can we generalize Algorithm~\ref{Alg:MA} to handle more general noises, e.g., the one with bounded variance? We leave the answers to future work.
\end{remark}

\section{Additional Experimental Results on Two-Player Zero-Sum Games}
We consider the minimax optimization problem given by 
\begin{equation}\label{prob:L2-minimax}
\min_{x_1 \in \XCal_1} \max_{x_2 \in \XCal_2} \ f(x_1, x_2) = \mu\|x_1\|^2 + x_1^\top A x_2 - \mu\|x_2\|^2, 
\end{equation}
where $A \in \br^{n \times n}$ is a sparse random matrix where each element is nonzero independently with probability $p = 0.5$ and is chosen uniformly from $[0, 1]$ if it is chosen to be nonzero. The constraint sets $\XCal_1$ and $\XCal_2$ are defined as $\XCal_1 =\{x_1 \in \br^n: \one_n^\top x_1 \leq B, x_1 \geq 0\}$ and $\XCal_2 = \{x_2 \in \br^n: \one_n^\top x_2 \leq B, x_2 \geq 0\}$. In our experiment, we set $\mu = 0.01$ and vary $n \in \{10, 20, 50, 100\}$ and $B \in \{0.5, 1\}$. 

Clearly, the problem in Eq.~\eqref{prob:L2-minimax} is a two-player zero-sum game where the decision variables of the first and second players are $x_1$ and $x_2$ and the corresponding model for the reward of the first and second players is given by 
\begin{equation*}
u_1(x_1, x_2) = - f(x_1, x_2), \quad u_2(x_1, x_2) = f(x_1, x_2). 
\end{equation*}
The resulting game $\GCal$ is $(\beta, \{\lambda_1, \lambda_2\})$-strongly monotone for $\beta = 2\mu$ and $\lambda_i = 1$ for all $i \in \{1, 2\}$. 
\begin{figure*}[!t]
\centering
\includegraphics[width=0.45\textwidth]{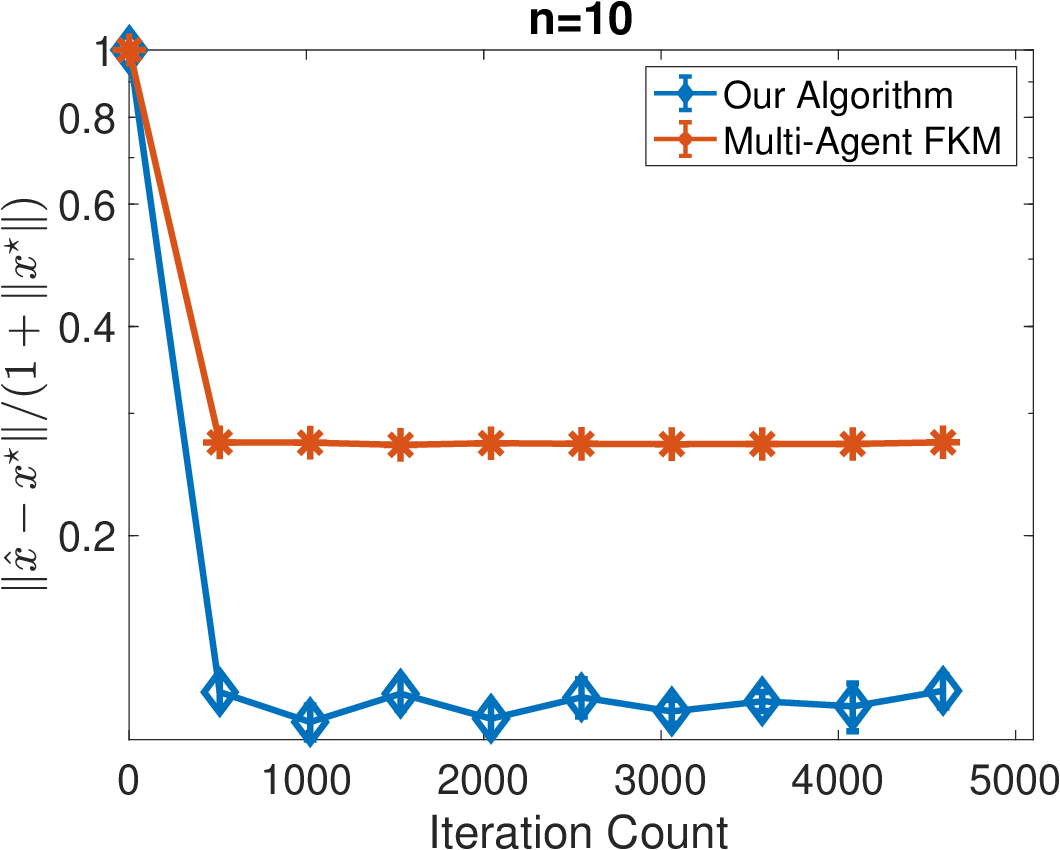}
\includegraphics[width=0.45\textwidth]{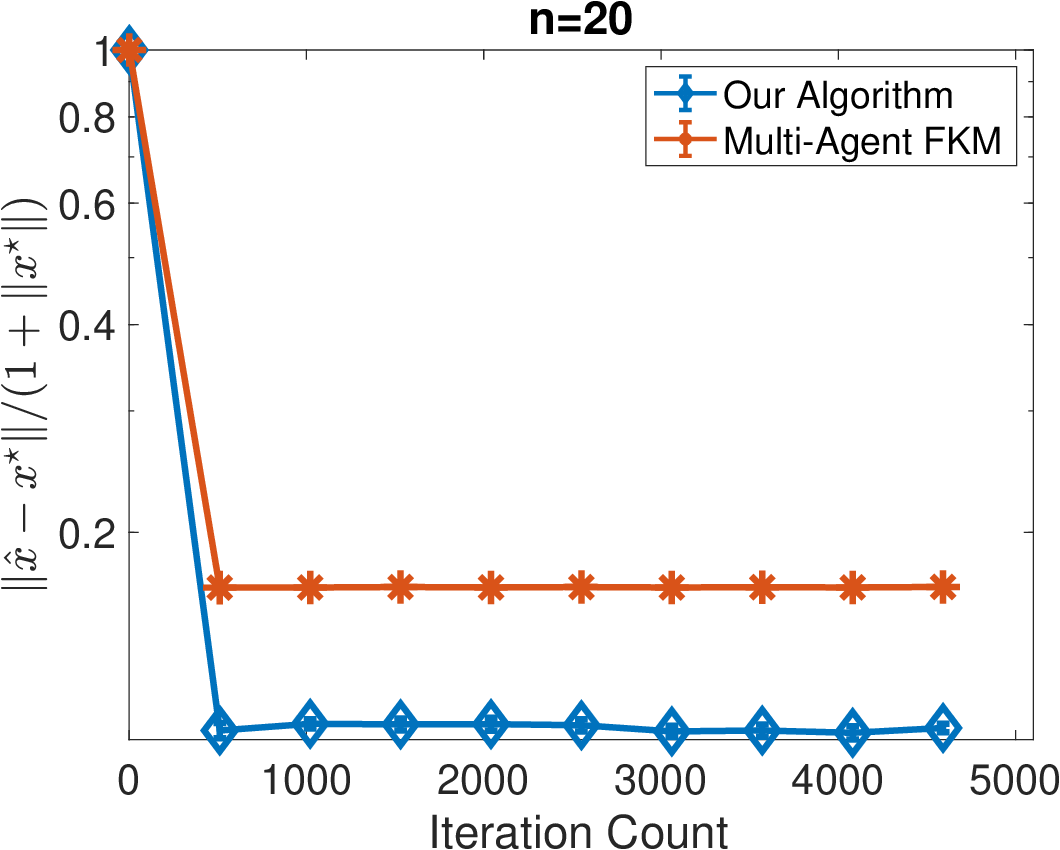}
\includegraphics[width=0.45\textwidth]{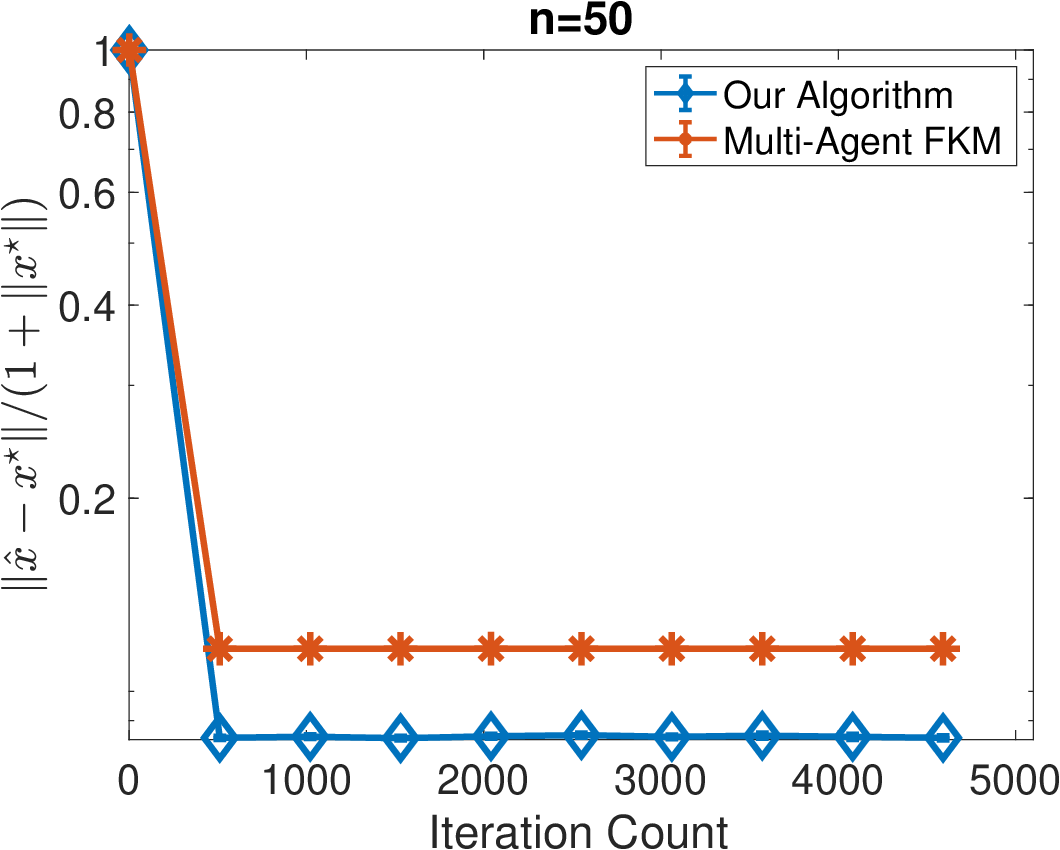}
\includegraphics[width=0.45\textwidth]{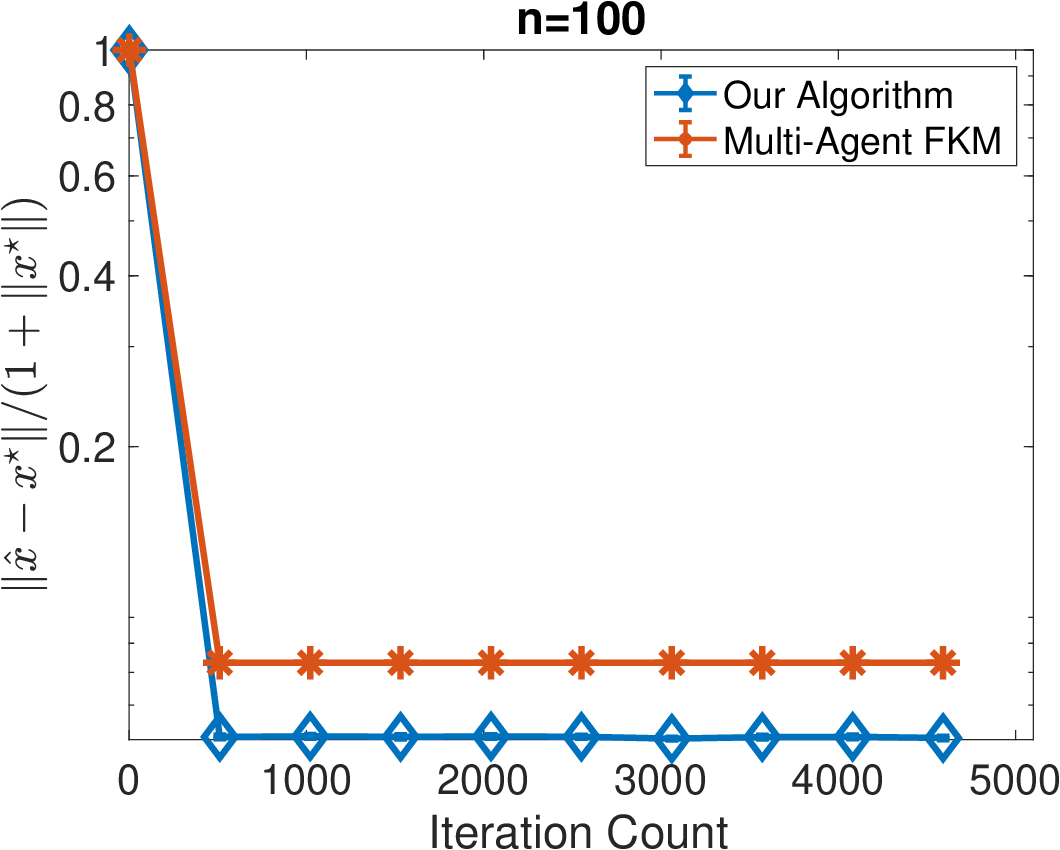}
\caption{Performance of both algorithms for solving two-player zero-sum games. The number of independent trials is 10 and $B=1$ is fixed. The numerical results are presented as error v.s. iteration count.}\label{Fig:MM}
\end{figure*}

For both Algorithm~\ref{Alg:MA} and multi-agent FKM, we consider the theoretically-correct choices of step sizes without fine-tuning. Indeed, we set $\lambda_i = 1$ for all $i \in \NCal$ and $\beta = 2\mu$ for Algorithm~\ref{Alg:MA} due to the structure of a game. Since $\XCal_i =\{x_i \in \br^n: \one_n^\top x_i \leq B, x_i \geq 0\}$, we set $R_i(x_i) = - \sum_{j=1}^n \log(x_{ij}) -\log(B - \sum_{j=1}^n x_{ij})$. According to Theorem~\ref{Thm:last-iterate-perfect}, we set $\eta_t = \frac{1}{4n\sqrt{t}}$. For $i \in \{1, 2\}$, we set $r_i = \frac{B}{n(n+1)}$ and $p_i = \frac{B}{n}\one_n$ in multi-agent FKM. We also set $\delta_t = \min\{\frac{B}{n(n+1)}, \frac{1}{t^{1/3}}\}$ and $\gamma_t = \frac{1}{3\beta t}$ according to~\citet[Theorem~5.2]{Bravo-2018-Bandit}. The evaluation metric is $\frac{\|\hat{x}^t - x^\star\|}{1 + \|x^\star\|}$ where $\hat{x}^t$ is generated by the algorithms and $x^\star$ is a unique Nash equilibrium (we obtain it by employing the optimistic gradient method to find an approximate solution of the optimization problem in Eq.~\eqref{prob:L2-minimax}). This point is a benchmark for evaluating the quality of the solution obtained by the algorithms.
\begin{table}
\centering
\caption{The solution quality on two-player zero-sum games after 5000 iterations.}\label{Tab:MM}
\begin{tabular}{|c||c|c|} \hline
$(n, B)$  	& Multi-Agent FKM 					& Our Algorithm \\ \hhline{===} 
(10, 0.5) 	& 1.3e-01 $\pm$ 3.4e-03 		& 6.6e-02 $\pm$ 4.5e-03 \\ 
(10, 1.0) 	& 2.7e-01 $\pm$ 4.8e-03 			& 1.1e-01 $\pm$ 1.5e-02 \\ 
(20, 0.5) 	& 9.4e-02 $\pm$ 3.5e-04 		& 5.6e-02 $\pm$ 3.0e-03 \\ 
(20, 1.0) 	& 1.7e-01 $\pm$ 8.1e-04 			& 1.0e-01 $\pm$ 4.1e-03 \\ 
(50, 0.5) 	& 5.4e-02 $\pm$ 1.1e-03 			& 4.3e-02 $\pm$ 1.1e-03 \\ 
(50, 1.0) 	& 1.2e-01 $\pm$ 9.7e-05 			& 8.4e-02 $\pm$ 1.6e-03 \\ 
(100, 0.5) 	& 3.6e-02 $\pm$ 9.5e-05 		& 3.2e-02 $\pm$ 8.1e-04 \\ 
(100, 1.0) 	& 8.3e-02 $\pm$ 4.3e-05 		& 6.2e-02 $\pm$ 5.2e-04 \\ \hline
\end{tabular}
\end{table}
\paragraph{Experimental results.} Fixing $B = 1$, we investigate the convergence behavior of both algorithms with a varying dimension $n \in \{10, 20, 50, 100\}$. Figure~\ref{Fig:MM} indicates that our algorithm outperforms the multi-agent FKM as it returns the iterates that are closer to a unique Nash equilibrium in fewer iterations. We also present the numerical results for all $(n, B)$ in Table~\ref{Tab:MM}. 

\section{Additional Experimental Results on Strongly Concave Potential Game}
We reformulate the problem of distributed $\ell_2$-regularized logistic regression as a strongly monotone potential game and compare Algorithm~\ref{Alg:MA} and multi-agent FKM on this task. More specifically, the $\ell_2$-regularized logistic regression problem aims at minimizing the sum of a logistic loss function and a squared $\ell_2$-norm-based regularization term as follows, 
\begin{equation}\label{prob:L2-logistic}
\min_{x \in \XCal} \ f(x) = \tfrac{1}{m}\left(\sum_{j=1}^m \log\left(1 + \exp(-b_j \cdot a_j^\top x)\right)\right) + \mu\|x\|^2, 
\end{equation}
where $(a_i, b_i)_{i=1}^m$ is a set of data samples with $b_i \in \{-1, 1\}$ and $\mu > 0$ is the regularization parameter. The squared $\ell_2$-norm prevents overfitting issue and $\mu > 0$ balances goodness-of-fit and generalization. The constraint set $\XCal$ is defined as $\XCal =\{x \in \br^n: \|x\|_\infty \leq \frac{1}{m}\}$. In real applications, the dimension $n > 0$ can be extremely large and the key challenge is its computation. To resolve this issue, some practitioners suggest to solve the problem in a distributed manner~\citep{Gopal-2013-Distributed}. 

We intend to conduct such distributed computation using multi-agent learning framework and ensure that the unique Nash equilibrium of a game is the same as an optimal solution of Eq.~\eqref{prob:L2-logistic}. Indeed, we assume a finite set of $\NCal = \{1, 2, \ldots, m\}$ of players and let the decision variable of the $i^\textnormal{th}$ player be $x_i \in \br$. Suppose that we have a shared memory such that all players can access a set of data samples $(a_i, b_i)_{i=1}^m$. A simple model for the reward of the $i^\textnormal{th}$ player is given by 
\begin{equation*}
u_i(x) = - f(x).
\end{equation*}
Let $\XCal_i = [-\frac{1}{m}, \frac{1}{m}]$ be the space of possible decisions of the $i^\textnormal{th}$ player, the resulting game $\GCal \equiv \GCal(\NCal, \XCal, u)$ is a potential game with a $2\mu$-strongly concave potential function $-f$ (see Example~\ref{Example:SCPG}). Therefore, the game $\GCal$ is $(\beta, \{\lambda_i\}_{i \in \NCal})$-strongly monotone for $\beta = 2\mu$ and $\lambda_i = 1$ for all $i \in \NCal$. 
\begin{table}[!t]
\caption{Statistics of datasets.}\label{tab:data}
\centering
\begin{tabular}{|c|c|c|} \hline
Dataset 					& Number of Samples ($m$) 	& Dimension ($n$) \\ \hline  
\textsf{a9a} 				& 32561 								& 123 \\
\textsf{mushrooms} 	& 8124 									& 112 \\   
\textsf{news20} 		& 16242 								& 100 \\
\textsf{splice} 			& 1000 									& 60 \\ 
\textsf{svmguide3} 	& 1243 									& 21 \\
\textsf{w8a} 				& 64700 								& 300 \\ \hline
\end{tabular}
\end{table}
\paragraph{Experimental setup.} We use 5 LIBSVM datasets (https://www.csie.ntu.edu.tw/\~{}cjlin/libsvm/) and 20 newsgroup dataset (https://www.cs.nyu.edu/roweis/data.html) for our experiment, and set $\mu = 0.001$ and $\ell = \tfrac{1}{4}(\max_{1 \leq j \leq m} \|a_j\|^2)$ for each dataset. For both Algorithm~\ref{Alg:MA} and multi-agent FKM, we consider the theoretically-correct choices of step sizes without fine-tuning. Indeed, we set $\lambda_i = 1$ for all $i \in \NCal$ and $\beta = 2\mu$ for Algorithm~\ref{Alg:MA} due to the structure of a game. Since $\XCal_i = [-\frac{1}{m}, \frac{1}{m}]$, we set $R_i(x_i) = -\log(\frac{1}{m}+x_i) - \log(\frac{1}{m}-x_i)$. According to Theorem~\ref{Thm:last-iterate-perfect}, we set $\eta_t = \min\{\frac{1}{2m\sqrt{t}}, \frac{1}{2\ell}\}$ where $\frac{1}{2\ell}$ is set due to an upper bound derived for no-regret gradient-based learning algorithms~\citep{Mertikopoulos-2019-Learning}. For all $i \in \NCal$, we set $r_i = \frac{1}{m}$ and $p_i = 0$ in multi-agent FKM since $\XCal_i = [-\frac{1}{m}, \frac{1}{m}]$. We also set $\delta_t = \min\{\frac{1}{m}, \frac{1}{t^{1/3}}\}$ and $\gamma_t = \min\{\frac{1}{3\beta t}, \frac{1}{2\ell}\}$ according to a combination of theoretically-correct choices from~\citet[Theorem~5.2]{Bravo-2018-Bandit} and the aforementioned strategy. The evaluation metric is $\frac{\|\hat{x}^t - x^\star\|}{1 + \|x^\star\|}$ where $\hat{x}^t$ is generated by the algorithms and $x^\star$ is an approximate Nash equilibrium with high accuracy (we obtain it by employing accelerated gradient descent~\citep[Section~2.2]{Nesterov-2018-Lectures} for minimizing a function $f$ in Eq.~\eqref{prob:L2-logistic}). This point will be a benchmark for evaluating the quality of the solution obtained by Algorithm~\ref{Alg:MA} and multi-agent FKM.
\begin{figure*}[!t]
\centering
\includegraphics[width=0.32\textwidth]{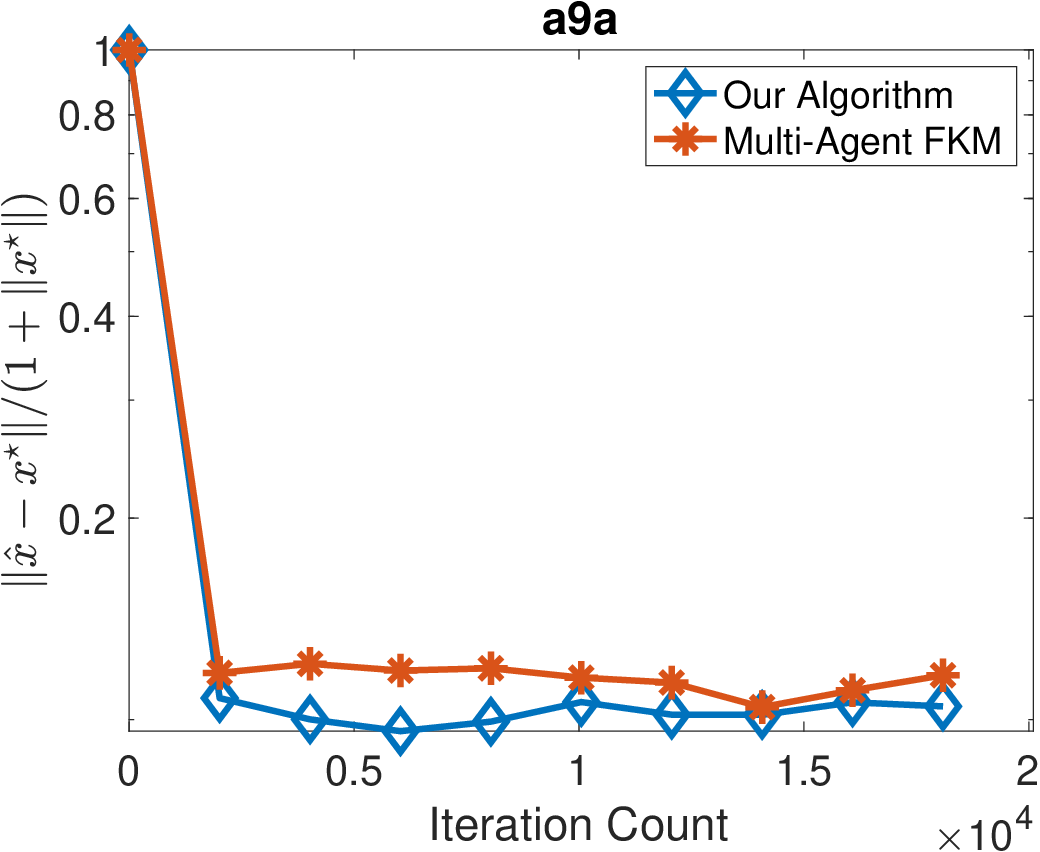}
\includegraphics[width=0.32\textwidth]{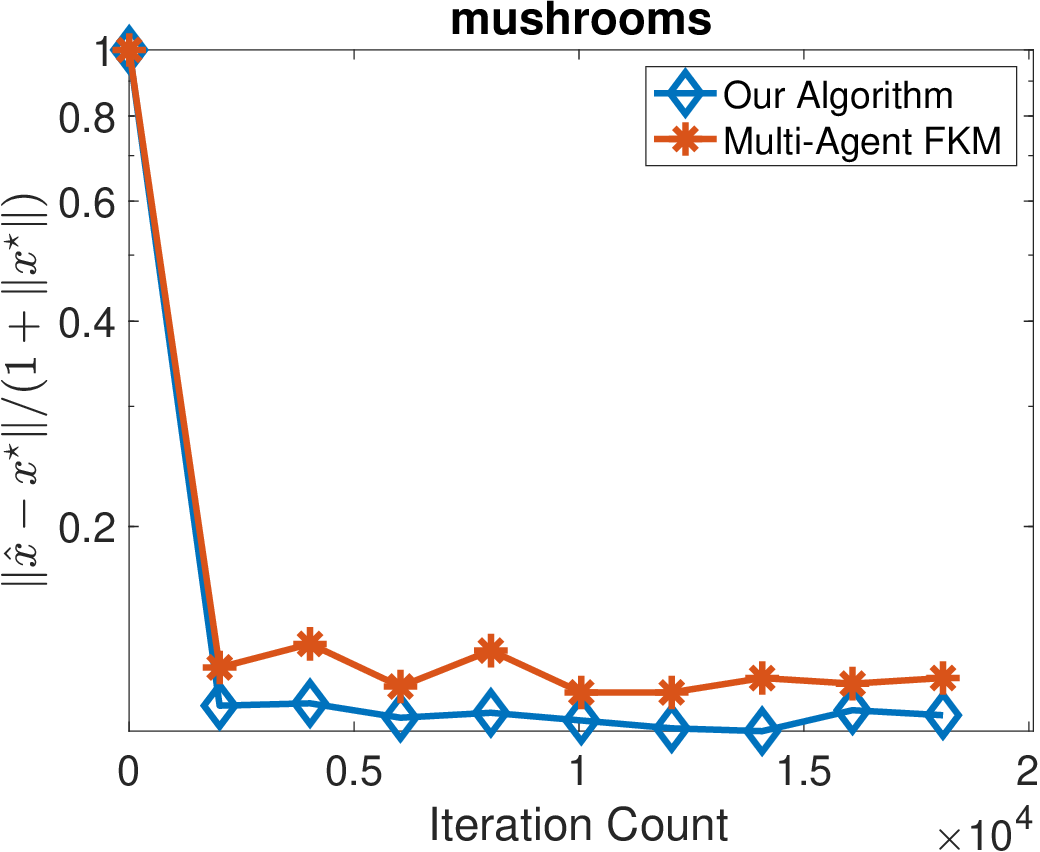}
\includegraphics[width=0.32\textwidth]{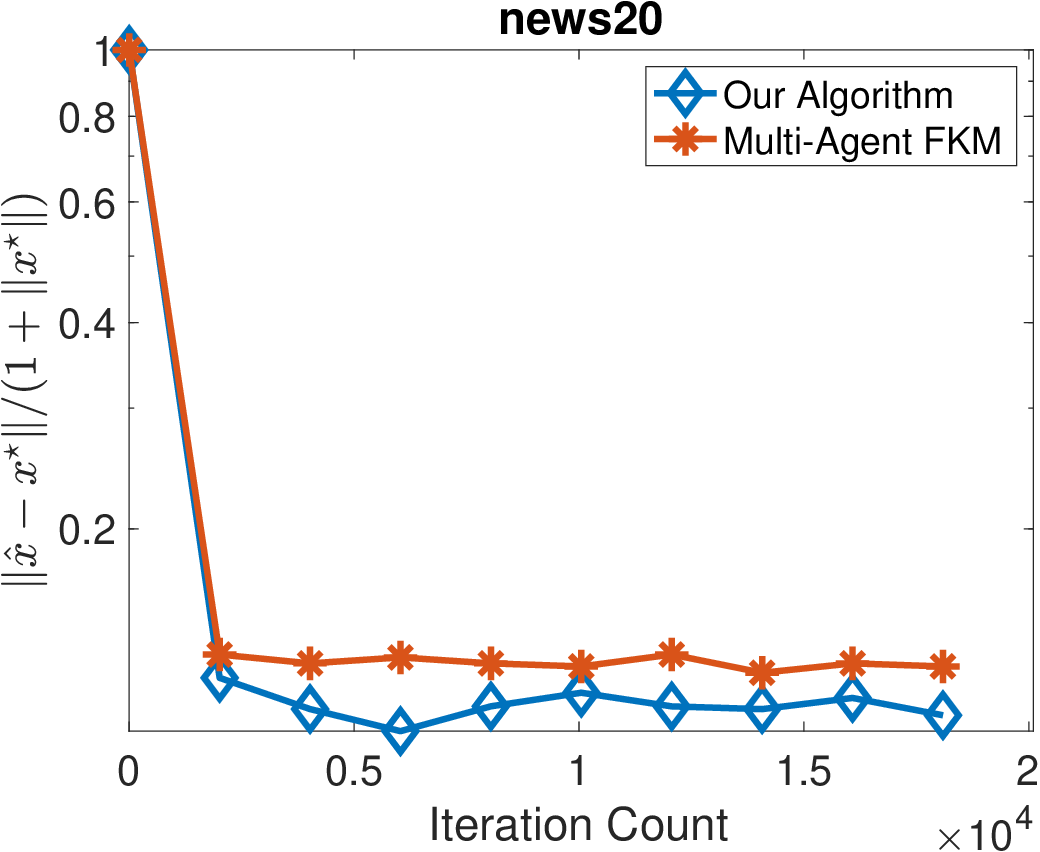} \\ \vspace{5pt}

\includegraphics[width=0.32\textwidth]{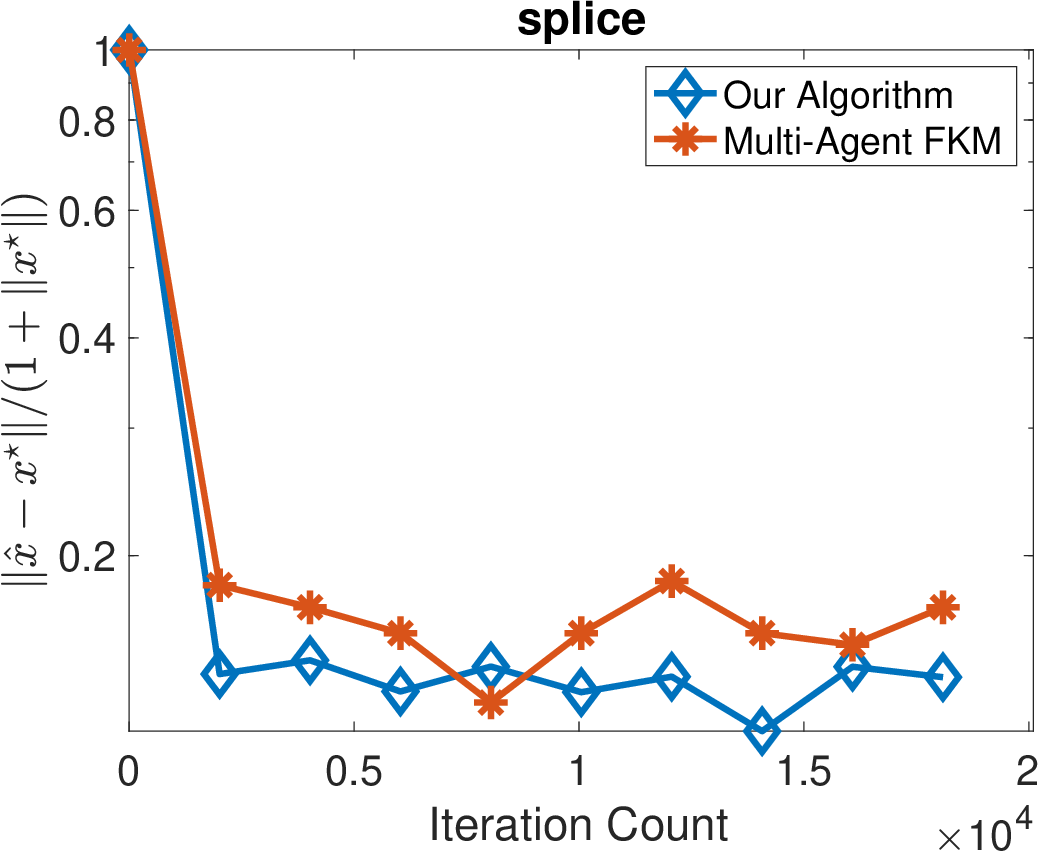}
\includegraphics[width=0.32\textwidth]{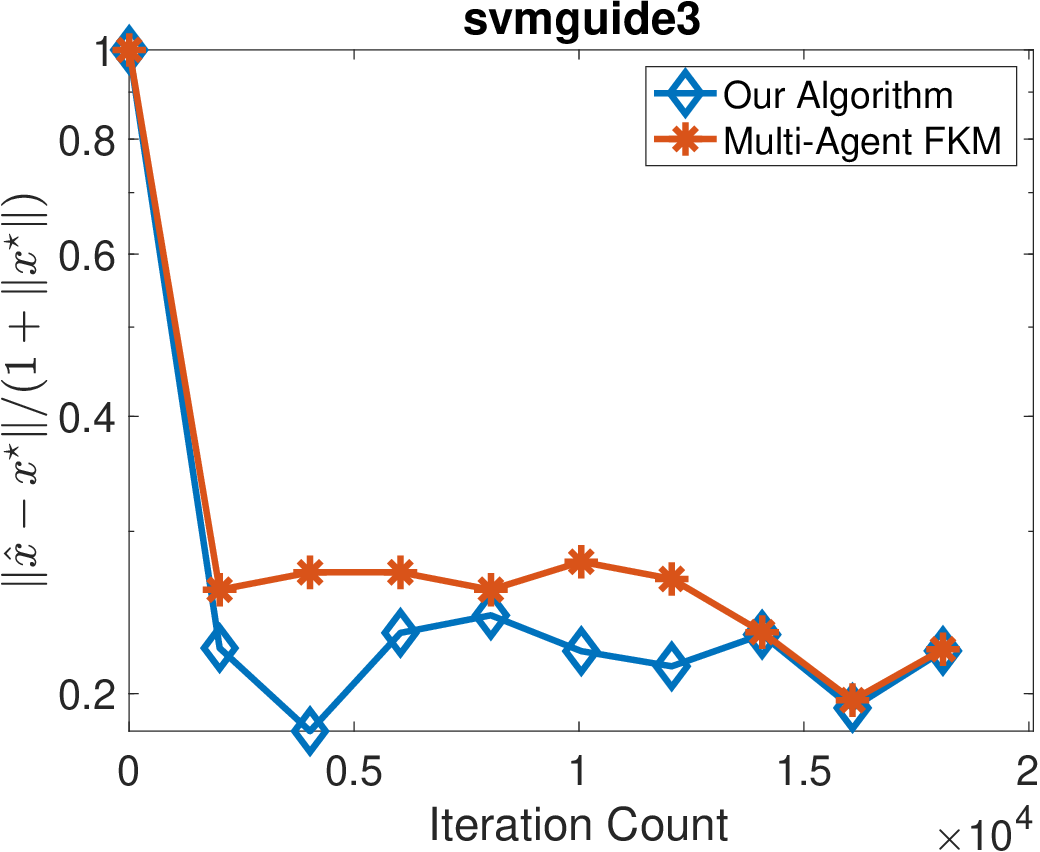}
\includegraphics[width=0.32\textwidth]{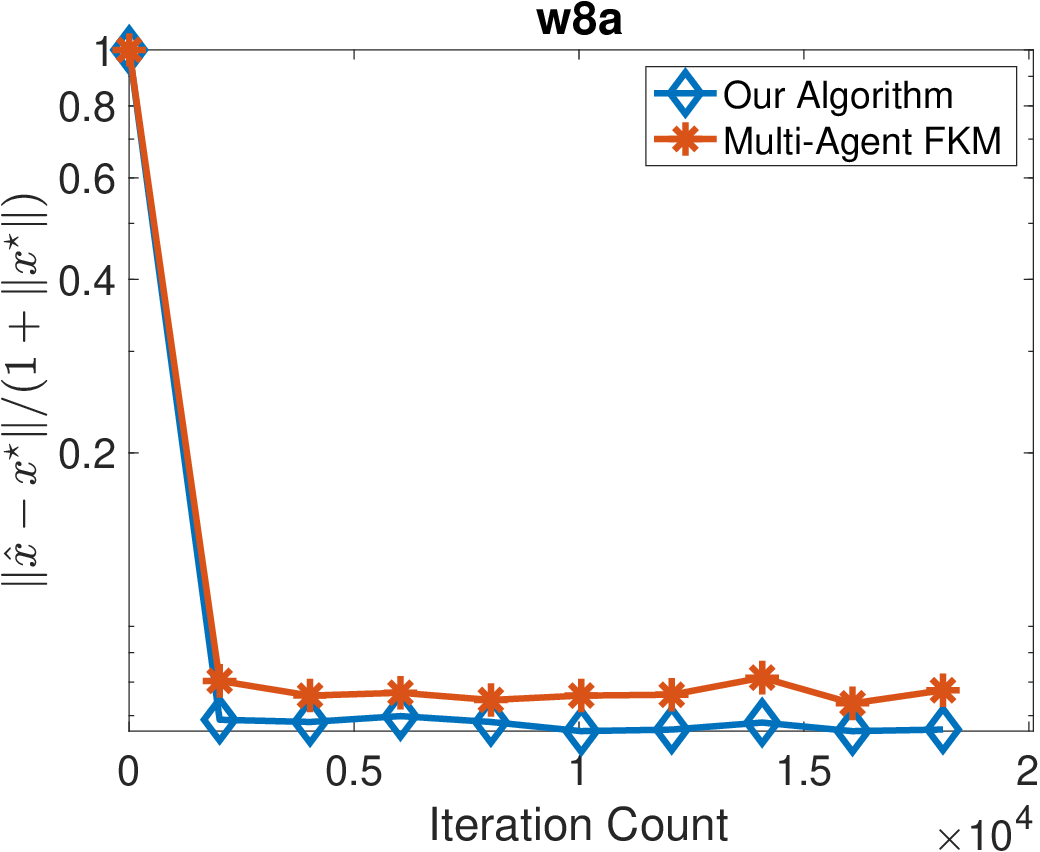}
\caption{Performance of both algorithms for solving distributed logistic regression problems. $\mu = 0.001$ is fixed. The numerical results are presented as error v.s. iteration count.}\label{fig:rlr}
\end{figure*}

\paragraph{Experimental results.} Figure~\ref{fig:rlr} indicates that Algorithm~\ref{Alg:MA} outperforms multi-agent FKM as it exhibits a faster convergence to Nash equilibrium in terms of iteration count. 

\paragraph{Comparisons with other state-of-the-art algorithms for distributed regularized logistics regression.} There is a vast literature on the distributed regularized logistic regression; indeed, the idea is similar and can be summarized in two aspects: (i) the reformulation based on consensus optimization~\citep{Boyd-2011-Distributed}; (ii) the algorithm based on either asynchronous block coordinate gradient descent (BCD)~\citep{Liu-2015-Asynchronous, Mahajan-2017-Distributed} or distributed alternating direction method of multipliers (ADMM)~\citep{Zhang-2014-Asynchronous, Aybat-2017-Distributed}. Focusing on solving the distributed regularized logistics regression problem, these state-of-the-art algorithms outperform Algorithm~\ref{Alg:MA} and multi-agent FKM in practice. However, such comparison seems unfair since these algorithms are all gradient-based algorithms that require the gradient feedback at each iteration while Algorithm~\ref{Alg:MA} and multi-agent FKM only use the (imperfect) bandit feedback at each iteration. 

In our experiment, we intend to restrict ourselves to the comparison among the algorithms that can be applicable to multi-agent bandit learning in games. This is because the goal of this paper is not to propose specific solution schemes for solving the problem of distributed regularized logistics regression. Rather, we aim to investigate optimal no-regret learning in strongly monotone games with bandit feedback, and the distributed regularized logistics regression problem mainly serves as an example that can be formulated as a strongly monotone potential game. As such, we believe that it suffices to compare Algorithm~\ref{Alg:MA} with multi-agent FKM and the numerical results have conveyed the core idea of this paper. Our numerical results are also only the evidence of the existence of problem instances where Algorithm~\ref{Alg:MA} outperforms multi-agent FKM in terms of iteration count, but are not meant to be used to argue that our algorithm is better than other existing algorithms, such as the ones specifically developed for solving distributed regularized logistics regression. 

\end{document}